\theoremstyle{plain}
\newtheorem{theorem}{Theorem}[section]
\newtheorem{proposition}[theorem]{Proposition}
\theoremstyle{definition}
\theoremstyle{remark}
\setlist{leftmargin=*,itemsep=0pt}
\newcommand{\KL}[2]{\text{D}_{\text{KL}}\left(#1\Vert #2\right)}
\newcommand{\Df}[3]{D_{#1}\left(#2\Vert #3\right)}
\newcommand*{\defeq}{\stackrel{\text{def}}{=}}
\title{Light Unbalanced Optimal Transport}
\author{
  Milena Gazdieva\\
  Skolkovo Institute of Science and Technology\\
  Artificial Intelligence Research Institute\\
  Moscow, Russia \\
  \texttt{milena.gazdieva@skoltech.ru} \\
  \And
  Arip Asadulaev\\
  ITMO University\\
  Artificial Intelligence Research Institute\\
  Moscow, Russia \\
  \texttt{asadulaev@airi.net} \\
  \AND
  Evgeny Burnaev \\
  Skolkovo Institute of Science and Technology\\
  Artificial Intelligence Research Institute\\
  Moscow, Russia\\
  \texttt{e.burnaev@skoltech.ru}
  \And
  Alexander Korotin\\
  Skolkovo Institute of Science and Technology\\
  Artificial Intelligence Research Institute\\
  Moscow, Russia \\
  \texttt{a.korotin@skoltech.ru} \\
}
\begin{document}

\maketitle

\begin{abstract}\vspace{-1mm} 
While the continuous Entropic Optimal Transport (EOT) field has been actively developing in recent years, it became evident that the classic EOT problem is prone to different issues like the sensitivity to outliers and imbalance of classes in the source and target measures. This fact inspired the development of solvers that deal with the \textit{unbalanced} EOT (UEOT) problem $-$ the generalization of EOT allowing for mitigating the mentioned issues by relaxing the marginal constraints. Surprisingly, it turns out that the existing solvers are either based on heuristic principles or heavy-weighted with complex optimization objectives involving several neural networks. We address this challenge and propose a novel theoretically-justified, lightweight, unbalanced EOT solver. Our advancement consists of developing a novel view on the optimization of the UEOT problem yielding tractable and a non-minimax optimization objective. We show that combined with a light parametrization recently proposed in the field our objective leads to a fast, simple, and effective solver which allows solving the continuous UEOT problem in minutes on CPU. We prove that our solver provides a universal approximation of UEOT solutions and obtain its generalization bounds. We give illustrative examples of the solver's performance. The code is publicly available at 
\begin{center}\vspace{-1mm}\small\url{https://github.com/milenagazdieva/LightUnbalancedOptimalTransport}
\end{center}
\end{abstract}

\vspace*{-2mm}
\section{Introduction}
\label{sec-introduction}

\vspace{-1.5mm}
The computational \textit{optimal transport} (OT) has proven to be a powerful tool for solving various popular tasks, e.g., image-to-image translation \citep{xie2019scalable,fan2020scalable,mokrov2024energy,gushchin2023entropic}, image generation \citep{wang2021deep,de2021diffusion,chen2021likelihood} and biological data transfer \citep{bunne2023learning,koshizuka2022neural,vargas2021solving}. Historically, the majority of early works in the field were built upon solving the OT problem between discrete probability measures \citep{cuturi2013sinkhorn,peyre2019computational}. Only recently the advances in the field of generative models have led to explosive interest from the ML community in developing the \textbf{continuous} OT solvers, see \citep{korotin2021continuous} for a survey. The setup of this problem assumes that the learner needs to estimate the OT plan between continuous measures given only empirical samples of data from them. Due to convexity-related issues of OT problem \citep{korotin2023kernel}, many works consider the EOT problem, i.e., use \textit{entropy} regularizers which guarantee, e.g., the uniqueness of learned plans.

\vspace{-0.5mm}Meanwhile, researches attract attention to other shortcomings of the classic OT problem. It enforces hard constraints on the marginal measures and, thus, does not allow for mass variations. As a result, OT shows high sensitivity to an imbalance of classes and outliers in the source and target measures \citep{balaji2020robust} which are almost inevitable for large-scale datasets. To overcome these issues, it is common to consider extensions of the OT problem, e.g., unbalanced OT/EOT (UOT/UEOT) \citep{chizat2017unbalanced, liero2018optimal}. The unbalanced OT/EOT formulations allow for variation of total mass by relaxing the marginal constraints through the use of divergences.

The scope of our paper is the continuous UEOT problem. It seems that in this field, a solver that is fast, light, and theoretically justified has not yet been developed. Indeed, many of the existing solvers follow a kind of heuristical principles and are based on the solutions of discrete OT. For example, \citep{lubeck2022neural} uses a regression to interpolate the discrete solutions, and \citep{eyring2023unbalancedness, klein2023generative} build a flow matching upon them. Almost all of the other solvers \citep{choi2024generative, yang2018scalable} employ several neural networks with many hyper-parameters and require time-consuming optimization procedures. We solve the aforementioned shortcomings by introducing a novel lightweight solver that can play the role of a simple baseline for unbalanced EOT.

\textbf{Contributions.} We develop a novel \textit{lightweight} solver to estimate continuous \textbf{unbalanced} EOT couplings between probability measures (\wasyparagraph\ref{sec-method}). Our solver has a non-minimax optimization objective and employs the Gaussian mixture parametrization for the UEOT plans. We provide the generalization bounds for our solver (\wasyparagraph\ref{sec-generalization-bounds}) and experimentally test it on several tasks (\wasyparagraph \ref{sec-gaussian-exp}, \wasyparagraph\ref{sec-alae-exp}).

\textbf{Notations.} We work in the Euclidian space $(\mathbb{R}^d, \|\cdot\|)$. We use $\mathcal{P}_{2,ac}(\mathbb{R}^d)$ 
to denote the set of absolutely continuous (a.c.) Borel probability measures on $\mathbb{R}^d$ with finite second moment and differential entropy. The set of nonnegative measures on $\mathbb{R}^d$ with finite second moment is denoted as $\mathcal{M}_{2,+}(\mathbb{R}^d)$.
We use $\mathcal{C}_{2}(\mathbb{R}^d)$ to denote the space of all \textit{continuous} functions $\zeta:\mathbb{R}^d\rightarrow\mathbb{R}$ for which $\exists a=a(\zeta), b=b(\zeta)$ such that $\forall x\in\mathbb{R}^d:$ $|\zeta(x)|\leq a +b\|x\|^2$. Its subspace of functions which are additionally \textit{bounded from above} is denoted as $\mathcal{C}_{2,b}(\mathbb{R}^d)$.
For a.c. measure $p\in \mathcal{P}_{2,ac}(\mathbb{R}^d)$ (or $\mathcal{M}_{2,+}(\mathbb{R}^d)$), we use $p(x)$ to denote its density at a point $x\in\mathbb{R}^d$.
For a given a.c. measure $\gamma\in \mathcal{M}_{2,+}(\mathbb{R}^d \times \mathbb{R}^d)$, we denote its total mass by $\|\gamma\|_1\!\defeq\!\int_{\mathbb{R}^d\times \mathbb{R}^d}\gamma(x, y) dx dy$. We use $\gamma_x(x)$, $\gamma_y(y)$ to denote the marginals of $\gamma(x,y)$. They satisfy the equality $\|\gamma_x\|_1\!=\!\|\gamma_y\|_1\!=\!\|\gamma\|_1$. We write $\gamma(\cdot|x)$ to denote the conditional \textit{probability} measure. Each such measure has a unit total mass.
We use $\overline{f}$ to denote the Fenchel conjugate of a function $f$: $\overline{f}(t)\defeq\sup_{u\in \mathbb{R}} \{ut-f(u)\}$. We use $\mathbb{I}_{A}$ to denote the convex indicator of a set $A$, i.e., $\mathbb{I}_{A}(x)=0$ if $x\in A$; $\mathbb{I}_{A}(x)=+\infty$ if $x\notin A$.

\section{Background}
\label{sec-background}
Here we give an overview of the relevant entropic optimal transport (EOT) concepts. For additional details on balanced EOT, we refer to \citep{cuturi2013sinkhorn, genevay2019entropy, peyre2019computational}, unbalanced EOT - to \citep{chizat2017unbalanced, liero2018optimal}.

\vspace{-1mm}
\textbf{$f$-divergences for positive measures}.
For \textit{positive} measures $\mu_1,\mu_{2}\in \mathcal{M}_{2,+}\!(\mathbb{R}^{d'})$ and a lower semi-continuous function $f:\mathbb{R}\rightarrow \mathbb{R}\cup \{\infty\}$, the \textit{$f$-divergence} between $\mu_{1},\mu_{2}$ is defined by
\vspace{-1mm}$$\Df{f}{\mu_1}{\mu_2}\defeq \int_{\mathbb{R}^{d'}} f\bigg(\frac{\mu_{1}(x)}{\mu_{2}(x)}\bigg)\mu_{2}(x)dx\text{ if }\mu_{1}\ll \mu_2\text{ and }+\infty\text{ otherwise}.\vspace{-1mm}$$
We consider $f(t)$ which are convex, non-negative and attain zero uniquely when $t=1$. In this case, $D_{f}$ is a valid measure of dissimilarity between two positive measures (see Appendix \ref{app-ablation} for \underline{details}). This means that $D_{f}(\mu_1\|\mu_2)\!\geq\! 0$ and $D_{f}(\mu_1\|\mu_2)\!=\! 0$ if and only if $\mu_{1}=\mu_{2}$. In our paper, we also assume that all $f$ that we consider satisfy the property that $\overline{f}$ is a \textit{non-decreasing} function.

Kullback-Leibler divergence $\text{D}_{\text{KL}}$ \citep{chizat2017unbalanced, sejourne2022unbalanced}, is a particular case of such $f$-divergence for positive measures.
It has a generator function $f_{\text{KL}}(t)\!\defeq\! t\log t\! -\! t\! +\!1$. Its convex conjugate $\overline{f_{\text{KL}}}(t)\!=\!\exp(t)-1$. Another example is the $\chi^{2}$-divergence $\text{D}_{\chi^2}$ which is generated by $f_{\chi^2}(t)\defeq(t-1)^2$ when $t\geq0$ and $\infty$ otherwise. The convex conjugate of this function is $\overline{f_{\chi^2}}(t)=-1$ if $t\!<\!-2$ and $\frac{1}{4} t^2 \!+\! t$ when $t\!\geq\! 2$.

\textit{Remark.} By the definition, convex conjugates of $f_{\text{KL}}$ and $f_{\chi^2}$ divergences are proper, non-negative and non-decreasing. These properties are used in some of our theoretical results. 

\vspace{-1mm}
\textbf{Entropy for positive measures.} For $\mu\in \mathcal{M}_{2,+}(\mathbb{R}^{d'})$, its entropy \citep{chizat2017unbalanced} is given by
\vspace{-1mm}\begin{equation}
    H(\mu)\!\defeq\! -\int_{\mathbb{R}^{d'}}\mu(x)\! \log \mu(x) dx+\|\mu\|_1\text{, if }\mu\text{ is a.c. and }-\!\infty\text{ otherwise}.
    \label{entropy}\vspace{-1mm}
\end{equation}
When $\mu\in\mathcal{P}_{2,ac}(\mathbb{R}^{d'})$, i.e., $\|\mu\|_1\!=\!1$, equation \eqref{entropy} is the usual differential entropy minus 1.

\textbf{Classic EOT formulation (with the quadratic cost).}
Consider two probability measures $p\!\in\! \mathcal{P}_{2,ac}(\mathbb{R}^d)$, $q\!\in\!\mathcal{P}_{2,ac}(\mathbb{R}^d)$. For $\varepsilon>0$, the EOT problem between $p$ and $q$ is

\vspace{-5mm}\begin{eqnarray}
\min_{\pi\in\Pi(p, q)} \int_{\mathbb{R}^d} \int_{\mathbb{R}^d} \frac{\|x-y\|^2}{2} \pi(x, y) dx dy - \varepsilon H(\pi),
\label{eot-classic}
\end{eqnarray}
where $\Pi(p,q)$ is the set of probability measures $\pi\!\in\!\mathcal{P}_{2,ac}(\mathbb{R}^d\!\times\! \mathbb{R}^d)$ with marginals $p$, $q$ (transport plans). 
Plan $\pi^*$ attaining the minimum exists, it is unique and called the \textit{EOT plan.}

Classic EOT imposes hard constraints on the marginals which leads to several issues, e.g., sensitivity to outliers \citep{balaji2020robust}, inability to handle potential measure shifts such as class imbalances in the measures $p, q$. The UEOT problem \citep{yang2018scalable, choi2024generative} overcomes these issues by relaxing the marginal constraints \citep{sejourne2022unbalanced}.

\textbf{Unbalanced EOT formulation (with the quadratic cost).} 
Let $D_{f_1}$ and $D_{f_2}$ be two $f$-divergences over $\mathbb{R}^d$.
For two probability measures $p\in \mathcal{P}_{2,ac}(\mathbb{R}^d)$, $q\in\mathcal{P}_{2,ac}(\mathbb{R}^d)$ and $\varepsilon>0$, the unbalanced EOT problem between $p$ and $q$ consists of finding a minimizer of
\begin{eqnarray}    \inf_{\gamma\in\mathcal{M}_{2,+}(\mathbb{R}^d\times\mathbb{R}^d)} \int_{\mathbb{R}^d}\int_{\mathbb{R}^d} \frac{\|x-y\|^2}{2} \gamma(x, y) dx dy -
     \varepsilon H(\gamma)\!+\Df{f_1}{\gamma_x}{p}   + \Df{f_2}{\gamma_y}{q},
     \label{unbalanced-eot-primal}
\end{eqnarray}\vspace*{-6.5mm}
\begin{wrapfigure}{r}{0.43\textwidth}
  \vspace{-2mm}\begin{center}
    \includegraphics[width=0.97\linewidth]{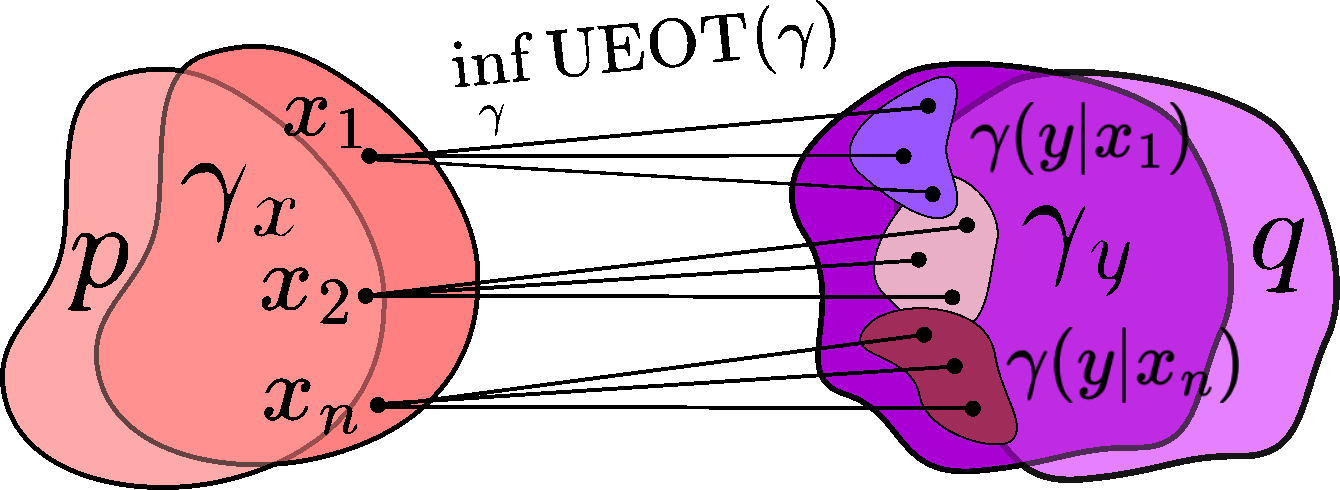}
  \end{center}
  \vspace{-2.0mm}
  \caption{\centering Unbalanced EOT problem.}
\label{fig:ueot-def}
\vspace{-5mm}
\end{wrapfigure}

see Fig. \ref{fig:ueot-def}. Here the minimum is attained for a unique $\gamma^*$ which is called the \textit{unbalanced optimal entropic} (UEOT) plan. Typical choices of $f_i$ ($i\in[1,2]$) are $f_i(t)=\tau_i f_{\text{KL}}(t)$ or $f_i(t)=\tau_i f_{\chi^2}(t)$ ($\tau_i>0$) yielding the scaled $\text{D}_{\text{KL}}$ and $\text{D}_{\chi^{2}}$, respectively. In this case, the bigger $\tau_1$ ($\tau_2$) is, the more $\gamma_x$ ($\gamma_y$) is penalized for not matching the corresponding marginal distribution $p$ ($q$).

\vspace{-0.5mm}\textit{Remark 1.} While the set $\mathcal{M}_{2,+}(\mathbb{R}^d\times\mathbb{R}^d)$ contains not only a.c. measures, infimum in problem \eqref{unbalanced-eot-primal} is attained for a.c. measure $\gamma^*$ since $-\varepsilon H(\gamma^*)$ term turns to $+\infty$ otherwise. Thus, almost everywhere in the paper we are actually interested in the subset of a.c. measures in $\mathcal{M}_{2,+}(\mathbb{R}^d\times\mathbb{R}^d)$.

\vspace{-0.5mm}\textit{Remark 2.} The balanced EOT problem \eqref{eot-classic} is a special case of \eqref{unbalanced-eot-primal}. Indeed, let $f_1$ and $f_2$ be the convex indicators of $\{1\}$, i.e., $f_1=f_2=\mathbb{I}_{x=1}$. Then the $f$-divergences $\Df{f_1}{\gamma_x}{p}$ and $\Df{f_2}{\gamma_y}{q}$ become infinity if $p\neq\gamma_x$ or $q\neq\gamma_y$, and become zeros otherwise.

\textbf{Dual form of unbalanced EOT problem \eqref{unbalanced-eot-primal}} is formulated as follows
\begin{eqnarray}
    \!\!\!\sup_{(\phi, \psi)} \!\Bigg\lbrace\!\!\! -\!\!\varepsilon \!\!\int_{\mathbb{R}^d}\!\! \int_{\mathbb{R}^d}\! \!\!\exp\{\frac{1}{\varepsilon} \!( \phi(x) \!+\! \psi(y)\!-\! \frac{\|x-y\|^2}{2}) \} dx dy \!- \!\!
    \int_{\mathbb{R}^d} \!\!\overline{f}_1 (-\phi(x))p(x) dx \! - \!\!\!\int_{\mathbb{R}^d}\!\! \overline{f}_2 (-\psi(y))q(y) dy
    \Bigg\rbrace.\!
    \label{unbalanced-eot-dual}
\end{eqnarray}
It is known that there exist two measurable functions $\phi^*$, $\psi^*$ delivering maximum to \eqref{unbalanced-eot-dual}. They have the following connection with the solution of the primal unbalanced problem \eqref{unbalanced-eot-primal}:
\begin{eqnarray}
\label{gamma-potentials}
    \gamma^*(x, y) \!=\! \exp\{\frac{\phi^*(x)}{\varepsilon}\} \!\exp\{-\frac{\|x-y\|^2}{2\varepsilon}\} \!\exp\{\frac{\psi^*(y)}{\varepsilon}\}.
\end{eqnarray}
\vspace{-1mm}\textit{Remark.} For some of our results, we will need to restrict potentials $(\phi, \psi)$ in problem \eqref{unbalanced-eot-dual} to the space $\mathcal{C}_{2,b}(\mathbb{R}^d)\times \mathcal{C}_{2,b}(\mathbb{R}^d)$. Since established variants of dual forms \citep{chizat2017unbalanced} typically correspond to other functional spaces, we derive and theoretically justify a variant of the \underline{\textit{dual problem}} \eqref{unbalanced-eot-dual} in Appendix \ref{appendix-dual-form}. Note that it may hold that optimal potentials $\psi^{*},\phi^{*}\notin \mathcal{C}_{2,b}(\mathbb{R}^{D})$, i.e., the supremum is not achieved within our considered spaces. This is not principle for our subsequent derivations. 

\vspace{-0.5mm}\textbf{Computational UEOT setup.} Analytical solution for the \textit{unbalanced} EOT problem is, in general, not known.\footnote{Analytical solutions are known only for some specific cases. For example, \citep{janati2020entropic} consider Gaussian measures and UEOT problem with $\text{D}_{\text{KL}}$ divergence instead of the differential entropy. This setup differs from ours.}
Moreover, in real-world setups where unbalanced EOT is applicable, the measures $p$, $q$ are typically not available explicitly but only through their empirical samples (datasets). 

\vspace{-0.5mm}We assume that data measures $p, q\in \mathcal{P}_{2,ac}(\mathbb{R}^d)$ are unknown and accessible only by a limited number of i.i.d.\ empirical samples $\{x_0, ..., x_N\}\!\sim\! p$, $\{y_0, ..., y_M\}\!\sim\! q$. We aim to approximate the optimal UEOT plan $\gamma^*$ solving \eqref{unbalanced-eot-primal} between the entire measures $p, q$. The recovered plans should allow the out-of-sample estimation, i.e., generation of samples from $\gamma^*(\cdot|x^{\text{new}})$ where $x^{\text{new}}$ is a new test point (not necessarily present in the train data). Optionally, one may require the ability to sample from $\gamma^*_x$.

\vspace{-0.5mm}The described setup is typically called the \textit{continuous EOT} and should not be twisted up with the \textit{discrete EOT} \citep{peyre2019computational, cuturi2013sinkhorn}. There the aim is to recover the (unbalanced) EOT plan between the empirical measures $\hat{p}\!=\!\!\frac{1}{N}\!\sum_{i=1}^{N} \!\delta_{x_i}$, $\hat{q}\!=\!\!\frac{1}{M}\!\sum_{j=1}^{M} \!\delta_{y_j}$ and out-of-sample estimations are typically not needed.

\vspace{-1mm}\section{Related Work}\vspace{-2mm}
\label{sec-related-work}

\begin{table*}[!t]
\vspace{-2mm}
\tiny
\hspace{-10mm}\begin{tabular}{ c|c|c|c|c }
\hline
\shortstack{\textbf{Solver}} & \shortstack{\textbf{Problem}} & \shortstack{\textbf{Principles}} & \shortstack{\textbf{What recovers?}} & \shortstack{\textbf{Limitations}}\\ 
\hline
\citep{yang2018scalable} & UOT &  \makecell{Solves $c$-transform based semi-dual\\ max-min reformulation of UOT using neural nets} & \makecell{Scaling factor $\nicefrac{\gamma^*(x)}{p(x)}$ and \\ stochastic OT map $T^*(x,z)$} & \makecell{Complex max-min objective;\\3 neural networks}\\
\hline
\citep{lubeck2022neural} & \makecell{Custom \\UOT} & \makecell{Regression on top of discrete EOT\\ between re-balanced measures\\ combined with ICNN-based solver \citep{makkuva2020optimal}} & \makecell{Scaling factors and \\ OT maps between re-scaled measures} & \makecell{Heuristically uses \\ minibatch OT approximations}\\
\hline
\citep{choi2024generative} & UOT &  \makecell{Solves semi-dual max-min \\reformulation of UOT using neural nets} & Stochastic UOT map $T^*(x,z)$ & \makecell{Complex max-min objective;\\2 neural networks}\\
\hline
\citep{eyring2023unbalancedness} & UEOT & \makecell{Flow Matching on top of discrete UEOT\\using neural nets} & \makecell{Parametrized vector field \\$(v_{t,\theta})_{t\in[0,1]}$ to transport the mass} & \makecell{Heuristically uses \\ minibatch OT approximations}\\
\hline
\citep{klein2023generative} & UEOT & \makecell{Conditional Flow Matching on top of discrete EOT\\between re-balanced measures using neural nets} & \makecell{Scaling factors and parametrized conditional\\ vector field $(v_{t,\theta})_{t\in[0,1]}$ to transport \\the mass between re-scaled measures} & \makecell{Heuristically uses \\ minibatch OT approximations}\\
\hline
 \makecell{U-LightOT \\(\textbf{ours})} &  UEOT & \makecell{Solves non-minimax reformulation \\of dual UEOT using Gaussian Mixtures} & \makecell{Density of UEOT plan $\gamma^*$ together with light\\ procedure to sample $x\sim\gamma^*_x(\cdot)$ and $y\sim\gamma^*_y(\cdot|x)$}& \makecell{Restricted to Gaussian\\ Mixture parametrization}
 \\
\hline
\end{tabular}
\vspace{-1mm}
\captionsetup{justification=centering}
 \caption{Comparison of the principles of existing UOT/UEOT solvers and \textbf{our} proposed light solver.}
 \label{table-comparison}
\vspace{-5mm}
\end{table*}

Nowadays, the sphere of continuous OT/EOT solvers is actively developing. Some of the early works related to this topic utilize OT cost as the loss function \citep{gulrajani2017improved, genevay2018learning, arjovsky2017wasserstein}. These approaches are not relevant to us as they do not learn OT/EOT maps (or plans).
We refer to \citep{korotin2022kantorovich} for a detailed review.

\vspace{-1mm}At the same time, there exist a large amount of works within the discrete OT/EOT setup \citep{cuturi2013sinkhorn,dvurechenskii2018decentralize,xie2022accelerated, nguyen2022unbalanced}, see \citep{peyre2019computational} for a survey. We again emphasize that solvers of this type are not relevant to us as they construct discrete matching between the given (train) samples and typically do not provide a generalization to the new unseen (test) data. Only recently ML community started developing out-of-sample estimation procedures based on discrete/batched OT. For example, \citep{fatras2020learning,pooladian2021entropic,hutter2021minimax,deb2021rates,manole2021plugin,rigollet2022sample} mostly develop such estimators using the barycentric projections of the discrete EOT plans. Though these procedures have nice theoretical properties, their scalability remains unclear.

\textbf{Balanced OT/EOT solvers.} 
There exists a vast amount of neural solvers for continuous OT problem.
Most of them learn the OT maps (or plans) via solving saddle point optimization problems \citep{asadulaev2024neural, fan2023neural, korotin2021neural, gazdieva2022unpaired, rout2022generative, mokrov2024energy}. 
Though the recent works \citep{gushchin2023entropic, seguy2018large, daniels2021score,korotin2024light,gushchin2024light} tackle the EOT problem \eqref{eot-classic}, they consider its balanced version. Hence they are not relevant to us. 
Among these works, only \citep{korotin2024light,gushchin2024light} evade non-trivial training/inference procedures and are ideologically the closest to ours. The difference between them consists of the particular loss function used.
In fact, \textbf{our paper} proposes the solver which subsumes these solvers and generalizes them for the unbalanced case. The derivation of our solver is non-trivial and requires solid mathematical apparatus, see \wasyparagraph\ref{sec-method}.

\textbf{Unbalanced OT/EOT solvers.} A vast majority of early works in this field tackle the discrete UOT/UEOT setup \citep{chapel2021unbalanced, fatras2021unbalanced,pham2020unbalanced} but the principles behind their construction are not easy to generalize to the continuous setting. Thus, many of the recent papers that tackle the continuous unbalanced OT/EOT setup employ discrete solutions in the construction of their solvers. For example, \citep{lubeck2022neural} regress neural network on top of scaling factors obtained using the discrete UEOT while simultaneously learning the continuous OT maps using an ICNN method \citep{makkuva2020optimal}. In \citep{eyring2023unbalancedness} and \citep{klein2023generative}, the authors implement Flow Matching \citep[FM]{lipman2022flow} and conditional FM on top of the discrete UEOT plans, respectively. 
The algorithm of the latter consists of regressing neural networks on top of scaling factors and simultaneously learning a conditional vector field to transport the mass between re-balanced measures.
Despite the promising practical performance of these solvers, it is still unclear to what extent such approximations of UEOT plans are theoretically justified.

The recent papers \citep{yang2018scalable, choi2024generative} are more related to our study as they do not rely on discrete OT approximations of the transport plan. However, they have non-trivial minimax optimization objectives solved using \textit{complex} GAN-style procedures. Thus, these GANs often lean on heavy neural parametrization, may incur instabilities during training, and require careful hyperparameter selection \citep{lucic2018gans}. 

For completeness, we also mention other papers which are only slightly relevant to us. For example, \citep{gazdieva2023extremal} considers incomplete OT which relaxes only one of the OT marginal constraints and is less general than the unbalanced OT. Other works \citep{dao2023robust, balaji2020robust} incorporate unbalanced OT into the training objective of GANs aimed at generating samples from noise. 

In contrast to the listed works, our paper proposes a \textit{theoretically justified and lightweight} solver to the UEOT problem, see Table \ref{table-comparison} for the detailed comparison of solvers.

\vspace{-3.5mm}
\section{Light Unbalanced OT Solver}
\vspace{-2.5mm}
\label{sec-method}
In this section, we derive the optimization objective (\wasyparagraph \ref{sec-optimization}) of our U-LightOT solver, present practical aspects of training and inference procedures (\wasyparagraph \ref{sec-algorithm}) and derive the generalization bounds for our solver (\wasyparagraph\ref{sec-generalization-bounds}). We provide the \textit{\underline{proofs of all theoretical results}} in Appendix \ref{sec-proofs}.

\subsection{Derivation of the Optimization Objective}
\label{sec-optimization}
Following the learning setup described above, we aim to get a parametric approximation $\gamma_{\theta, w}$ of the UEOT plan $\gamma^*$. Here $\theta,\omega$ are the model parameters to learn, and it will be clear later why we split them into two groups. To recover $\gamma_{\theta,\omega}\approx \gamma^{*}$, our aim is to learn $\theta,\omega$ by directly minimizing the $\text{D}_{\text{KL}}$ divergence between $\gamma_{\theta,w}$ and $\gamma^*$: 
\vspace{-2mm}\begin{equation}
    \KL{\gamma^*}{\gamma_{\theta,w}}\rightarrow \min_{(\theta,w)}.
    \label{main-objective}
\end{equation}\vspace{-4mm}

The main difficulty of this optimizing objective \eqref{main-objective} is obvious: the UEOT plan $\gamma^*$ is \textit{unknown}. Fortunately, below we show that one still can optimize \eqref{main-objective} without knowing $\gamma^{*}$. 

Recall that the optimal UEOT plan $\gamma^*$ has the form \eqref{gamma-potentials}. We first make some changes of the variables. Specifically, we define 
$v^*(y)\defeq\exp\{\frac{2\psi^*(y)-\|y\|^2}{2\varepsilon}\}$.
Formula \eqref{gamma-potentials} now reads as
\begin{eqnarray}
    \gamma^*(x, y) = \exp\{\frac{2\phi^*(x)-\|x\|^2}{2\varepsilon}\} \!
    \exp\{\frac{\langle x, y\rangle}{\varepsilon}\} v^*(y)\! \Longrightarrow \label{gamma-mid-potentials} 
    \!\gamma^*(y|x) \propto \exp\{\frac{\langle x, y\rangle}{\varepsilon}\} v^*(y).
    \label{v-unnormalized}
\end{eqnarray}
Since the conditional plan has the unit mass, we may write
\vspace{-1mm}\begin{eqnarray}
    \gamma^*(y|x)=\exp\{\frac{\langle x, y\rangle}{\varepsilon}\} \frac{v^*(y)}{c_{v^*}(x)}
    \label{conditional-distrib}
\end{eqnarray}\vspace{-5mm}

where $c_{v^*}(x)\!\!\defeq\!\!\int_{\mathbb{R}^d} \exp\{\frac{\langle x, y\rangle}{\varepsilon}\} v^*(y) dy$ is the normalization costant ensuring that $\int_{\mathbb{R}^d} \gamma^* (y|x) dy = 1$. 

Consider the decomposition $\gamma^*(x, y) = \gamma_x^*(x) \gamma^*(y|x)$. It shows that to obtain parametrization for the entire plan $\gamma^*(x,y)$, it is sufficient to consider parametrizations for its left marginal $\gamma^*_x$ and the conditional measure $\gamma^*(y|x)$. 
Meanwhile, equation \eqref{conditional-distrib} shows that \textit{conditional measures $\gamma^*(\cdot| x)$ are entirely described by the variable $v^*$}. We use these observations to parametrize $\gamma_{\theta,w}$. We set
\vspace{-2mm}\begin{eqnarray}
    \gamma_{\theta, w}(x,y)\defeq u_{\omega}(x)\gamma_{\theta}(y|x)\!=\!
    u_{\omega}(x) \!\frac{\exp\{\nicefrac{\langle x, y\rangle}{\varepsilon}\}\! v_{\theta}(y)}{c_{\theta}(x)},\!
    \label{parametrization}\vspace{-2mm}
\end{eqnarray}
where $u_{\omega}$ and $v_{\theta}$ parametrize marginal measure $\gamma_x^*$ and the variable $v^*$, respectively. In turn, the constant $c_{\theta}(x)\defeq\int_{\mathbb{R}^d} \exp\{\frac{\langle x, y\rangle}{\varepsilon}\} v_{\theta}(y) dy$ is the parametrization of $c_{v^*}(x)$. Next, we demonstrate our \underline{\textbf{main result}} which shows that the optimization of \eqref{main-objective} can be done \textit{without} the access to $\gamma^*$.

\begin{theorem}[Tractable form of $\text{D}_{\text{KL}}$ minimization] 
    Assume that $\gamma^*$ is parametrized using \eqref{parametrization}. Then the following bound holds: $\varepsilon\KL{\gamma^*}{\gamma_{\theta,w}}\!\leq\! \mathcal{L}(\theta,w)-\mathcal{L}^*,$  where
    \vspace{-1mm}\begin{eqnarray}
        \mathcal{L}(\theta,w)\!\defeq\!\! 
    \int_{\mathbb{R}^d}\!\! \overline{f}_1\! (- \varepsilon \log \frac{u_{\omega}(x)}{c_{\theta}(x)}\!-\!\frac{\|x\|^2}{2})p(x) dx \!+\!\! 
    \int_{\mathbb{R}^d}\! \overline{f}_2 (-\varepsilon \log v_{\theta}(y)\! -\!\frac{\|y\|^2}{2})q(y) dy \!+\! \varepsilon \|u_{\omega}\|_1,
    \label{tractable-objective}
    \end{eqnarray}
    and constant $(-\mathcal{L}^*$) is the optimal value of the dual form \eqref{unbalanced-eot-dual}. The bound is \textbf{tight} in the sence that it turns to $0=0$ when $v_{\theta}(y)\!=\!\!\exp\{\nicefrac{2\psi^*(y)-\|y\|^2}{2\varepsilon}\}$ and $u_{\omega}(x)\!=\!\!c_{\theta}(x)\exp\{\nicefrac{2\phi^*(x)-\|x\|^2}{2\varepsilon}\}$.
    \label{thm-kl}
\end{theorem}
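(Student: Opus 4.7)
The plan is to evaluate the dual functional in \eqref{unbalanced-eot-dual} at the specific pair of potentials induced by the parametrization \eqref{parametrization}, namely $A(x)\defeq \varepsilon\log(u_{\omega}(x)/c_{\theta}(x))+\|x\|^{2}/2$ and $B(y)\defeq \varepsilon\log v_{\theta}(y)+\|y\|^{2}/2$. A direct computation shows that $A(x)+B(y)-\|x-y\|^{2}/2 = \varepsilon\log\{u_{\omega}(x)v_{\theta}(y)/c_{\theta}(x)\}+\langle x,y\rangle$, so the exponential inside the double integral of \eqref{unbalanced-eot-dual} is exactly the density $\gamma_{\theta,w}(x,y)$ from \eqref{parametrization}. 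Integrating in $y$ first and using the very definition of $c_{\theta}(x)$ collapses this double integral to $\|u_{\omega}\|_{1}$. Hence the dual value at $(A,B)$ equals $-\mathcal{L}(\theta,w)$, which already yields $\mathcal{L}(\theta,w)\geq \mathcal{L}^{*}$ since $-\mathcal{L}^{*}$ is the dual supremum.

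Next I would compute $\varepsilon\KL{\gamma^{*}}{\gamma_{\theta,w}}$ in closed form. Taking logarithms of \eqref{gamma-potentials} and \eqref{parametrization} and subtracting, the $\langle x,y\rangle/\varepsilon$ term cancels, since $-\|x-y\|^{2}/2 = -\|x\|^{2}/2-\|y\|^{2}/2+\langle x,y\rangle$, producing the clean separation
\[\varepsilon\log(\gamma^{*}(x,y)/\gamma_{\theta,w}(x,y))=(\phi^{*}(x)-A(x))+(\psi^{*}(y)-B(y)).\]
Integrating against $\gamma^{*}$ reduces each summand to an integral with respect to a marginal of $\gamma^{*}$. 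Recalling the mass correction $-\|\gamma^{*}\|_{1}+\|\gamma_{\theta,w}\|_{1}$ in the definition of $\text{D}_{\text{KL}}$ for positive measures, together with $\|\gamma_{\theta,w}\|_{1}=\|u_{\omega}\|_{1}$, I obtain
\[\varepsilon\KL{\gamma^{*}}{\gamma_{\theta,w}}=\int(\phi^{*}-A)\,d\gamma^{*}_{x}+\int(\psi^{*}-B)\,d\gamma^{*}_{y}+\varepsilon(\|u_{\omega}\|_{1}-\|\gamma^{*}\|_{1}).\]

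To relate this to $\mathcal{L}(\theta,w)-\mathcal{L}^{*}$ I evaluate the dual at its optimizer $(\phi^{*},\psi^{*})$: the exponential inside the dual then coincides with $\gamma^{*}$ (by \eqref{gamma-potentials}), so its double integral equals $\|\gamma^{*}\|_{1}$, yielding $\mathcal{L}^{*}=\varepsilon\|\gamma^{*}\|_{1}+\int \overline{f}_{1}(-\phi^{*})p+\int \overline{f}_{2}(-\psi^{*})q$. Subtracting from $\mathcal{L}(\theta,w)$ and matching against the KL expression above, the claim reduces to showing
\[\int [\overline{f}_{1}(-A)-\overline{f}_{1}(-\phi^{*})]\,p\;\geq\;\int (\phi^{*}-A)\,d\gamma^{*}_{x},\]
and the analogous inequality with $(B,\psi^{*},\gamma^{*}_{y},q)$ in place of $(A,\phi^{*},\gamma^{*}_{x},p)$. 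I would deduce both from the Fenchel--Young inequality $\overline{f}_{1}(-A)+f_{1}(\gamma^{*}_{x}/p)\geq -A\cdot(\gamma^{*}_{x}/p)$, which holds pointwise for all $A$ and becomes an \emph{equality} when $A=\phi^{*}$. The hard part is precisely this optimality identity, i.e.\ $-\phi^{*}(x)\in \partial f_{1}(\gamma^{*}_{x}(x)/p(x))$ a.e.\ and the analogue for $y$: I would obtain it from the standard primal-dual optimality conditions for \eqref{unbalanced-eot-primal}/\eqref{unbalanced-eot-dual} (equivalently, by taking the first variation of the dual in $\phi$), noting that the same relation is already implicit in the representation \eqref{gamma-potentials}.

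For the tightness claim, plugging $v_{\theta}(y)=\exp\{(2\psi^{*}(y)-\|y\|^{2})/(2\varepsilon)\}$ and $u_{\omega}(x)=c_{\theta}(x)\exp\{(2\phi^{*}(x)-\|x\|^{2})/(2\varepsilon)\}$ gives $A\equiv\phi^{*}$ and $B\equiv\psi^{*}$, so \eqref{parametrization} reproduces \eqref{gamma-potentials} and $\gamma_{\theta,w}=\gamma^{*}$; hence $\KL{\gamma^{*}}{\gamma_{\theta,w}}=0$, the Fenchel--Young bounds above are attained with equality, $\mathcal{L}(\theta,w)=\mathcal{L}^{*}$, and the inequality collapses to $0=0$.
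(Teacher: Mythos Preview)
Your proposal is correct and follows essentially the same route as the paper: both compute $\varepsilon\KL{\gamma^{*}}{\gamma_{\theta,\omega}}$ by subtracting $\log\gamma_{\theta,\omega}$ from $\log\gamma^{*}$ and reducing to marginal integrals, and both close the argument via the Fenchel--Young inequality $\overline{f}_1(-A)+f_1(\gamma_x^{*}/p)\geq -A\,(\gamma_x^{*}/p)$ together with its equality case at $A=\phi^{*}$, which the paper obtains (its Steps~1--2) from the primal--dual equality \eqref{primal-optimal}$=$\eqref{dual-optimal}. Your extra observation that the dual value at $(A,B)$ equals $-\mathcal{L}(\theta,\omega)$ is a clean way to read off $\|\gamma_{\theta,\omega}\|_1=\|u_{\omega}\|_1$ and to interpret $\mathcal{L}$, but note that invoking ``$-\mathcal{L}^{*}$ is the dual supremum'' to get $\mathcal{L}(\theta,\omega)\geq\mathcal{L}^{*}$ tacitly assumes $(A,B)\in\mathcal{C}_{2,b}\times\mathcal{C}_{2,b}$; this is harmless since that inequality is in any case a corollary of the main bound and $\KL{\cdot}{\cdot}\geq 0$.
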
\vspace{-2mm}
In fact, \eqref{tractable-objective} is the dual form \eqref{unbalanced-eot-dual} but with potentials $(\phi,\psi)$ expressed through $u_{\omega}, v_{\theta}$ (and $c_{\theta}$):
\vspace{-1mm}
\begin{eqnarray*}
\phi(x)\!\leftarrow\!\phi_{\theta,\omega}(x)\!=\!\varepsilon\! \log\! \frac{u_{\omega}(x)}{c_{\theta}(x)}\!+\!\frac{\|x\|^2}{2},\;\;\;\psi(y)\!\leftarrow\! \psi_{\theta}(y)\!=\!\varepsilon \log v_{\theta}(y)\! +\!\frac{\|y\|^2}{2}.\!\vspace{-2mm}
\end{eqnarray*}
Our result can be interpreted as the bound on the quality of approximate solution $\gamma_{\theta,\omega}$ \eqref{unbalanced-eot-primal} recovered from the approximate solution to the dual problem \eqref{unbalanced-eot-dual}. It can be directly proved using the original $(\phi,\psi)$ notation of the dual problem, but we use $(u_{\omega},v_{\theta})$ instead as with this change of variables the form of the recovered plan $\gamma_{\theta,\omega}$ is more interpretable ($u_{w}$ defines the first marginal, $v_{\theta}$ -- conditionals).

 Instead of optimizing \eqref{main-objective} to get $\gamma_{\theta,\omega}$, we may optimize the upper bound \eqref{tractable-objective} which is more tractable. Indeed, \eqref{tractable-objective} is a sum of the expectations w.r.t. the probability measures $p,q$. We can obtain Monte-Carlo estimation of \eqref{tractable-objective} from random samples and optimize it with stochastic gradient descent procedure w.r.t. $(\theta, \omega)$. The main \textbf{challenge} here is the computation of the variable $c_{\theta}$ and term $\|u_{\omega}\|_1$. Below we propose a smart parametrization by which both variables can be derived analytically.

\subsection{Parametrization and the Optimization Procedure}

\textbf{Parametrization.} Recall that $u_{\omega}$ parametrizes the density of the marginal $\gamma^*_x$  which is unnormalized. Setting $x=0$ in equation \eqref{v-unnormalized}, we get $\gamma^*(y|x=0) \propto v^*(y)$ which means that $v^*$ also corresponds to an unnormalized density of a measure.  These motivate us to use the unnormalized Gaussian mixture parametrization for the potential $v_{\theta}(y)$ and measure $u_{\omega}(x)$: 
\vspace{-0.5mm}\begin{eqnarray}
    \label{gauss-parametrization}
    v_{\theta}(y)\!=\!\sum_{k=1}^{K}\!\! \alpha_k \mathcal{N}(y|r_k, \!\varepsilon S_k); \;\;
    u_{\omega}(x)\!\!=\!\!\!\sum_{l=1}^{L} \!\beta_l \mathcal{N}(x|\mu_l, \!\varepsilon \Sigma_l).\!
    \label{marginal-parametrization}
\end{eqnarray}\vspace{-5mm}

Here $\theta\defeq\{\alpha_k, r_k, S_k\}_{k=1}^{K}$, $w\defeq\{\beta_l, \mu_l, \Sigma_l\}_{l=1}^{L}$ with $\alpha_k,\beta_l\geq0$, $r_k, \mu_l\in\mathbb{R}^d$ 
and $0\prec S_k,\Sigma_l\in \mathbb{R}^{d\times d}$. The covariance matrices are scaled by $\varepsilon$ just for convenience.

For this type of parametrization, it holds that $\|u_{\omega}\|_1=\sum_{l=1}^L \beta_l.$ Moreover, there exist closed-from expressions for the normalization constant $c_{\theta}(x)$ and conditional plan $\gamma_{\theta}(y|x)$, see \citep[Proposition 3.2]{korotin2024light}. Specifically, define $r_k(x)\defeq r_k+S_k x$ and $\widetilde{\alpha}_{k}(x)\defeq \alpha_k \exp\{\frac{x^T S_k x +2 r_k^T x}{2 \varepsilon}\}$. It holds that
\vspace{-1.5mm}\begin{eqnarray}
    c_{\theta}(x) = \sum_{k=1}^K \widetilde{\alpha}_{k}(x);\;\;
    \gamma_{\theta}(y|x)=\frac{1}{c_{\theta}(x)} \sum_{k=1}^K \widetilde{\alpha}_{k}(x) \mathcal{N}(y|r_k(x), \varepsilon S_k).
    \label{gamma-gauss-parametrization}
\end{eqnarray}
\vspace{-2mm}
Using this result and \eqref{gauss-parametrization}, we get the expression for $\gamma_{\theta, w}$:
\begin{eqnarray}
    \gamma_{\theta, w}(x,y)=u_{\omega}(x)\cdot \gamma_{\theta}(y|x)=
    \underbrace{\sum_{l=1}^{L} \!\beta_l \mathcal{N}(x|\mu_l, \!\varepsilon \Sigma_l)}_{u_{\omega}(x)} \cdot
    \underbrace{\frac{ \sum_{k=1}^K \widetilde{\alpha}_{k}(x) \mathcal{N}(y|r_k(x), \varepsilon S_k)}{\sum_{k=1}^K \widetilde{\alpha}_{k}(x)}}_{\gamma_{\theta}(y|x)}.
\end{eqnarray}
\vspace{-3mm}

\label{sec-algorithm}
\textbf{Training.} We recall that the measures $p, q$ are accessible only by a number of empirical samples (see the learning setup in \wasyparagraph \ref{sec-background}). Thus, given samples $\{x_1,...,x_N\}$ and $\{y_1,...,y_M\}$, we optimize the empirical analog of \eqref{tractable-objective}:
\vspace{-1mm}\begin{eqnarray}
   \widehat{\mathcal{L}}(\theta, w)\!\defeq\!
      \frac{1}{N}\sum_{i=1}^N \overline{f}_1 (- \varepsilon \log \frac{u_{\omega}(x_i)}{c_{\theta}(x_{i})}\!-\!\frac{\|x_i\|^2}{2}) \!+\!
    \frac{1}{M}\sum_{j=1}^M  \overline{f}_2 (-\varepsilon \log v_{\theta}(y_j) \!-\!\frac{\|y_j\|^2}{2})\! \!+\! \varepsilon \|u_{\omega}\|_1
    \label{empirical-objective}
\end{eqnarray}\vspace*{-4mm}

using minibatch gradient descent procedure w.r.t. parameters $(\theta, w)$. In the parametrization of $v_{\theta}$ and $u_{\omega}$ \eqref{gauss-parametrization}, we utilize the diagonal matrices $S_k$, $\Sigma_l$. This allows decreasing the number of learnable parameters in $\theta$ and speeding up the computation of inverse matrices $S_k^{-1},\;\Sigma_l^{-1}$. 

\vspace{-1mm}\textbf{Inference.} Sampling from the conditional and marginal measures $\gamma_{\theta}(y|x)\!\approx\! \gamma^*(y|x)$, $u_{\omega}\!\approx\! \gamma_{x}^{*}$ is easy and lightweight since they are explicitly parametrized as Gaussian mixtures, see \eqref{gamma-gauss-parametrization}, \eqref{marginal-parametrization}.

\vspace{-1mm}\subsection{Connection to Related Prior Works}\vspace{-1mm}
\label{sec-prior-work}
The idea of using Gaussian Mixture parametrization for dual potentials in EOT-related tasks first appeared in the EOT/SB benchmark \citep{gushchin2023building}. There it was used to obtain the benchmark pairs of probability measures with the known EOT solution between them. In \citep{korotin2024light}, the authors utilized this type of parametrization to obtain a light solver \textbf{(LightSB)} for the \textbf{balanced} EOT.

Our solver for \textbf{unbalanced} EOT
\eqref{tractable-objective} subsumes their solver for balanced EOT as well as one problem subsumes the other for the special case of divergences, see the remark in \wasyparagraph\ref{sec-background}.  Let $f_1$, $f_2$ be convex indicators of $\{1\}$. Then $\overline{f_1}(t)=t$ and $\overline{f_2}(t)=t$ and objective \eqref{tractable-objective} becomes
\vspace{-0.1mm}\begin{eqnarray}
    \mathcal{L}(\theta,w)\!=\! 
    \int_{\mathbb{R}^d} \!(- \varepsilon \log \frac{u_{\omega}(x)}{c_{\theta}(x)}\!-\!\frac{\|x\|^2}{2})p(x) dx + 
    \int_{\mathbb{R}^d} (-\varepsilon \log v_{\theta}(y) \!-\!\frac{\|y\|^2}{2})q(y) dy \!+\! \varepsilon \|u_{\omega}\|_1=
    \nonumber\\
    - \varepsilon\Big(\!\int_{\mathbb{R}^d}\!\!\!\log \frac{u_{\omega}(x)}{c_{\theta}(x)}p(x)dx \!+\!\!\!
    \int_{\mathbb{R}^d} \!\!\log v_{\theta}(y)q(y)dy\!-\!
    \|u_{\omega}\|_1\!\Big)-\!\underbrace{\int_{\mathbb{R}^d}\!\!\frac{\|x\|^2}{2}p(x) dx \!-\!\! \int_{\mathbb{R}^d}\!\! \frac{\|y\|^2}{2}q(y) dy}_{\defeq \text{Const}(p,q)}\!=\!
    \nonumber\\
    \varepsilon\big(\!\!\int_{\mathbb{R}^d}\log c_{\theta}(x)p(x)dx\big) \!-\!\!
    \int_{\mathbb{R}^d} \log v_{\theta}(y)q(y)dy\big)- 
    \label{lightsb-objective}
    \\
    \varepsilon \int_{\mathbb{R}^d} \log u_{\omega}(x) p(x)dx + \varepsilon \|u_{\omega}\|_1 - \text{Const}(p,q).
    \label{marginal-objective}
\end{eqnarray}\vspace{-2mm}

Here line \eqref{lightsb-objective} depends exclusively on $\theta$ and exactly coincides with the LightSB's objective, see \citep[Proposition 8]{korotin2024light}. At the same time, line \eqref{marginal-objective} depends only on $\omega$, and its minimum is attained when $u_{\omega}=p$. Thus, this part is not actually needed in the balanced case, see \citep[Appendix C]{korotin2024light}.

\vspace{-1.5mm}\subsection{Generalization Bounds and Universal Approximation Property}\vspace{-1mm}
\label{sec-generalization-bounds}

Theoretically, to recover the UEOT plan $\gamma^*$, one needs to solve the problem $\mathcal{L}(\theta,\omega)\rightarrow \min_{\theta,\omega}$ as stated in our Theorem \ref{thm-kl}. In practice, the measures $p$ and $q$ are accessible via empirical samples $X\defeq\{x_1,...,x_N\}$ and $Y\defeq\{y_1,...,y_M\}$, thus, one needs to optimize the empirical counterpart $\widehat{\mathcal{L}}(\theta,\omega)$ of $\mathcal{L}(\theta,\omega)$, see \eqref{empirical-objective}. Besides, the available potentials $u_{\omega}$, $v_{\theta}$ over which one optimizes the objective come from the restricted class of functions. Specifically, we consider unnormalized Gaussian mixtures $u_{\omega}$, $v_{\theta}$ with $K$ and $L$ components respectively. 
Thus, one may naturally wonder: \textbf{how close is the recovered plan $\gamma_{\widehat{\theta}, \widehat{\omega}}$ to the UEOT plan $\gamma^*$} given that $(\widehat{\theta},\widehat{\omega})=\arg\min_{\theta,\omega}\widehat{\mathcal{L}}(\theta,\omega)$?
To address this question, we study the \textit{generalization error} $\mathbb{E}\KL{\gamma^*}{\gamma_{\widehat{\theta},\widehat{\omega}}}$, i.e., the expectation of $\text{D}_{\text{KL}}$ between $\gamma^*$ and $\gamma_{\widehat{\theta},\widehat{\omega}}$ 
taken w.r.t. the random realization of the train datasets $X\sim p$, $Y\sim q$. 

Let $(\overline{\theta}, \overline{\omega})\!=\!\arg\min_{(\theta,\omega)\in \Theta\times \Omega} \mathcal{L}(\theta,\omega)$ denote the best approximators of $\mathcal{L}(\theta,\omega)$ in the admissible class. From Theorem \ref{thm-kl} it follows that that $\mathbb{E}\KL{\gamma^*}{\gamma_{\widehat{\theta},\widehat{\omega}}}$
can be upper bounded using ${\mathbb{E}(\mathcal{L}(\widehat{\theta},\widehat{\omega}) - \mathcal{L}^*)}$. The latter can be decomposed into the estimation and approximation errors:
\begin{eqnarray}
    \mathbb{E}(\mathcal{L}(\widehat{\theta},\widehat{\omega}) \!-\!\! \mathcal{L}^*)\!=\!
    \mathbb{E}[\mathcal{L}(\widehat{\theta},\widehat{\omega}) \!-\!\! \mathcal{L}(\overline{\theta},\overline{\omega})] \!+\! \mathbb{E}[\mathcal{L}(\overline{\theta},\overline{\omega}) \!-\!\! \mathcal{L}^*]\!=\!
    \underbrace{\mathbb{E}[\mathcal{L}(\widehat{\theta},\widehat{\omega}) \!-\!\! \mathcal{L}(\overline{\theta},\overline{\omega})]}_{\text{Estimation error}} \!+\! \underbrace{[\mathcal{L}(\overline{\theta},\overline{\omega})\!-\!\! \mathcal{L}^*]}_{{\text{Approximation error}}}.
\end{eqnarray}

\vspace{-4mm}
We establish the quantitative bound for the estimation error in the proposition below.

 \begin{proposition}[Bound for the estimation error] Let $p,q$ be compactly supported and assume that $\overline{f}_1,\;\overline{f}_2$ are Lipshitz. Assume that the considered parametric classes $\Theta$, $\Omega$ $(\ni (\theta, \omega))$ consist of unnormalized Gaussian mixtures with $K$ and $L$ components respectively with bounded means $\|r_{k}\|,\|\mu_l\|\leq R$ (for some $R>0$), covariances $sI\preceq S_{k},\Sigma_l\preceq SI$ (for some $0<s\leq S$) and weights $a\leq \alpha_{k},\beta_l\leq A$ (for some $0<a\leq A$). Then the following holds:
\vspace{-1mm}$$\mathbb{E}\big[\mathcal{L}(\widehat{\theta}, \widehat{\omega})-\mathcal{L}(\overline{\theta}, \overline{\omega})\big]\leq O(\frac{1}{\sqrt{N}})+O(\frac{1}{\sqrt{M}}),$$\vspace{-4.5mm}

where $O(\cdot)$ hides the constants depending only on $K,L,R,s,S,a,A,p,q, \varepsilon$ but not on sizes $M,N$.
\label{prop-estimation}
\end{proposition}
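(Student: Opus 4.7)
The plan is to invoke the standard excess-risk decomposition and then control the resulting uniform deviation by a Rademacher-complexity argument adapted to the parametric Gaussian-mixture class. First, since $(\widehat{\theta},\widehat{\omega})$ minimizes $\widehat{\mathcal{L}}$ on $\Theta\times\Omega$, we have $\widehat{\mathcal{L}}(\widehat{\theta},\widehat{\omega})\leq \widehat{\mathcal{L}}(\overline{\theta},\overline{\omega})$, which after adding and subtracting gives the classical bound
\[
\mathcal{L}(\widehat{\theta},\widehat{\omega})-\mathcal{L}(\overline{\theta},\overline{\omega})\leq 2\sup_{(\theta,\omega)\in\Theta\times\Omega}|\widehat{\mathcal{L}}(\theta,\omega)-\mathcal{L}(\theta,\omega)|.
\]
Taking expectation w.r.t.\ $X\sim p^{\otimes N}$, $Y\sim q^{\otimes M}$ reduces the problem to bounding this uniform deviation. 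Notice that the $\varepsilon\|u_\omega\|_1$ term in \eqref{tractable-objective} is sample-independent and cancels, so the deviation splits cleanly into an $x$-part (an empirical average of $\overline{f}_1$ evaluated at $-\varepsilon\log(u_\omega(x)/c_\theta(x))-\|x\|^2/2$ against its $p$-expectation) and an analogous $y$-part with $\overline{f}_2$ and $v_\theta$. I will handle these two pieces separately and sum the bounds.

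Second, I apply the standard symmetrization inequality to each piece, reducing the expected supremum to a Rademacher complexity $\mathfrak{R}_N(\mathcal{F}_1)$ (resp.\ $\mathfrak{R}_M(\mathcal{F}_2)$) of the function classes
\[
\mathcal{F}_1=\{x\mapsto \overline{f}_1(-\varepsilon\log\tfrac{u_\omega(x)}{c_\theta(x)}-\tfrac{\|x\|^2}{2}):(\theta,\omega)\in\Theta\times\Omega\},\quad \mathcal{F}_2=\{y\mapsto \overline{f}_2(-\varepsilon\log v_\theta(y)-\tfrac{\|y\|^2}{2}):\theta\in\Theta\}.
\]
By Talagrand's contraction inequality, the Lipschitzness of $\overline{f}_1$ and $\overline{f}_2$ strips off the outer convex conjugates at the cost of a multiplicative constant, leaving the Rademacher complexities of the inner classes $\mathcal{G}_1=\{x\mapsto -\varepsilon\log u_\omega(x)+\varepsilon\log c_\theta(x)-\|x\|^2/2\}$ and $\mathcal{G}_2=\{y\mapsto -\varepsilon\log v_\theta(y)-\|y\|^2/2\}$.

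Third, I control $\mathfrak{R}_N(\mathcal{G}_1)$ and $\mathfrak{R}_M(\mathcal{G}_2)$ using a Dudley / covering-number bound. Because $\Theta$ and $\Omega$ are compact parametric classes (bounded means, eigenvalue-bounded covariances, bounded weights) and $p,q$ are compactly supported, I will show that for $x$ in the support of $p$ (resp.\ $y$ in the support of $q$) the quantities $u_\omega(x)$, $v_\theta(y)$, $c_\theta(x)$ are uniformly bounded away from $0$ and $\infty$, and moreover the maps $(\theta,\omega)\mapsto \log u_\omega(x)-\log c_\theta(x)$ and $\theta\mapsto \log v_\theta(y)$ are Lipschitz in the parameters with a constant independent of $x,y$. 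This follows from an explicit differentiation of the Gaussian density with respect to $\alpha_k,\beta_l,r_k,\mu_l,S_k,\Sigma_l$ together with the uniform lower bounds $a>0$ and $sI\preceq S_k,\Sigma_l$. Hence $\mathcal{G}_1$ and $\mathcal{G}_2$ are finite-dimensional parametric classes whose $L^\infty$-covering numbers are polynomial in $1/\delta$, yielding $\mathfrak{R}_N(\mathcal{G}_1)=O(1/\sqrt{N})$ and $\mathfrak{R}_M(\mathcal{G}_2)=O(1/\sqrt{M})$ by the standard Dudley integral bound, where the implicit constants depend only on $K,L,R,s,S,a,A,\varepsilon$ and the diameters of $\mathrm{supp}(p),\mathrm{supp}(q)$.

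The main obstacle is precisely step three: establishing the uniform parameter-Lipschitzness and uniform upper/lower bounds on $\log u_\omega$, $\log v_\theta$, $\log c_\theta$ over the (compact) support of the data. The lower bound on $c_\theta(x)$ is the most delicate point, since $c_\theta$ mixes an exponential in $x^T S_k x + 2 r_k^T x$; compactness of $\mathrm{supp}(p)$ and the bounds $\|r_k\|\leq R$, $S_k\preceq SI$, $\alpha_k\geq a>0$ make it manageable, but the exponents must be tracked carefully to yield constants depending only on the stated quantities. Once this is in place, the final additive combination of the two Rademacher bounds gives the claimed $O(1/\sqrt{N})+O(1/\sqrt{M})$ rate.
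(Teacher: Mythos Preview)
Your proposal is correct and follows essentially the same route as the paper: the excess-risk decomposition reducing to a uniform deviation, symmetrization to Rademacher complexities of $\mathcal{F}_1,\mathcal{F}_2$, and Talagrand's contraction to peel off the Lipschitz $\overline{f}_1,\overline{f}_2$. The only difference is in how the Rademacher complexity of the inner ``log-sum-exp of Gaussians'' class is handled: the paper invokes a ready-made bound for constrained log-sum-exp quadratic functions from \citep[Proposition~B.2]{korotin2024light} and then uses additivity of Rademacher complexity to combine $\log u_\omega$, $\log c_\theta$, and the fixed quadratic, whereas you propose a direct Dudley/covering argument via parameter-Lipschitzness on the compact class $\Theta\times\Omega$. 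Both are standard and yield the same $O(1/\sqrt{N})+O(1/\sqrt{M})$ rate; the paper's route is shorter by citation, while yours is self-contained and makes the role of the compactness assumptions on $\Theta,\Omega$ and $\mathrm{supp}(p),\mathrm{supp}(q)$ more explicit.
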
\vspace{-1mm}

This proposition allows us to conclude that the estimation error converges to zero when $N$ and $M$ tend to infinity at the usual parametric rate. It remains to clarify the question: \textit{can we make the approximation error arbitrarily small}? We answer this question positively in our Theorem below.

\begin{theorem}[Gaussian mixture parameterization for the variables provides the universal approximation of UEOT plans] Let $p$ and $q$ be compactly supported and assume that $\overline{f}_1,\;\overline{f}_2$ are Lipshitz. Then for all $\delta>0$ there exist Gaussian mixtures $v_{\theta}$, $u_{\omega}$ \eqref{gauss-parametrization} with \textbf{scalar} covariances $S_{k}=\lambda_{k}I_{d}\succ 0$, $\Sigma_{l}=\zeta_{l}I_{d}\succ 0$ of their components that satisfy $\KL{\gamma^{*}}{\gamma_{\theta,\omega}}\leq \varepsilon^{-1}(\mathcal{L(\theta,\omega})-\mathcal{L}^*)< \delta.$
\label{thm-universal-approximation}
\end{theorem}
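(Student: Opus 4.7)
The plan is to combine the tight bound of Theorem~\ref{thm-kl} with a two-step approximation: first replace the (possibly irregular) optimal dual potentials by smooth ones in $\mathcal{C}_{2,b}$, then realise those smooth potentials by scalar-covariance Gaussian mixtures using a density argument. Concretely, by Theorem~\ref{thm-kl} it suffices to show that for every $\delta>0$ we can pick Gaussian mixtures $v_{\theta}, u_{\omega}$ (with scalar covariances) such that $\mathcal{L}(\theta,\omega)-\mathcal{L}^{*}<\varepsilon\delta$. Observe that with the change of variables $\phi_{\theta,\omega}(x)=\varepsilon\log(u_{\omega}(x)/c_{\theta}(x))+\tfrac12\|x\|^{2}$, $\psi_{\theta}(y)=\varepsilon\log v_{\theta}(y)+\tfrac12\|y\|^{2}$ and the identity $\|u_{\omega}\|_{1}=\iint\exp\!\bigl(\tfrac{\phi_{\theta,\omega}(x)+\psi_{\theta}(y)}{\varepsilon}-\tfrac{\|x-y\|^{2}}{2\varepsilon}\bigr)\,dxdy$ already computed in \S\ref{sec-optimization}, the quantity $\mathcal{L}(\theta,\omega)$ equals the negative of the dual objective in \eqref{unbalanced-eot-dual} evaluated at $(\phi_{\theta,\omega},\psi_{\theta})$. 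Hence the task is purely a dual-approximation problem: find scalar-covariance mixtures so that $(\phi_{\theta,\omega},\psi_{\theta})$ approach an $\eta$-optimal pair of dual potentials.

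Next, I would invoke the variant of the dual problem over $\mathcal{C}_{2,b}(\mathbb{R}^{d})\times\mathcal{C}_{2,b}(\mathbb{R}^{d})$ justified in Appendix~\ref{appendix-dual-form}: choose $\phi_{0},\psi_{0}\in \mathcal{C}_{2,b}(\mathbb{R}^{d})$ whose dual value differs from $-\mathcal{L}^{*}$ by at most $\varepsilon\delta/2$. Then set the targets $\tilde v_{0}(y)\defeq\exp\!\bigl(\tfrac{2\psi_{0}(y)-\|y\|^{2}}{2\varepsilon}\bigr)$ and, given any candidate $v_\theta$, $\tilde u_{0}(x)\defeq c_{v_\theta,0}(x)\exp\!\bigl(\tfrac{2\phi_{0}(x)-\|x\|^{2}}{2\varepsilon}\bigr)$. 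Because $\psi_{0}$ is bounded from above, $\tilde v_{0}$ is a continuous, strictly positive and integrable density with Gaussian tail; standard mollification/kernel-density arguments show that such a density can be uniformly approximated on any compact set and approximated in $L^{1}$ against weights of the form $e^{R\|y\|/\varepsilon}$ by unnormalised Gaussian mixtures with scalar covariances of arbitrarily small bandwidth (put mixture centers on a fine grid covering a large ball, with weights proportional to $\tilde v_{0}$). The same construction applied to $\tilde u_{0}$ yields $u_{\omega}$. By choosing the bandwidth small enough and the grid fine enough, I can make $v_{\theta}/\tilde v_{0}$ and $u_{\omega}/\tilde u_{0}$ uniformly $\eta$-close to $1$ on compact supersets of $\mathrm{supp}(q)$ and $\mathrm{supp}(p)$ respectively, and simultaneously make $\|u_{\omega}\|_{1}-\|\tilde u_{0}\|_{1}$ arbitrarily small.

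Once these controls are in place I would propagate the approximation through $\mathcal{L}$. Uniform closeness of $v_{\theta}$ to $\tilde v_{0}$ on $\mathrm{supp}(q)$, combined with $\tilde v_{0}$ being bounded away from $0$ on this compact set, implies uniform closeness of $\psi_{\theta}$ to $\psi_{0}$ there. The analogous argument (using the continuity and positivity of $c_{\theta}$ on $\mathrm{supp}(p)$, which follow from the Gaussian tail of $v_\theta$) gives uniform closeness of $\phi_{\theta,\omega}$ to $\phi_{0}$ on $\mathrm{supp}(p)$. The Lipschitz assumption on $\overline{f}_{1},\overline{f}_{2}$ then converts these uniform bounds into a bound on the two $\overline{f}$-integrals, while the $L^{1}$ bound controls $\varepsilon\|u_{\omega}\|_{1}$. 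Adding these contributions gives $|\mathcal{L}(\theta,\omega)-(-V(\phi_{0},\psi_{0}))|<\varepsilon\delta/2$, and together with the first $\varepsilon\delta/2$ from Step~1 yields $\mathcal{L}(\theta,\omega)-\mathcal{L}^{*}<\varepsilon\delta$. Theorem~\ref{thm-kl} then gives $\KL{\gamma^{*}}{\gamma_{\theta,\omega}}<\delta$.

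The main obstacle I expect is the non-Lipschitz behaviour of $\log$: pointwise or $L^{1}$ approximation of $\tilde v_{0}$ by a mixture does not by itself control $\log v_{\theta}$, and similarly for $u_{\omega}/c_{\theta}$. The remedy is to force a \emph{uniform-ratio} approximation on the compact supports of $p$ and $q$, which is available precisely because $\tilde v_{0}$ and $\tilde u_{0}$ are continuous and strictly positive on those compact sets; a secondary difficulty is the global $L^{1}$ control needed to handle the $\varepsilon\|u_{\omega}\|_{1}$ term and the $c_{\theta}$-integral, which is secured by the Gaussian tails of the mixture components dominating the $e^{\langle x,y\rangle/\varepsilon}$ growth for $x$ in the (compact) support of $p$.
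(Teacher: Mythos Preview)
Your overall strategy coincides with the paper's: reduce to a dual approximation problem via Theorem~\ref{thm-kl} and the identity $\mathcal{L}(\theta,\omega)=\mathcal{J}(\phi_{\theta,\omega},\psi_{\theta})$, pick near-optimal $(\phi_{0},\psi_{0})\in\mathcal{C}_{2,b}\times\mathcal{C}_{2,b}$, and then realise them by scalar-covariance Gaussian mixtures, propagating the error through the Lipschitz $\overline{f}_{i}$. The sequential construction (first $v_{\theta}$, then $u_{\omega}$ built from the resulting $c_{\theta}$) is also exactly what the paper does.

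There is, however, a genuine missing idea in your plan: you never modify $(\phi_{0},\psi_{0})$ \emph{outside} the supports of $p,q$. The paper's proof inserts the replacements
\[
\widetilde{\psi}(y)=\psi_{0}(y)-\max\{0,\|y\|^{2}-R^{2}\},\qquad
\widetilde{\phi}(x)=\phi_{0}(x)-\max\{0,\|x\|^{2}-R^{2},\|x\|^{4}-R^{4}\},
\]
which leave the $\overline{f}_{i}$-integrals unchanged, only \emph{decrease} the exponential term (so $\mathcal{J}(\widetilde{\phi},\widetilde{\psi})\le\mathcal{J}(\phi_{0},\psi_{0})$), and—crucially—force $e^{\widetilde{\psi}/\varepsilon}$ and $e^{\widetilde{\phi}/\varepsilon}c_{\theta}$ to vanish at infinity. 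Without this step your error propagation for the mass term breaks: if $v_{\theta}$ approximates $\tilde{v}_{0}$ in sup-norm, the induced error in the exponential integral is of order
\[
\delta''\cdot\int_{\mathbb{R}^{d}}e^{\phi_{0}(x)/\varepsilon}\Big(\int_{\mathbb{R}^{d}}e^{-\|x-y\|^{2}/(2\varepsilon)}\,dy\Big)dx
=\delta''(2\pi\varepsilon)^{d/2}\int_{\mathbb{R}^{d}}e^{\phi_{0}(x)/\varepsilon}\,dx,
\]
and this integral is infinite for a generic $\phi_{0}\in\mathcal{C}_{2,b}$ that is merely bounded above. Similarly, your target $\tilde{u}_{0}=c_{\theta}\,e^{\phi_{0}/\varepsilon-\|x\|^{2}/(2\varepsilon)}$ need not vanish at infinity (since $c_{\theta}$ has quadratic-exponential growth), so the sup-norm Gaussian-mixture approximation you appeal to cannot be invoked. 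Your ``small bandwidth'' and ``$L^{1}$ against weights $e^{R\|y\|/\varepsilon}$'' remarks do not repair this: the relevant weight is quadratic-exponential, not linear, and the $\|u_{\omega}\|_{1}$ term is an integral over all of $\mathbb{R}^{d}$, not only over $\mathrm{supp}(p)$. The quartic penalty on $\phi$ is precisely the paper's device to kill the $c_{\theta}$ growth; once you insert it (and the quadratic penalty on $\psi$), your plan goes through essentially verbatim, with the ``standard mollification'' step replaced by the uniform approximation result of Nguyen (2020) cited in the paper.
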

\vspace{-0.9mm}\textbf{In summary}, results of this section show that one can make the generalization error \textit{arbitrarily small} given a sufficiently large amount of samples and components in the Gaussian parametrization. It means that theoretically our solver can recover the UEOT plan arbitrarily well.

\vspace{-3mm}
\section{Experiments}\vspace{-2mm}
\label{sec-experiments}
\begin{figure*}[t!]
\begin{subfigure}[t]{\linewidth}
\begin{center}
    \vspace{-2mm}
    \centering
    \includegraphics[width=0.5\textwidth]{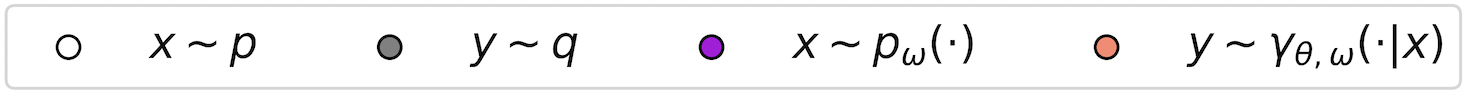}
\end{center}
\end{subfigure}
\centering
\begin{subfigure}[b]{0.235\linewidth}
    \centering
    \includegraphics[width=0.995\linewidth]{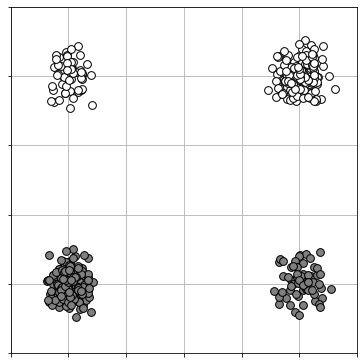}
    \caption{Input, target measures}
    \label{fig:gauss-input}
\end{subfigure}
\hfill
\begin{subfigure}[b]{0.235\linewidth}
    \centering
    \includegraphics[width=0.995\linewidth]{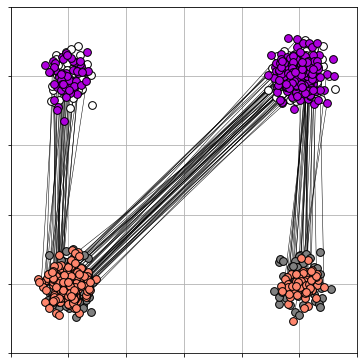}
    \caption{U-LightOT, $\tau=10^2$}
    \label{fig:gauss-lightsb}
\end{subfigure}
\hfill
\begin{subfigure}[b]{0.235\linewidth}
    \centering
    \includegraphics[width=0.995\linewidth]{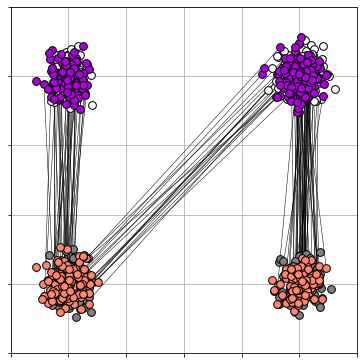}
    \caption{U-LightOT, $\tau=10^1$}
    \label{fig:gauss-mix}
\end{subfigure}
\hfill
\begin{subfigure}[b]{0.235\linewidth}
    \centering
    \includegraphics[width=0.995\linewidth]{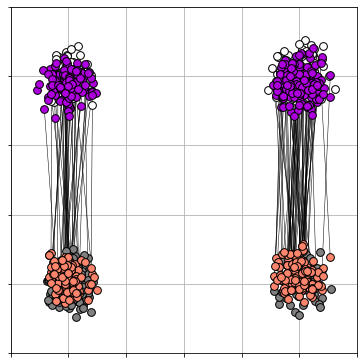}
    \caption{U-LightOT, $\tau=10^0$}
    \label{fig:gauss-ulightsb}
\end{subfigure}
\vspace{-1mm}\caption{\centering Conditional plans $\gamma_{\theta,\omega}(y|x)$ learned by our solver in \textit{Gaussians Mixture} experiment with unbalancedness parameter $\tau\in[10^0,10^1,10^2]$. Here $p_{\omega}$ denotes the normalized first marginal $u_{w}$, i.e., $p_{\omega}=u_{\omega}/\|u_{\omega}\|_1$. }
\label{fig:gauss-kl}
\end{figure*}

In this section, we test our U-LightOT solver on several setups from the related works. 
The code is written using \texttt{PyTorch} framework and is publicly available at
\vspace{-1mm}\begin{center}
    \url{https://github.com/milenagazdieva/LightUnbalancedOptimalTransport}.
\end{center}\vspace{-1mm}
The experiments are issued in the convenient form of \texttt{*.ipynb} notebooks. Each experiment requires several minutes of training on CPU with 4 cores. Technical \textit{\underline{training details}} are given in Appendix \ref{sec:exp-details}. 

\vspace{-2mm}
\subsection{Example with the Mixture of Gaussians}\vspace{-2mm}
\label{sec-gaussian-exp}

 We provide an illustrative \textit{'Gaussians Mixture'} example in 2D to demonstrate the ability of our solver to deal with the imbalance of classes in the source and target measures. We follow the experimental setup proposed in \citep[Figure 2]{eyring2023unbalancedness} and define the probability measures $p$, $q$ as follows (Fig. \ref{fig:gauss-input}):
$$p(x)\!=\!\frac{1}{4} \mathcal{N}(x|(-3, 3), 0.1\cdot I_2)\!+\!\frac{3}{4} \mathcal{N}(x|(1, 3), 0.1 \cdot I_2),$$
$$q(y)\!=\!\frac{3}{4} \mathcal{N}(y|(-3,0), 0.1 \cdot I_2)\!+\!\frac{1}{4} \mathcal{N}(y|(1,0), 0.1 \cdot I_2).$$

We test our U-LightOT solver with \textit{scaled} $\text{D}_{\text{KL}}$ divergences, i.e., $f_1(t), f_2(t)$ are defined by $\tau \cdot f_{\text{KL}}(t)= \tau(t\log(t) - t + 1)$ where $\tau>0$. We provide the learned plans for $\tau\in[1,\;10^1,\;10^2]$. The results in Fig. \ref{fig:gauss-kl} show that parameter $\tau$ can be used to control the level of unbalancedness of the learned plans. For $\tau=1$, our U-LightOT solver truly learns the UEOT plans, see Fig. \ref{fig:gauss-ulightsb}. When $\tau$ increases, the solver fails to transport the mass from the input Gaussians to the closest target ones.
Actually, for $\tau=10^2$, our solutions are similar to the solutions of \citep[LightSB]{korotin2024light} solver which approximates balanced EOT plans between the measures. Hereafter, we treat $\tau$ as the \textit{unbalancedness} parameter. In Appendix \ref{app-ablation}, we test the performance of our solver with \underline{$\text{D}_{\chi^{2}}$ divergence.}

\textit{Remark.} Here we conduct all experiments with the entropy regularization parameter $\varepsilon=0.05$. The parameter $\varepsilon$ is responsible for the stochasticity of the learned transport $\gamma_{\theta}(\cdot|x)$. Since we are mostly interested in the correct transport of the mass (controlled by $f_1,f_2$) rather than the stochasticity, we do not pay much attention to $\varepsilon$ throughout the paper.

\subsection{Unpaired Image-to-Image Translation}
\label{sec-alae-exp}

\vspace{-1mm}
Another popular testbed which is usually considered in OT/EOT papers is the unpaired image-to-image translation \citep{zhu2017toward} task. Since our solver uses the parametrization based on Gaussian mixture, it may be hard to apply U-LightOT for learning translation directly in the image space. Fortunately, nowadays it is common to use autoencoders \citep{rombach2022high} for more efficient translation. 
We follow the setup of \citep[Section 5.4]{korotin2024light} and use pre-trained ALAE autoencoder \citep{pidhorskyi2020adversarial} for $1024\times1024$ FFHQ dataset \citep{karras2019style} of human faces. We consider different subsets of FFHQ dataset (\textit{Adult}, \textit{Young}, \textit{Man}, \textit{Woman}) and all variants of translations between them: \textit{Adult}$\leftrightarrow$\textit{Young} and \textit{Woman}$\leftrightarrow$\textit{Man}.

\begin{figure*}[ht!]
\includegraphics[width=\linewidth]{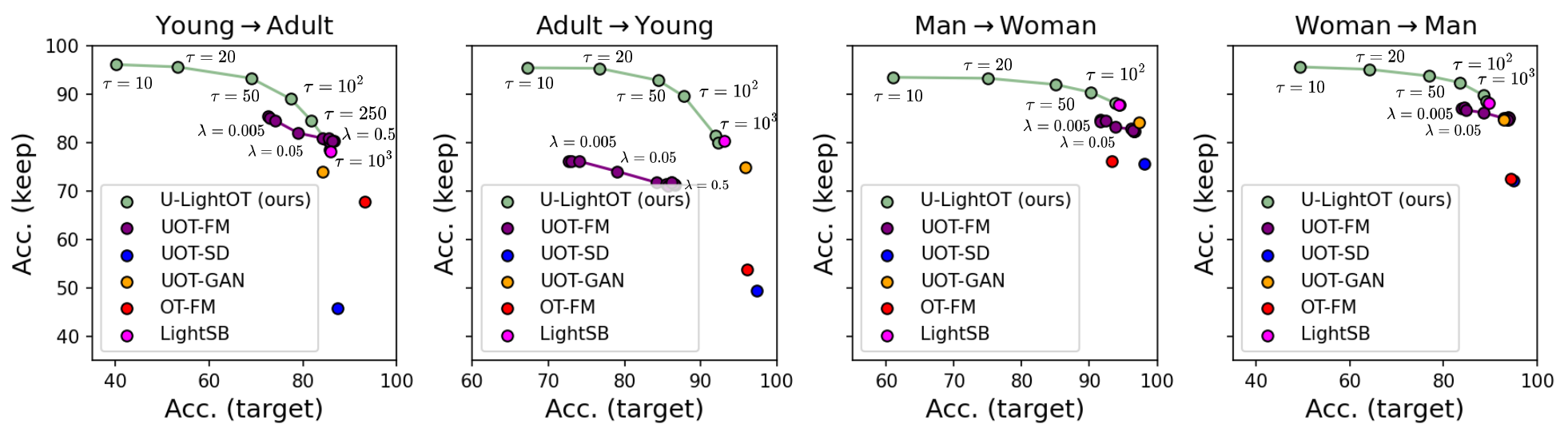}
\caption{\centering Visualization of pairs of accuracies (\textit{keep}-\textit{target}) 
for our U-LightOT solver and other OT/EOT methods in the image translation experiment. The values of unbalancedness parameters for our U-LightOT solver ($\tau$) and \citep[UOT-FM]{eyring2023unbalancedness} ($\lambda=reg\_m$) are specified directly on the plots.
}
\label{fig:alae-comp}
\end{figure*}

\begin{wraptable}{r}{4.5cm}
    \vspace{-4mm}
    \centering
    \footnotesize
    \begin{tabular}{ c|c c  } 
    \hline
    \textbf{Class} & \textit{Man} & \textit{Woman}  \\
    \hline
     \textit{Young}  & $15K$    &$23K$     \\  \hline
     \textit{Adult}  &$7K$    &$3.5K$     \\ \hline
    \end{tabular}
    \captionsetup{justification=centering}
    \caption{Number of \textit{train} FFHQ images for each subset.}
     \label{table-classes}
     \vspace{-4mm}
\end{wraptable}The main challenge of the described translations is the \textbf{imbalance} of classes in the images from source and target subsets, see Table \ref{table-classes}. Let us consider in more detail \textit{Adult}$\rightarrow$\textit{Young} translation. In the FFHQ dataset, the amount of \textit{adult men} significantly outnumbers the \textit{adult women}, while the amount of \textit{young men} is smaller than that of \textit{young women}. Thus, balanced OT/EOT solvers are expected to translate some of \textit{adult man} representatives to \textit{young woman} ones. At the same time, solvers based on unbalanced transport are supposed to alleviate this issue.

\textbf{Baselines.} We perform a comparison with the recent procedure \citep[UOT-FM]{eyring2023unbalancedness} which considers roughly the same setup and demonstrates good performance. This method interpolates the results of unbalanced discrete OT to the continuous setup using the flow matching~\cite{lipman2022flow}. For completeness, we include the comparison with LightSB and balanced optimal transport-based flow matching (OT-FM)~\cite{lipman2022flow} to demonstrate the issues of the balanced solvers. We also consider neural network based solvers relying on the adversarial learning such as UOT-SD \citep{choi2024generative} and UOT-GAN\cite{yang2018scalable}. 

\textbf{Metrics.} We aim to assess the ability of the solvers to perform the translation of latent codes keeping the class of the input images unchanged, e.g., keeping the gender in \textit{Adult}$\rightarrow$\textit{Young} translation. Thus, we train a $99\%$ MLP classifier for gender using the latent codes of images. We use it to compute the accuracy of preserving the gender during the translation. We denote this number by \textit{accuracy (keep)}. 

Meanwhile, it is also important to ensure that the generated images belong to the distribution of the target images rather than the source ones, e.g., belong to the distribution of \textit{young} people in our example. To monitor this property, we use another 99\% MLP classifier to identify whether each of the generated images belong to the target domain or to the source one. Then we calculate the fraction of the generated images belonging to the target domain which we denote as \textit{accuracy (target).
}

\textbf{Results.} 
Unbalanced OT/EOT approaches are equipped with some kind of unbalancedness parameter (like $\tau$ in our solver) which influences the methods' results: with the increase of unbalancedness, accuracy of keeping the class increases but the accuracy of mapping to the target decreases. The latter is because of the relaxation of the marginal constraints in UEOT. For a fair comparison, we aim to compute the trade-offs between the keep/target accuracies for different values of the unbalancedness. We do this for our solver and UOT-FM. We found that GAN-based unbalanced approaches show unstable behaviour, so we report UOT-SD and UOT-GAN's results only for one parameter value. 

For convenience, we visualize the accuracies' pairs for our solver and its competitors in Fig. \ref{fig:alae-comp}. The evaluation shows that our U-LightOT method can effectively solve translation tasks in high dimensions ($d=512$) and outperforms its alternatives in dealing with class imbalance issues. Namely, we see that our solver allows for achieving the best accuracy of keeping the attributes of the input images than other methods while it provides good accuracy of mapping to the target class. While balanced methods and some of the unbalanced ones provide really high accuracy of mapping to the target, their corresponding accuracies of keeping the attributes are worse than ours meaning that we are better in dealing with class imbalance issues. As expected, for large parameter $\tau$, the results of our U-LightOT solver coincide with those for LightSB which is a balanced solver. Meanwhile, our solver has the lowest wall-clock running time among the existing unbalanced solvers, see Table \ref{table-wall-clock-time} for comparison. We demonstrate qualitative results of our solver and baselines in Fig. \ref{fig:alae}. The choice of unbalancedness parameter for visualization of our method and UOT-FM is detailed in Appendix \ref{sec:exp-details}.

We present the results of the \textit{\underline{quantitative comparison}} in the form of tables in Appendix \ref{app-alae-comp}. 
 In Appendix \ref{app-ablation}, we perform the \textit{\underline{ablation study}} of our solver focusing on the selection of parameters $\tau$, $\varepsilon$ and number of Gaussian mixtures' components.

\begin{figure*}[t!]
\begin{center}
\begin{subfigure}[b]{0.48\linewidth}
    \includegraphics[width=0.999\linewidth]{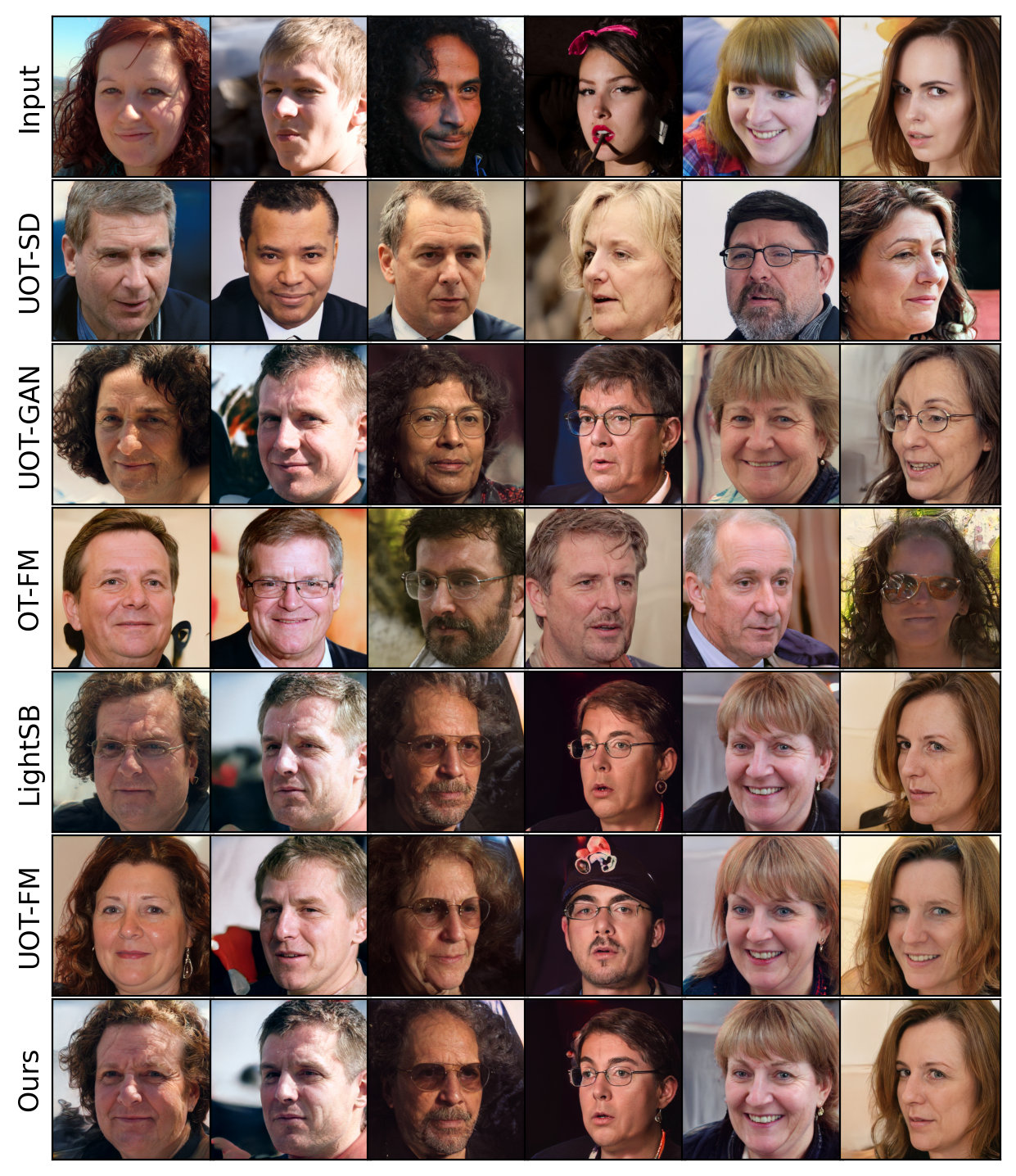}
    \caption{\textit{Young} $\rightarrow$  \textit{Adult}}
    \label{fig:alae-y2a}
\end{subfigure}
\hfill
\begin{subfigure}[b]{0.48\linewidth}
    \includegraphics[width=0.999\linewidth]{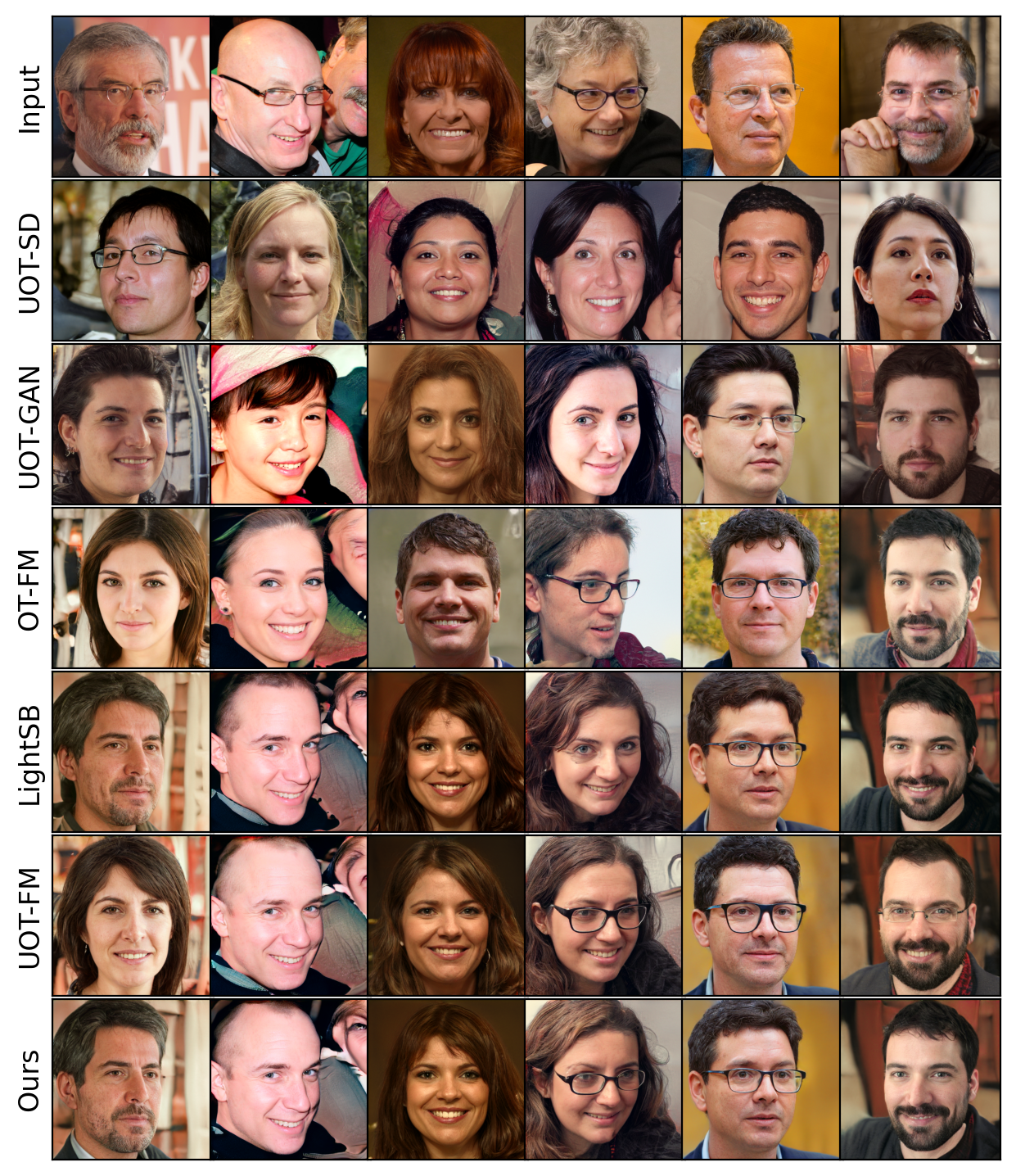}
    \caption{\textit{Adult} $\rightarrow$ \textit{Young}}
    \label{fig:alae-a2y}
\end{subfigure}
\hfill
\newline 
\begin{subfigure}[b]{0.48\linewidth}
    \includegraphics[width=0.999\linewidth]{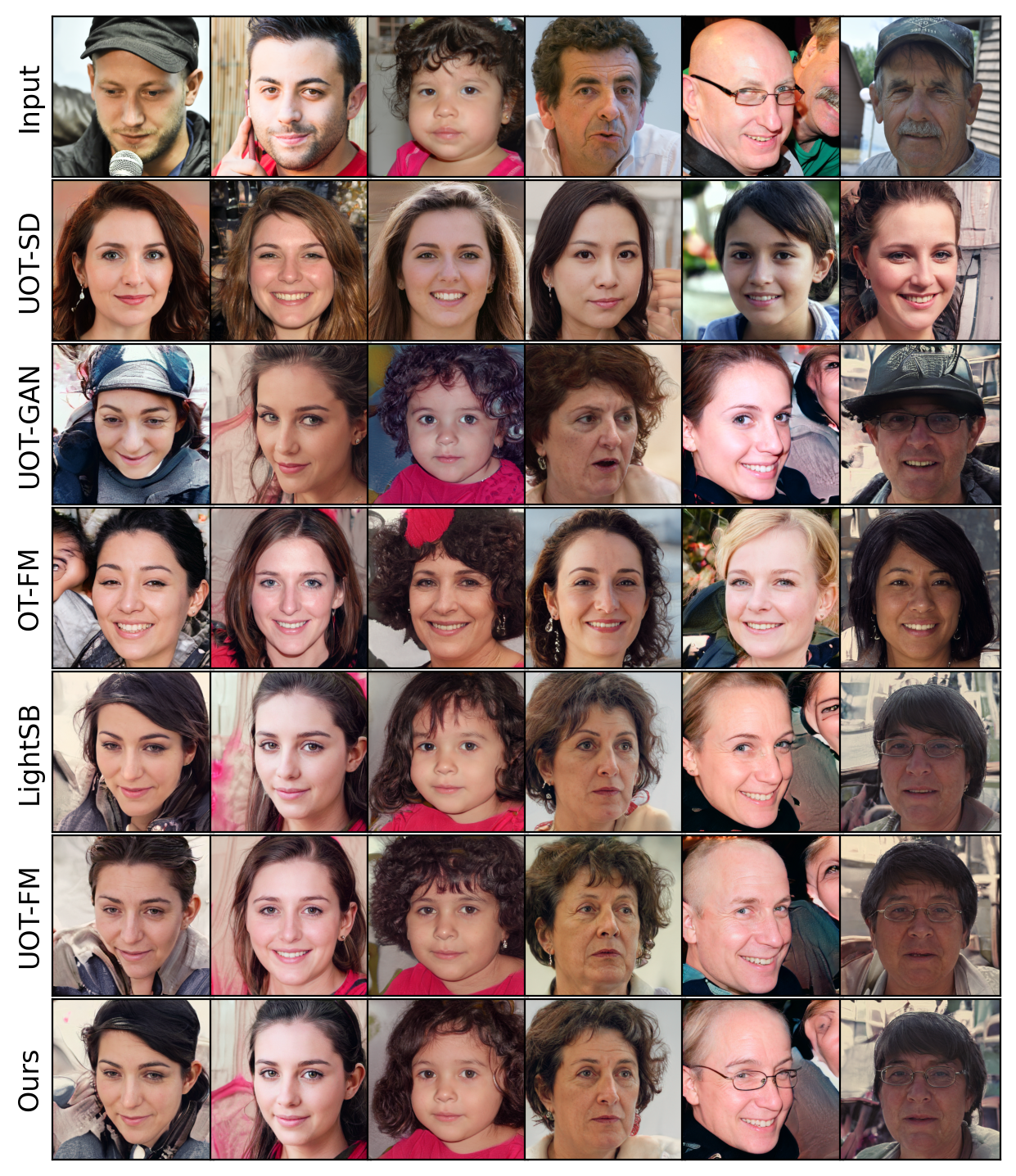}
    \caption{\textit{Man} $\rightarrow$  \textit{Woman}}
    \label{fig:alae-m2w}
\end{subfigure}
\hfill
\begin{subfigure}[b]{0.48\linewidth}
    \includegraphics[width=0.999\linewidth]{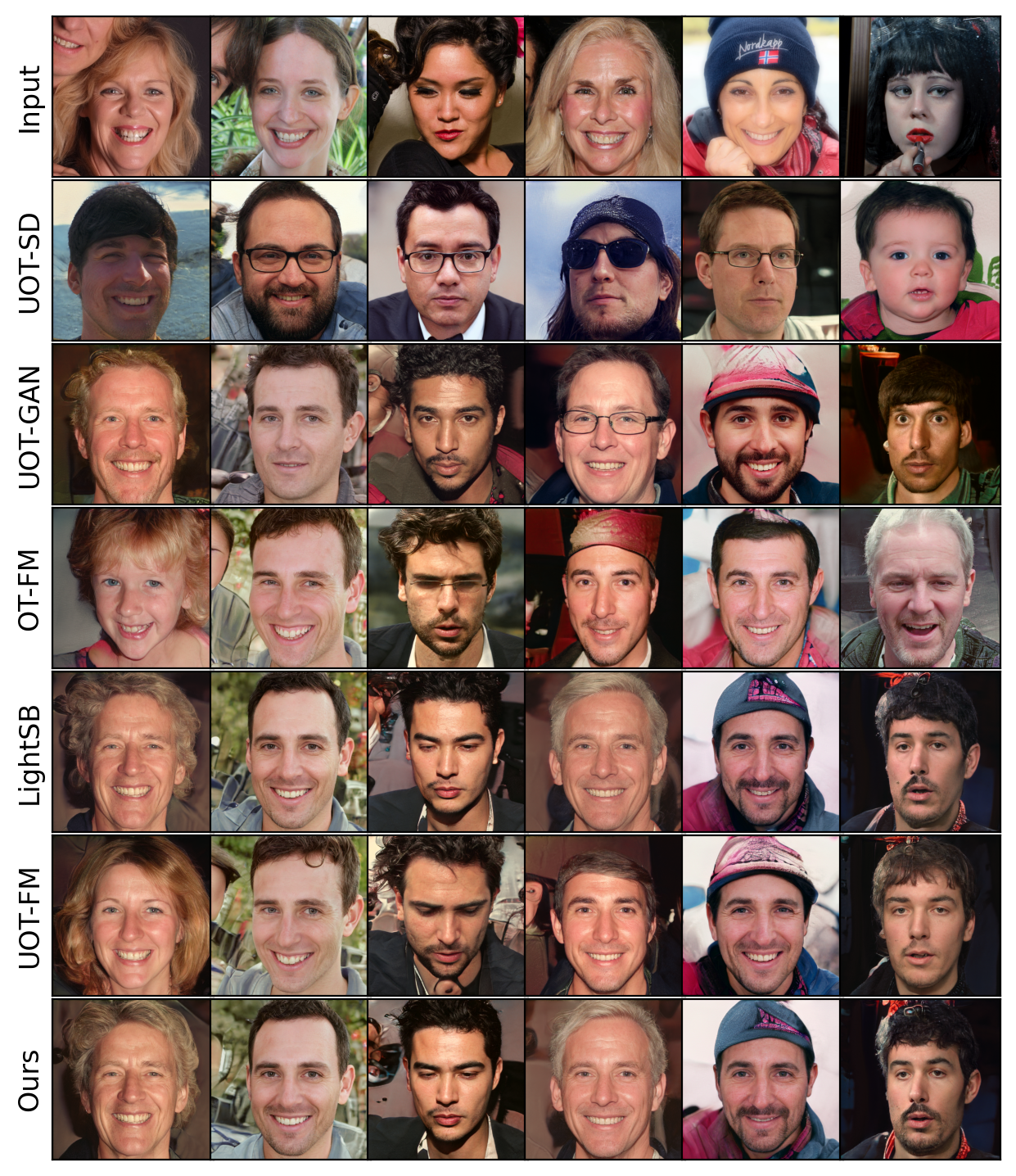}
    \caption{\textit{Woman} $\rightarrow$  \textit{Man}}
    \label{fig:alae-w2m}
\end{subfigure}
\vspace{-1mm}\caption{\centering Unpaired translation with LightSB, OT-FM, UOT-FM and our U-LightOT solvers applied in the latent space of ALAE \cite{pidhorskyi2020adversarial} for FFHQ images \cite{karras2019style} (1024$\times$1024).}
\label{fig:alae}
\end{center}
\vspace*{-5mm}
\end{figure*}

\begin{table}[t!]
    \centering
    \footnotesize
    \begin{tabular}{ c| c c c c c c} 
    \hline
    \textbf{Experiment} & \citep[UOT-FM]{eyring2023unbalancedness} & \citep[UOT-SD]{choi2024generative} & \citep[UOT-GAN]{yang2018scalable} &U-LightOT (\textbf{ours})  \\
    \hline
     \textit{Time (sec)}   &\underline{03:21} &18:11  &16:30  & \textbf{02:38}   \\
     \hline
    \end{tabular}
     \vspace{1mm}
     \caption{\centering 
      Comparison of wall-clock running times of unbalanced OT/EOT solvers in \textit{Young}$\rightarrow$\textit{Adult} translation.  The best results are in \textbf{bold}, second best are \underline{underlined}.}
     \label{table-wall-clock-time}
     \vspace*{-7mm}
\end{table}

\section{Discussion}
\label{sec-discussion}

\vspace{-2mm}
\textbf{Potential impact.} Our light and unbalanced solver has a lot of advantages in comparison with the other existing UEOT solvers. First, it does not require complex max-min optimization. Second, it provides the closed form of the conditional measures $\gamma_{\theta}(y|x)\approx \gamma^*(y|x)$ of the UEOT plan. Moreover, it allows for sampling both from the conditional measure $\gamma_{\theta}(y|x)$ and marginal measure $u_{\omega}(x)\approx \gamma_x^*(x)$. Besides, the decisive superiority of our lightweight and unbalanced solver is its simplicity and convenience of use. Indeed, it has a straightforward and non-minimax optimization objective and avoids heavy neural parametrization. As a result, our lightweight and unbalanced solver converges in minutes on CPU. We expect that these advantages could boost the usage of our solver as a standard and easy baseline for UEOT task with applications in different spheres.

The \underline{limitations and broader impact} of our solver are discussed in Appendix \ref{sec-limitations}.

\textsc{ACKNOWLEDGEMENTS.} The work of Skoltech was supported by the Analytical center under the RF Government (subsidy agreement 000000D730321P5Q0002, Grant No. 70-2021-00145 02.11.2021). We thank Kirill Sokolov, Mikhail Persiianov and Petr Mokrov for providing valuable feedback and suggestions for improving the proofs and clarity of our paper.

\bibliography{bibliography}
\bibliographystyle{abbrvnat}

\newpage
\appendix
\onecolumn
\section{Proofs}
\label{sec-proofs}

\subsection{Proof of Theorem \ref{thm-kl}}
\begin{proof}[Proof]
For convenience, we split this proof into several steps.

\textit{\underline{Step 1.}} To begin with, we recall that the primal UEOT problem \eqref{unbalanced-eot-primal} attains a minimizer $\gamma^*$. Then by substituting $\gamma^*$ directly in \eqref{unbalanced-eot-primal}, we get
\begin{eqnarray}
     \eqref{unbalanced-eot-primal}=\int_{\mathbb{R}^d}\int_{\mathbb{R}^d} \frac{\|x-y\|^2}{2} \gamma^*(x, y) dx dy -
     \varepsilon H(\gamma^*)\!+\Df{f_1}{\gamma_x^*}{p}   + \Df{f_2}{\gamma_y^*}{q}=\nonumber\\
     \int_{\mathbb{R}^d}\int_{\mathbb{R}^d} \frac{\|x-y\|^2}{2} \gamma^*(x, y) dx dy + \varepsilon\int_{\mathbb{R}^{d}}\int_{\mathbb{R}^d}\gamma^*(x,y)\! \log \gamma^*(x,y) dxdy-\varepsilon\|\gamma^*\|_1 + 
     \nonumber\\
     \int_{\mathbb{R}^{d}} f_1\bigg(\frac{\gamma_x^*(x)}{p(x)}\bigg)p(x)dx+\int_{\mathbb{R}^{d}} f_2\bigg(\frac{\gamma_y^*(y)}{q(y)}\bigg)q(y)dy=-\varepsilon\|\gamma^*\|_1+\nonumber\\
     \int_{\mathbb{R}^d}\int_{\mathbb{R}^d} \cancel{\frac{\|x-y\|^2}{2} \gamma^*(x, y)} dx dy + \varepsilon\int_{\mathbb{R}^{d}}\gamma^*(x,y)\! \cdot \underbrace{\varepsilon^{-1}\big(\phi^*(x)\!+\!\psi^*(y)\!-\!\cancel{\frac{\|x-y\|^2}{2}}\big)}_{\log \gamma^*(x,y)=} dxdy\label{line-conn}+\\
       \int_{\mathbb{R}^{d}} f_1\bigg(\frac{\gamma_x^*(x)}{p(x)}\bigg)p(x)dx+\int_{\mathbb{R}^{d}} f_2\bigg(\frac{\gamma_y^*(y)}{q(y)}\bigg)q(y)dy =-\varepsilon\|\gamma^*\|_1+\nonumber\\
     \int_{\mathbb{R}^{d}}\!\gamma^*_x(x) \phi^*(x)dx \!+\!\! \int_{\mathbb{R}^{d}}\gamma^*_y(y) \psi^*(y)dy \!+\!\! \int_{\mathbb{R}^{d}} f_1\bigg(\frac{\gamma_x^*(x)}{p(x)}\bigg)p(x)dx\!+\!\!\int_{\mathbb{R}^{d}} f_2\bigg(\frac{\gamma_y^*(y)}{q(y)}\bigg)q(y)dy\!=\!\nonumber\\
     -\varepsilon\|\gamma^*\|_1 \!+\!\int_{\mathbb{R}^{d}}\!\!\Bigg(\!\!f_1\bigg(\frac{\gamma_x^*(x)}{p(x)}\!\!\bigg)\!+\!\phi^*(x) \frac{\gamma^*_x(x)}{p(x)}\!\!\Bigg)p(x)dx\!+\!\int_{\mathbb{R}^{d}}\!\!\Bigg(\!f_2\bigg(\frac{\gamma_y^*(y)}{q(y)}\bigg)\!+\!\psi^*(y) \frac{\gamma^*_y(y)}{q(y)}\!\!\Bigg)q(y)dy.
     \label{primal-optimal}
\end{eqnarray}
Here in line \eqref{line-conn}, we use equation \eqref{gamma-potentials} which establishes the connection between $\gamma^*$ and the optimal potentials $\phi^*$, $\psi^*$ delivering maximum to the dual UEOT problem \eqref{unbalanced-eot-dual}. Substituting these optimal potentials to the dual problem, we get
\begin{eqnarray}
    \eqref{unbalanced-eot-dual}=
    -\varepsilon \underbrace{\!\!\int_{\mathbb{R}^d}\!\! \int_{\mathbb{R}^d}\! \!\!\exp\{\frac{1}{\varepsilon} \!( \phi^*(x) \!+\! \psi^*(y)\!-\! \frac{\|x-y\|^2}{2}) \} dx dy}_{=\|\gamma^*\|_1} \!- \!\!
    \int_{\mathbb{R}^d} \!\!\overline{f}_1 (-\phi^*(x))p(x) dx \! - \nonumber\\
    \!\!\!\int_{\mathbb{R}^d}\!\! \overline{f}_2 (-\psi^*(y))q(y) dy=
    -\varepsilon \|\gamma^*\|_1 \!- \!\!
    \int_{\mathbb{R}^d} \!\!\overline{f}_1 (-\phi^*(x))p(x) dx \! - \!\!\!\int_{\mathbb{R}^d}\!\! \overline{f}_2 (-\psi^*(y))q(y) dy.
    \label{dual-optimal}
\end{eqnarray}
Note that \eqref{primal-optimal} equals to \eqref{dual-optimal} due to the equivalence between primal and dual UEOT problems, see \wasyparagraph\ref{sec-background}. We will use this fact in further derivations.

For every function $f$, its convex conjugate \begin{eqnarray}
    \overline{f}(t)=\sup_{u\in\mathbb{R}}\{ut-f(u)\}\Longrightarrow -\overline{f}(-t)=-\sup_{u\in\mathbb{R}}\{-ut-f(u)\}=\inf_{u\in\mathbb{R}}\{ut+f(u)\}\Longrightarrow\nonumber\\
    -\overline{f}(-t) \leq ut+f(u) \;\;\forall u\in\mathbb{R}.
    \label{conj-prop}
\end{eqnarray} 
Then for arbitrary $x\in\mathbb{R}^d$, taking $f=f_1$, $u=\frac{\gamma_x^*(x)}{p(x)}$, $t=\phi^*(x)$, we get
\begin{eqnarray}
    -\overline{f_1}(-\phi^*(x)) \leq \frac{\gamma_x^*(x)}{p(x)}\phi^*(x)+f_1\Big(\frac{\gamma_x^*(x)}{p(x)}\Big)\!\!\Longrightarrow\!\label{ineq-0}\\ -\!\!\int_{\mathbb{R}^d}\overline{f_1}(\phi^*(x)) p(x) dx \leq \int_{\mathbb{R}^d} \Bigg(\frac{\gamma_x^*(x)}{p(x)}\phi^*(x)+f_1\Big(\frac{\gamma_x^*(x)}{p(x)}\Big) \Bigg) p(x) dx. \label{ineq-1}
\end{eqnarray}
Similarly, for arbitrary $y\in\mathbb{R}^d$
\begin{eqnarray}
    -\!\!\int_{\mathbb{R}^d}\overline{f_2}(\psi^*(y)) q(y) dy \leq \int_{\mathbb{R}^d} \Bigg(\frac{\gamma_y^*(y)}{q(y)}\psi^*(y)+f_2\Big(\frac{\gamma_y^*(y)}{q(y)}\Big) \Bigg) q(y) dy. \label{ineq-2}
\end{eqnarray}
Since \eqref{primal-optimal}$=$\eqref{dual-optimal}, inequalities \eqref{ineq-0}, \eqref{ineq-1}, \eqref{ineq-2} actually turn to equalities.

\textit{\underline{Step 2.}} Since we consider convex, lower semi-continuous functions $f$, it holds that ${\overline{\overline{f}}=f}$. Then, substituting $\overline{f}$ in formula \eqref{conj-prop}, we derive
\begin{eqnarray}
    -f(-t)=-\overline{\overline{f}}(-t)=\inf_{u\in\mathbb{R}}\{ut+\overline{f}(u)\}\Longrightarrow
    -f(-t) \leq ut+\overline{f}(u) \;\;\forall u\in\mathbb{R}.
    \label{conj-prop2}
\end{eqnarray}
Thus, for arbitrary $x\in\mathbb{R}^d$, taking $f=f_1$, $t=-\frac{\gamma_x^*(x)}{p(x)}$, we get that 
$-f_1\Big(\frac{\gamma_x^*(x)}{p(x)}\Big) \leq u\frac{\gamma_x^*(x)}{p(x)}+\overline{f_1}(-u) \;\forall u\in\mathbb{R}.$ At the same time, from \textit{step 1}, we know that $-f_1\Big(\frac{\gamma_x^*(x)}{p(x)}\Big)= \frac{\gamma_x^*(x)}{p(x)} \phi^*(x) + \overline{f_1}(-\phi^*(x))$, see \eqref{ineq-0}. Then 
\begin{eqnarray}
     \frac{\gamma_x^*(x)}{p(x)} \phi^*(x) + \overline{f_1}(-\phi^*(x)) \leq u\frac{\gamma_x^*(x)}{p(x)}+\overline{f_1}(-u) \;\forall u\in\mathbb{R} \Longrightarrow \nonumber\\
     \frac{\gamma_x^*(x)}{p(x)} (\phi^*(x) - u) \leq (\overline{f_1}(-u) - \overline{f_1}(-\phi^*(x))\;\forall u\in\mathbb{R}.
     \label{conj-prop3-1}
\end{eqnarray}
Similarly, for arbitrary $y\in\mathbb{R}^d$
\begin{eqnarray}
     \frac{\gamma_y^*(y)}{q(y)} (\psi^*(y) - u) \leq (\overline{f_2}(-u) - \overline{f_2}(-\psi^*(y))\;\forall u\in\mathbb{R}.
     \label{conj-prop3-2}
\end{eqnarray}

\textit{\underline{Step 3.}} Now we are ready to prove the main result. We note that:
\begin{eqnarray}
    \KL{\gamma^{*}}{\gamma_{\theta,\omega}}\!=
    \int_{\mathbb{R}^d\times\mathbb{R}^d} \gamma^*(x,y)\log \gamma^*(x,y) dx dy - 
    \nonumber
    \\
    \int_{\mathbb{R}^d\times\mathbb{R}^d} \!\!\!\gamma^*(x,y) \log \gamma_{\theta, \omega}(x,y) dx dy + \!\|\gamma_{\theta,\omega}\|_1 - \|\gamma^*\|_1.
    \label{kl-derivation-bgt}
\end{eqnarray}
While the ground-truth UEOT plan $\gamma^*(x,y)$ and the optimal dual variables $\phi^*(x)$, $\psi^*(y)$ are connected via equation \eqref{gamma-potentials}, our parametrized plan $\gamma_{\theta,\omega}(x,y)$ can be expressed using $\phi_{\theta,\omega}(x)$, $\psi_{\theta}(y)$ as $\gamma_{\theta,\omega}(x,y) = \exp\{\varepsilon^{-1}(\phi_{\theta,\omega}(x)+\psi_{\theta}(y)-\nicefrac{\|x-y\|^2}{2})\}$, see \eqref{parametrization}.
Then
\begin{eqnarray}
    \!\!\!\eqref{kl-derivation-bgt}=\varepsilon^{-1}\!\int_{\mathbb{R}^d\times\mathbb{R}^d}\!\! \gamma^*(x,y)\big(\phi^*(x)\!+\!\psi^*(y)\!-\!\nicefrac{\|x-y\|^2}{2}\big) dx dy \!-
    \nonumber
    \\
    \varepsilon^{-1}\!\int_{\mathbb{R}^d\times\mathbb{R}^d} \!\!\gamma^*(x,y) \big(\phi_{\theta,\omega}(x)\!+\!\psi_{\theta}(y)\!-\!\nicefrac{\|x-y\|^2}{2}\big)dxdy\!+
    \|\gamma_{\theta,\omega}\|_1 - \|\gamma^*\|_1 = 
    \nonumber
    \\
    \varepsilon^{-1} \int_{\mathbb{R}^d\times\mathbb{R}^d} \gamma^*(x,y) \big(\phi^*(x)+\psi^*(y) - \phi_{\theta,\omega}(x)-\psi_{\theta}(y) \big)dxdy  + \|\gamma_{\theta,\omega}\|_1 - \|\gamma^*\|_1=
    \nonumber\\
    \varepsilon^{-1} \int_{\mathbb{R}^d} \frac{\gamma^*_x(x)}{p(x)} \big(\phi^*(x) - \phi_{\theta,\omega}(x) \big)p(x)dx  +\nonumber\\ \varepsilon^{-1} \int_{\mathbb{R}^d} \frac{\gamma^*_y(y)}{q(y)} \big(\psi^*(y) - \psi_{\theta}(y)\big)q(y)dy + \|\gamma_{\theta,\omega}\|_1- \|\gamma^*\|_1\stackrel{\eqref{conj-prop3-1}-\eqref{conj-prop3-2}}{\leq} \nonumber\\
    \varepsilon^{-1} \!\!\int_{\mathbb{R}^d} \!\!( \overline{f}_1(-\phi_{\theta,\omega}{(x)}) \!-\! \overline{f}_1(-\phi^*{(x)})) p(x)dx  \!+
    \nonumber
    \\
    \! \varepsilon^{-1}\!\! \int_{\mathbb{R}^d}\! ( \overline{f}_2(-\psi_{\theta}{(y)}) \!-\! \overline{f}_2(-\psi^*{(y)}))q(y)dy \!+\! \|\gamma_{\theta,\omega}\|_1\!-\! \|\gamma^*\|_1\!=\! 
    \nonumber
    \\
    \varepsilon^{-1}\! \cdot \Big(\!\int_{\mathbb{R}^d} \!\!\! \overline{f}_1(-\phi_{\theta,\omega}{(x)}) p(x)dx \!\!+\!\!\!  \int_{\mathbb{R}^d} \! \!\overline{f}_2(-\psi_{{\theta}}{(y)}) q(y)dy \!+\! \varepsilon \|\gamma_{\theta,\omega}\|_1\!\!-
    \nonumber
    \\
    \! 
    \underbrace{(\!\int_{\mathbb{R}^d}\!\!\overline{f}_1(\!-\!\phi^*{(x)}) p(x)dx \!  +\!\!\! \int_{\mathbb{R}^d}\!\!\overline{f}_2(\!-\!\psi^*{(y)})q(y)dy \!+\! \varepsilon \|\gamma^*\|_1)\!}_{\mathcal{L}^*\defeq}\!\!\Big) \!\!=
    \nonumber\\
     \varepsilon^{-1} (\mathcal{L}(\theta,\omega)-\mathcal{L}^*).
    \nonumber
\end{eqnarray}
\end{proof}

\subsection{Proof of Proposition \ref{prop-estimation}}
We begin with proving the auxiliary theoretical results (Propositions \ref{prop-rademacher-bound}, \ref{prop-rademacher-complexity}) which are needed to prove the main proposition of this section.

\begin{proposition}[Rademacher bound on the estimation error]
\label{prop-rademacher-bound}
It holds that
    $$\mathbb{E}\big[\mathcal{L}(\widehat{\theta})-\mathcal{L}(\overline{\theta})\big]\leq 4\mathcal{R}_{N}(\mathcal{F}_1, p)+4\mathcal{R}_{M}(\mathcal{F}_2, q),$$
    where $\mathcal{F}_1=\{\overline{f}_1 (- \phi_{\theta,\omega}) |(\theta,\omega)\in\Theta\times\Omega\}$, $\mathcal{F}_2=\{\overline{f}_2 (- \psi_{\theta}) |\theta\in\Theta\}$ for $\phi_{\theta,\omega}(x)=\varepsilon \log \frac{u_{\omega}(x)}{c_{\theta}(x)}+\frac{\|x\|^2}{2}$, $\psi_{\theta}(y)=\varepsilon \log v_{\theta}(y)+\frac{\|y\|^2}{2}$,
    and $\mathcal{R}_{N}(\mathcal{F}_1,p)$, $\mathcal{R}_{M}(\mathcal{F}_2,q)$ denote the Rademacher complexity \citep[\wasyparagraph 26]{shalev2014understanding} of the functional classes $\mathcal{U},\;\mathcal{V}$ w.r.t. to the sample sizes $N$, $M$ of distributions $p$, $q$.
    \label{estimation-through-rademacher}
\end{proposition}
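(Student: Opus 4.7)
The plan is to follow the standard symmetrization template from statistical learning theory, tailored to the structure of the objective $\mathcal{L}$. First, I would use the optimality decomposition
\begin{equation*}
\mathcal{L}(\widehat{\theta},\widehat{\omega})-\mathcal{L}(\overline{\theta},\overline{\omega})
=\bigl[\mathcal{L}(\widehat{\theta},\widehat{\omega})-\widehat{\mathcal{L}}(\widehat{\theta},\widehat{\omega})\bigr]
+\bigl[\widehat{\mathcal{L}}(\widehat{\theta},\widehat{\omega})-\widehat{\mathcal{L}}(\overline{\theta},\overline{\omega})\bigr]
+\bigl[\widehat{\mathcal{L}}(\overline{\theta},\overline{\omega})-\mathcal{L}(\overline{\theta},\overline{\omega})\bigr].
\end{equation*}
The middle bracket is $\leq 0$ since $(\widehat{\theta},\widehat{\omega})\in\arg\min\widehat{\mathcal{L}}$, and the other two can be upper bounded uniformly by $\sup_{\theta,\omega}(\mathcal{L}(\theta,\omega)-\widehat{\mathcal{L}}(\theta,\omega))$ and $\sup_{\theta,\omega}(\widehat{\mathcal{L}}(\theta,\omega)-\mathcal{L}(\theta,\omega))$, respectively.

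Next, I would exploit the additive structure of $\mathcal{L}$. The term $\varepsilon\|u_{\omega}\|_{1}$ does not depend on the samples and therefore cancels inside $\mathcal{L}-\widehat{\mathcal{L}}$. The remaining difference splits cleanly as $(\mathcal{L}_{1}-\widehat{\mathcal{L}}_{1})+(\mathcal{L}_{2}-\widehat{\mathcal{L}}_{2})$, where $\mathcal{L}_{1}(\theta,\omega)=\mathbb{E}_{p}[\overline{f}_{1}(-\phi_{\theta,\omega})]$ depends only on the sample $X\sim p$ and $\mathcal{L}_{2}(\theta)=\mathbb{E}_{q}[\overline{f}_{2}(-\psi_{\theta})]$ depends only on $Y\sim q$. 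By the triangle inequality,
\begin{equation*}
\sup_{\theta,\omega}\bigl(\mathcal{L}(\theta,\omega)-\widehat{\mathcal{L}}(\theta,\omega)\bigr)\leq \sup_{f\in\mathcal{F}_{1}}\Bigl(\mathbb{E}_{p}f-\tfrac{1}{N}\sum_{i=1}^{N}f(x_{i})\Bigr)+\sup_{g\in\mathcal{F}_{2}}\Bigl(\mathbb{E}_{q}g-\tfrac{1}{M}\sum_{j=1}^{M}g(y_{j})\Bigr),
\end{equation*}
and analogously for the reversed sign.

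Finally, I would invoke the standard symmetrization lemma (see e.g.\ \citep[\wasyparagraph 26]{shalev2014understanding}): for any i.i.d.\ sample and any class $\mathcal{F}$ one has $\mathbb{E}\sup_{f\in\mathcal{F}}(\mathbb{E}f-\widehat{\mathbb{E}}f)\leq 2\mathcal{R}_{N}(\mathcal{F},p)$, and by the symmetry of Rademacher variables the same bound holds for the reversed direction. Applying this to $\mathcal{F}_{1}$ and $\mathcal{F}_{2}$ gives $\mathbb{E}\sup(\mathcal{L}-\widehat{\mathcal{L}})\leq 2\mathcal{R}_{N}(\mathcal{F}_{1},p)+2\mathcal{R}_{M}(\mathcal{F}_{2},q)$, and the same bound for $\mathbb{E}\sup(\widehat{\mathcal{L}}-\mathcal{L})$. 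Summing the two sup terms and taking expectation in the optimality decomposition yields the claim with the constant $4$.

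No real obstacle is anticipated here, since the entire argument is the textbook uniform-convergence template; the only points that require care are (i) noting that the marginal-mass term $\varepsilon\|u_{\omega}\|_{1}$ is deterministic and thus contributes nothing to the estimation gap, and (ii) separating the two-sample supremum into independent one-sample supremums over $\mathcal{F}_{1}$ and $\mathcal{F}_{2}$ so that the per-class symmetrization can be applied directly. The quantitative control of $\mathcal{R}_{N}(\mathcal{F}_{1},p)$ and $\mathcal{R}_{M}(\mathcal{F}_{2},q)$ themselves (which actually yields the $O(1/\sqrt{N})+O(1/\sqrt{M})$ rate in Proposition~\ref{prop-estimation}) is deferred to a separate lemma and will rely on the Lipschitz assumption on $\overline{f}_{1},\overline{f}_{2}$ together with the boundedness assumptions on $\Theta,\Omega$ via Talagrand's contraction inequality.
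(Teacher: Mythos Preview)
Your proposal is correct and follows exactly the standard symmetrization template that the paper invokes by reference (it defers the proof entirely to \citep[Proposition B.1]{korotin2024light}, \citep[Theorem 4]{mokrov2024energy}, \citep[Theorem 3.4]{taghvaei20192}). The points you flag---cancellation of the deterministic $\varepsilon\|u_{\omega}\|_{1}$ term and the separation into independent one-sample suprema over $\mathcal{F}_{1}$ and $\mathcal{F}_{2}$---are precisely the adaptations needed here, and your deferral of the Rademacher-complexity bounds to a separate lemma mirrors the paper's Proposition~\ref{proposition-rademacher-complexity}.
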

\begin{proof}[Proof of Proposition \ref{estimation-through-rademacher}]
    The derivation of this fact is absolutely analogous to \citep[Proposition B.1]{korotin2024light}, \citep[Theorem 4]{mokrov2024energy} or \citep[Theorem 3.4]{taghvaei20192}.
\end{proof}

\begin{proposition}[Bound on the Rademacher complexity of the considered classes]
\label{prop-rademacher-complexity}
    Let $0<a\leq A$, let $0<u\leq U$, let $0<w\leq W$ and $V>0$. Consider the class of functions 
    \begin{gather}\mathcal{F}_1=\big\{x\mapsto \overline{f}_1 (- \varepsilon \log u_{\omega}(x)+ \varepsilon \log c_{\theta}(x) -\frac{\|x\|^2}{2})\big\},\nonumber\\ 
    \mathcal{F}_2=\big\{y\mapsto \overline{f}_2 (- \varepsilon \log v_{\theta}(y) -\frac{\|y\|^2}{2})\big\} \text{ where}\nonumber\\
    u_{\omega}(x),\; v_{\theta}(y),\; c_{\theta}(x) \text{ belong to the class} \nonumber\\
    \mathcal{V}=\big\{x\mapsto  \sum_{k=1}^{K}\alpha_{k}\exp\big(x^{T}U_{k}x+v_{k}^{T}x+w_{k})\text{ with }\\ uI\preceq U_{k}=U_{k}^{T}\preceq UI; \|v_{k}\|\leq V; w\leq w_{k}\leq W; a\leq \alpha_{k}\leq A \big\}.\nonumber
    \end{gather}
    Following \citep[Proposition B.2]{korotin2024light}, we call the functions of the class $\mathcal{V}$ as constrained log-sum-exp quadratic functions. We assume that $\overline{f}_1,\;\overline{f}_2$ are Lipshitz functions and measures $p$, $q$ are compactly supported with the supports lying in a zero-centered ball of a radius $R>0$. Then
    $$\mathcal{R}_{N}(\mathcal{F}_1,p)\leq \frac{C_0}{\sqrt{N}},\;\mathcal{R}_{M}(\mathcal{F}_2,q)\leq \frac{C_1}{\sqrt{M}}$$
    where the constants $C_0$, $C_1$ \textbf{do not depend} on sizes $N$, $M$ of the empirical samples from $p,\;q$.
    \label{proposition-rademacher-complexity}
\end{proposition}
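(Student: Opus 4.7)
My plan is to peel off the nested compositions in the definitions of $\mathcal{F}_1,\mathcal{F}_2$ via the Talagrand contraction lemma, reducing everything to the Rademacher complexity of the parametric class $\mathcal{V}$ itself, which is handled in \citep[Proposition B.2]{korotin2024light}. Three sources of Lipschitzness are exploited along the way: the assumed Lipschitzness of $\overline{f}_1,\overline{f}_2$; the Lipschitzness of $\log(\cdot)$ on a range bounded away from zero; and the smooth dependence of the generators of $\mathcal{V}$ on their (compactly varying) parameters.

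The first step applies the contraction lemma with the Lipschitz constants $L_1,L_2$ of $\overline{f}_1,\overline{f}_2$, leaving us with the classes $\mathcal{G}_1=\{-\varepsilon\log u_\omega + \varepsilon\log c_\theta - \tfrac12\|x\|^2\}$ and $\mathcal{G}_2=\{-\varepsilon\log v_\theta - \tfrac12\|y\|^2\}$. The $\|\cdot\|^2$ terms do not depend on the learnable parameters, so they contribute $0$ to the Rademacher complexity (the Rademacher average of a fixed function has zero expectation). By subadditivity of $\mathcal{R}_N$ over sums of classes, and the sign symmetry $\mathcal{R}_N(-\mathcal{H})=\mathcal{R}_N(\mathcal{H})$, it therefore suffices to bound $\mathcal{R}_N(\log\mathcal{V},p)$ and $\mathcal{R}_M(\log\mathcal{V},q)$. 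Next, using that $p,q$ are supported in a ball of radius $R$ together with the parameter bounds defining $\mathcal{V}$, every $v\in\mathcal{V}$ satisfies $0<m\leq v(x)\leq M$ on those supports, with explicit constants (e.g.\ $m=Ka\,e^{w-VR}$ and $M=KA\,e^{UR^2+VR+W}$). Hence $\log$ is $1/m$-Lipschitz on the observed range, and a second application of Talagrand gives $\mathcal{R}_N(\log\mathcal{V},p)\leq (1/m)\,\mathcal{R}_N(\mathcal{V},p)$, and analogously for $q$.

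The final step is to invoke \citep[Proposition B.2]{korotin2024light}, whose covering-number/Dudley-integral argument over the compact parameter space $\{(U_k,v_k,w_k,\alpha_k)\}_{k=1}^K$ delivers $\mathcal{R}_N(\mathcal{V},p)\leq C/\sqrt{N}$ with $C$ depending only on $K,R,u,U,V,w,W,a,A,\varepsilon$. Stitching the three contractions together gives the advertised $\mathcal{R}_N(\mathcal{F}_1,p)\leq C_0/\sqrt{N}$ and symmetrically $\mathcal{R}_M(\mathcal{F}_2,q)\leq C_1/\sqrt{M}$. The main obstacle is this last step: while the $1/\sqrt{N}$ parametric rate is classical, one must carefully check that \citep[Proposition B.2]{korotin2024light} applies verbatim to the present instance and, in particular, track how the Lipschitz modulus of the parametric map $(U_k,v_k,w_k,\alpha_k)\mapsto \sum_k\alpha_k\exp(x^TU_kx+v_k^Tx+w_k)$ on the compact domain interacts with the $1/m$ amplification coming from the $\log$-contraction step, so that the final constants $C_0,C_1$ depend only on the stated bounds and not on $N,M$.
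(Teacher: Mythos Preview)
Your proposal is correct and follows essentially the same strategy as the paper: remove the fixed quadratic (zero Rademacher complexity), use subadditivity/scaling, apply Talagrand's contraction for the Lipschitz $\overline{f}_i$, and invoke \citep[Proposition B.2]{korotin2024light} for the parametric class. The only minor discrepancy is that the cited Proposition B.2 already bounds the Rademacher complexity of the \emph{log-sum-exp} class $x\mapsto \log u_\omega(x)$, $\log c_\theta(x)$, $\log v_\theta(y)$ directly, so your intermediate $\log$-contraction step (bounding $\mathcal{V}$ away from zero and then passing through the $1/m$-Lipschitz logarithm) is unnecessary---the paper goes straight from Prop.~B.2 to the bound on $\log\mathcal{V}$ without it.
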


\begin{proof}[Proof of Proposition \ref{proposition-rademacher-complexity}]
    Thanks to \citep[Proposition B.2]{korotin2024light}, the Rademacher complexities of constrained log-sum-exp quadratic functions $x\mapsto \log u_{\omega}(x)$, $x\mapsto \log  c_{\theta}(x)$ and $y\mapsto  \log v_{\theta}(y)$ are known to be bounded by $O(\frac{1}{\sqrt{N}})$ or $O(\frac{1}{\sqrt{M}})$ respectively. According to the definition of Rademacher complexity, for single quadratic functions $x\mapsto \frac{x^T x}{2}$ ($y\mapsto \frac{y^T y}{2}$) it is just equal to zero. 
    Then, using the well-known scaling and additivity properties of the Rademacher complexity \citep{shalev2014understanding}, we get that $x\mapsto - \varepsilon \log u_{\omega}(x)+ \varepsilon \log c_{\theta}(x) -\frac{\|x\|^2}{2}$ and $y\mapsto - \varepsilon \log v_{\theta}(y) -\frac{\|y\|^2}{2}$ are bounded by $O(\frac{1}{\sqrt{N}})$ and $O(\frac{1}{\sqrt{M}})$ respectively. The remaining step is to
    recall that $\overline{f}_1(x)$ and $\overline{f}_2(y)$ are Lipschitz. Therefore, according to Talagrand's contraction principle \citep{mohri2018foundations}, the Rademaher complexities of $\mathcal{F}_{1}$ and $\mathcal{F}_{2}$ are also bounded by $O(\frac{1}{\sqrt{N}})$ and $O(\frac{1}{\sqrt{M}})$, respectively.
\end{proof}

\begin{proof}[Proof of Proposition \ref{prop-estimation}]
    The proof directly follows from Propositions \ref{estimation-through-rademacher} and \ref{proposition-rademacher-complexity}.
\end{proof}

\subsection{Proof of Theorem \ref{thm-universal-approximation}}

\label{appendix-dual-form}
To begin with, we provide a quick reminder about the Fenchel-Rockafellar theorem which is needed to derive the dual form of problem \eqref{unbalanced-eot-primal}.

\begin{theorem}[Fenchel-Rockafellar \citep{rockafellar1967duality}]
    Let $(E, E')$ and $(F, F')$ be two couples of topologically paired spaces. Let $A : E \mapsto F$ be a continuous linear operator and $\overline{A} : F' \mapsto E'$ be its adjoint. Let $f$ and $g$ be lower semi-continuous and proper convex functions defined on $E$ and $F$ respectively. If there exists $x\in \text{dom} f$ s.t. $g$ is continuous at $Ax$, then
    $$\sup_{x\in E} -f(-x) -g (Ax) = \min_{\overline{y}\in F'} \overline{f}(\overline{A} \; \overline{y}) + \overline{g}(\overline{y})
    $$
    and the min is attained. Moreover, if there exists a maximizer $x\in E$ then there exists $\overline{y} \in F'$ satisfying $Ax \in \partial \overline{g}(\overline{y})$ and $\overline{A} \overline{y} \in \partial f(-x)$.
\end{theorem}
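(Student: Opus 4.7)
The plan is to prove the Fenchel--Rockafellar duality by the standard \emph{perturbation} technique: associate to the primal problem a convex value function on the space of perturbations, use the constraint qualification to make that value function continuous (hence equal to its biconjugate) at the origin, and then read off strong duality, attainment of the dual minimum, and the extremality relations from subdifferential calculus.

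\textbf{Weak duality.} First I would verify $\sup\leq\min$ directly. For any $x\in E$ and $\overline{y}\in F'$, applying the Fenchel--Young inequality on the pairs $(-x,\overline{A}\,\overline{y})$ and $(Ax,\overline{y})$ yields $f(-x)+\overline{f}(\overline{A}\,\overline{y})\geq \langle -x,\overline{A}\,\overline{y}\rangle=-\langle Ax,\overline{y}\rangle$ and $g(Ax)+\overline{g}(\overline{y})\geq\langle Ax,\overline{y}\rangle$; adding gives $-f(-x)-g(Ax)\leq \overline{f}(\overline{A}\,\overline{y})+\overline{g}(\overline{y})$, hence the weak-duality inequality.

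\textbf{Strong duality via perturbation.} Next I would introduce the value function $h:F\to\mathbb{R}\cup\{\pm\infty\}$ defined by $h(z)\defeq \inf_{x\in E}\{f(-x)+g(Ax+z)\}$, so that $-h(0)$ is exactly the primal supremum; $h$ is convex as the partial infimum of a jointly convex function of $(x,z)$. A short calculation (substitute $y=Ax+z$ in the supremum defining $h^{*}$, then split) gives $h^{*}(\overline{y})=\overline{f}(\overline{A}\,\overline{y})+\overline{g}(\overline{y})$, so $-h^{**}(0)$ is the dual infimum. Strong duality therefore reduces to showing $h(0)=h^{**}(0)$. To obtain this I would exploit the hypothesis: at the admissible $x_{0}$ with $g$ continuous at $Ax_{0}$, local boundedness of $g$ near $Ax_{0}$ together with continuity of $A$ shows that $h$ is bounded above on a neighborhood of $0\in F$. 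A proper convex function that is bounded above on a neighborhood of a point is continuous there, and continuity of a convex function at a point forces it to agree with its biconjugate at that point (Fenchel--Moreau; equivalently, a Hahn--Banach separation of the epigraph from the point below).

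\textbf{Attainment and extremality.} Continuity of $h$ at $0$ also gives $\partial h(0)\neq\varnothing$. Any $\overline{y}^{*}\in\partial h(0)$ satisfies the Fenchel--Young equality $h(0)+h^{*}(\overline{y}^{*})=0$, so $\overline{y}^{*}$ realizes $\inf_{\overline{y}}h^{*}(\overline{y})=-h(0)$ and the dual minimum is attained. Finally, if $x\in E$ is any primal maximizer and $\overline{y}\in F'$ any dual minimizer, strong duality forces both Fenchel--Young inequalities from the weak-duality step to be equalities; by the equality case of Fenchel--Young this is exactly $\overline{A}\,\overline{y}\in\partial f(-x)$ and $Ax\in\partial \overline{g}(\overline{y})$, which are the claimed extremality relations. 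The main obstacle is the topological part of the strong-duality step: in the paired-spaces setting one must lift the pointwise continuity of $g$ at $Ax_{0}$, through the continuous linear map $A$, into genuine local upper-boundedness of $h$ on a neighborhood of $0\in F$, and then invoke the appropriate form of Hahn--Banach to conclude $h(0)=h^{**}(0)$; once this continuity is secured, the remaining arguments are routine convex analysis.
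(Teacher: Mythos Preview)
Your perturbation-function argument is the standard and correct proof of Fenchel--Rockafellar duality. However, the paper does not prove this theorem at all: it is stated as a classical result with a citation to \citep{rockafellar1967duality} and is used only as a black box in the subsequent derivation of the UEOT dual form (Theorem~\ref{aux-theorem-dual}). So there is nothing to compare here---you have supplied a proof where the paper simply invokes the literature.
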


We note that in the below theorem $\mathcal{C}_2(\mathbb{R}^d)$ \textbf{does not} denote the space of twice differentiable continuous functions. The exact definition of this space and $\mathcal{C}_{2,b}(\mathbb{R}^d)$ is given in \textbf{notations} part.

\begin{theorem}[Dual form of problem \eqref{unbalanced-eot-primal}]\label{aux-theorem-dual}
The primal UEOT problem \eqref{unbalanced-eot-primal} has the dual counterpart \eqref{unbalanced-eot-dual} where the potentials $(\phi,\psi)$ belong to the space $\mathcal{C}_{2,b}(\mathbb{R}^d)\times \mathcal{C}_{2,b}(\mathbb{R}^d)$. The minimum of \eqref{unbalanced-eot-primal} is attained for a unique $\gamma^*\in \mathcal{M}_{2,+}(\mathbb{R}^d\times\mathbb{R}^d)$.
\end{theorem}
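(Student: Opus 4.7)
The plan is to apply the Fenchel--Rockafellar theorem stated just above. First I would set up the topologically paired spaces: take $E := \mathcal{C}_{2,b}(\mathbb{R}^d)\times\mathcal{C}_{2,b}(\mathbb{R}^d)$ paired (via integration) with a suitable space $E'$ of Radon measures with finite second moment, and $F := \mathcal{C}_{2,b}(\mathbb{R}^d\times\mathbb{R}^d)$ paired with $F' := \mathcal{M}_2(\mathbb{R}^d\times\mathbb{R}^d)$. Define the continuous linear operator $A:E\to F$ by $A(\phi,\psi)(x,y) := \phi(x)+\psi(y)$; its adjoint $\overline{A}:F'\to E'$ maps a measure $\gamma$ to the pair of its marginals $(\gamma_x,\gamma_y)$, since $\int(\phi(x)+\psi(y))d\gamma = \int\phi\,d\gamma_x + \int\psi\,d\gamma_y$.

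Next I would introduce the two convex functionals $f(\phi,\psi) := \int\overline{f}_1(\phi(x))p(x)dx + \int\overline{f}_2(\psi(y))q(y)dy$ and $g(h) := \varepsilon\int\int\exp\{\varepsilon^{-1}(h(x,y)-\tfrac{\|x-y\|^2}{2})\}dxdy$. A direct check shows that $-f(-(\phi,\psi))-g(A(\phi,\psi))$ is precisely the dual objective \eqref{unbalanced-eot-dual}. The core computation is the pair of conjugates. Using a pointwise Fenchel--Young argument together with $\overline{\overline{f}_i}=f_i$, one obtains $\overline{f}(\mu_1,\mu_2) = \Df{f_1}{\mu_1}{p} + \Df{f_2}{\mu_2}{q}$. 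For $\overline{g}(\gamma)$, the substitution $\tilde{h}(x,y) = h(x,y)-\|x-y\|^2/2$ reduces the sup to a pointwise maximization of $u\cdot\gamma(x,y) - \varepsilon\exp(u/\varepsilon)$; this is $+\infty$ unless $\gamma\geq 0$, and in the a.c.\ case the optimizer $u^* = \varepsilon\log\gamma(x,y)$ yields $\overline{g}(\gamma) = \int\tfrac{\|x-y\|^2}{2}\gamma\,dxdy - \varepsilon H(\gamma)$. Adding these two reproduces the primal functional \eqref{unbalanced-eot-primal} restricted to $\mathcal{M}_{2,+}(\mathbb{R}^d\times\mathbb{R}^d)$.

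The main technical obstacle is verifying the qualification hypothesis of Fenchel--Rockafellar: existence of $(\phi_0,\psi_0)\in\mathrm{dom}(f)$ at which $g$ is continuous. I would take $(\phi_0,\psi_0)$ to be sufficiently negative constants, so that $\overline{f}_i(\phi_0)$ and $\overline{f}_i(\psi_0)$ stay finite by the standing assumption that $\overline{f}_i$ is non-decreasing. Combining the upper-boundedness built into $\mathcal{C}_{2,b}$ with the Gaussian factor $\exp(-\|x-y\|^2/(2\varepsilon))$ yields a uniform integrable majorant on a neighborhood of $A(\phi_0,\psi_0)$, hence continuity of $g$ at that point by dominated convergence. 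Calibrating the topologies on $\mathcal{C}_{2,b}$ and $\mathcal{C}_{2,b}(\mathbb{R}^d\times\mathbb{R}^d)$ so that $A$ is continuous, $\overline{A}$ lands in the claimed measure space, and $f, g$ are proper, convex and lower semi-continuous is the bookkeeping that requires the most care, but it is standard once the dominating Gaussian argument is in hand.

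Once these hypotheses are checked, Fenchel--Rockafellar delivers both the duality identity and the existence of a primal minimizer $\gamma^*\in\mathcal{M}_{2,+}(\mathbb{R}^d\times\mathbb{R}^d)$. For uniqueness I would argue separately: by Remark 1 any minimizer must be a.c., and $-\varepsilon H(\cdot)$ is strictly convex on the cone of a.c.\ measures, while the quadratic cost $\int\tfrac{\|x-y\|^2}{2}\gamma\,dxdy$ is linear in $\gamma$ and the marginal penalties $\Df{f_1}{\gamma_x}{p}, \Df{f_2}{\gamma_y}{q}$ are convex. Hence the primal objective is strictly convex on its effective domain, which forces uniqueness of $\gamma^*$.
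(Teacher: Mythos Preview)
Your overall architecture---set up paired spaces, define $A$ via $(\phi,\psi)\mapsto\phi\oplus\psi$ with adjoint the marginal map, compute the two conjugates, invoke Fenchel--Rockafellar, and argue uniqueness from strict convexity of the entropy---matches the paper, and your conjugate computations for $\overline{f}$ and $\overline{g}$ are the right ones.

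The gap is the choice $E = \mathcal{C}_{2,b}(\mathbb{R}^d)\times\mathcal{C}_{2,b}(\mathbb{R}^d)$. The set $\mathcal{C}_{2,b}(\mathbb{R}^d)$ is \emph{not} a vector space: for instance $\phi(x)=-\|x\|^2$ lies in $\mathcal{C}_{2,b}$ but $-\phi(x)=\|x\|^2$ is unbounded above and does not. Fenchel--Rockafellar is stated for topologically paired \emph{linear} spaces, and already the expression $-f(-(\phi,\psi))$ in the theorem presupposes $-(\phi,\psi)\in E$. So the duality machinery cannot be run directly on $\mathcal{C}_{2,b}$. (Your dominated-convergence argument for continuity of $g$ leans exactly on the upper bound in $\mathcal{C}_{2,b}$; this is a symptom of the same issue---over the honest vector space $\mathcal{C}_2$, $g$ can be $+\infty$ and the qualification needs to be checked differently.)

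The paper's remedy is to carry out Fenchel--Rockafellar with $E=\mathcal{C}_2(\mathbb{R}^d)\times\mathcal{C}_2(\mathbb{R}^d)$, which \emph{is} a vector space, paired with measures of finite second moment via a lemma of Gozlan et al. This yields \eqref{unbalanced-eot-primal}$=$\eqref{unbalanced-eot-dual} with the supremum ranging over $\mathcal{C}_2\times\mathcal{C}_2$. Only afterwards does the paper pass to $\mathcal{C}_{2,b}\times\mathcal{C}_{2,b}$, by a separate truncation step: for each $(\phi,\psi)\in\mathcal{C}_2\times\mathcal{C}_2$ the dual objective evaluated at $(\phi\wedge k_1,\psi\wedge k_2)$ converges to its value at $(\phi,\psi)$ as $k_1,k_2\to\infty$, and since every truncated pair lies in $\mathcal{C}_{2,b}\times\mathcal{C}_{2,b}$, the two suprema coincide. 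Your proposal is missing this restriction step; moving the Fenchel--Rockafellar application to $\mathcal{C}_2$ and appending the truncation argument would complete it.
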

\begin{proof}
    We recall that in the primal form, the minimization is performed over functions $\gamma$ belonging to $\mathcal{M}_{2,+}(\mathbb{R}^d\times\mathbb{R}^d)$. In this proof, we suppose that this space is endowed with a coarsest topology $\sigma(\mathcal{M}_{2,+}(\mathbb{R}^d\times\mathbb{R}^d))$ which makes continuous the linear functionals $\gamma \mapsto \int \zeta d \gamma, \; \forall \zeta\in \mathcal{C}_{2}(\mathbb{R}^d\times\mathbb{R}^d)$. 
    Then the topological space $\big(\mathcal{M}_{2,+}(\mathbb{R}^d\times\mathbb{R}^d), \sigma(\mathcal{M}_{2,+}(\mathbb{R}^d\times\mathbb{R}^d))\big)$ has a topological dual $\big(\mathcal{M}_{2,+}(\mathbb{R}^d\times\mathbb{R}^d), \sigma(\mathcal{M}_{2,+}(\mathbb{R}^d\times\mathbb{R}^d))\big)'$  which, actually, is (linear) isomorphic to the space $\mathcal{C}_2(\mathbb{R}^d\times\mathbb{R}^d)$, see \citep[Lemma 9.9]{gozlan2017kantorovich}. This fact opens an opportunity to apply the well-celebrated Fenchel-Rockafellar theorem.  For this purpose, we will consider the following spaces: $E\defeq \mathcal{C}_2(\mathbb{R}^d)\times \mathcal{C}_2(\mathbb{R}^d)$, $F\defeq \mathcal{C}_2(\mathbb{R}^d\times \mathbb{R}^d)$ and their duals $E'\defeq \mathcal{M}_{2,+}(\mathbb{R}^d)\times \mathcal{M}_{2,+}(\mathbb{R}^d)$ and $F'\defeq \mathcal{M}_{2,+}(\mathbb{R}^d\times\mathbb{R}^d)$.
 
    {\underline{\textit{Step 1.}}} Recall that the convex conjugate of any function $g:\mathcal{M}_{2,+}(\mathbb{R}^d\times\mathbb{R}^d) \rightarrow \mathbb{R}\cup\{+\infty\}$ is defined for each $\zeta\in (\mathcal{M}_{2,+}(\mathbb{R}^d\times\mathbb{R}^d))'\cong \mathcal{C}_2(\mathbb{R}^d\times\mathbb{R}^d)$ as $\overline{g}(\zeta)=\sup_{\gamma\in\mathcal{M}_{2,+}(\mathbb{R}^d\times\mathbb{R}^d)} \lbrace\langle \gamma, \zeta \rangle - g(\gamma)\rbrace$. For the convenience of further derivations, we introduce additional functionals corresponding to the summands in the primal UEOT problem \eqref{unbalanced-eot-primal}:
    \begin{eqnarray}
        P(\gamma)\defeq \int_{\mathbb{R}^d}\int_{\mathbb{R}^d} \frac{\|x-y\|^2}{2} \gamma(x, y) dx dy - 
     \varepsilon H(\gamma);\;\;\;
     \nonumber\\
     F_1(\gamma_x)\defeq \Df{f_1}{\gamma_x}{p};\;\;\;
     F_2(\gamma_y)\defeq \Df{f_2}{\gamma_y}{q}.
    \end{eqnarray}

    For our purposes, we need to calculate the convex conjugates of these functionals. Fortunately, convex conjugates of $f$-divergences $F_1(\gamma_x)$ 
    and $F_2(\gamma_y)$ 
    are well-known, see \citep[Proposition 23]{agrawal2021optimal}, and equal to
    \begin{eqnarray}
        \overline{F_1}(\phi) \defeq \int_{\mathbb{R}^d} \overline{f_1} (\phi(x)) p(x) dx, \;\;\;
        \overline{F_2}(\psi) \defeq \int_{\mathbb{R}^d} \overline{f_2} (\psi(y)) q(y) dy.
        \nonumber
    \end{eqnarray}
    To proceed, we calculate the convex conjugate of $P(\gamma)$:
    \begin{eqnarray}
        \overline{P}(\zeta)=\overline{\int_{\mathbb{R}^d}\int_{\mathbb{R}^d} \frac{\|x-y\|^2}{2} \gamma(x, y) dx dy - 
     \varepsilon H(\gamma)\!}=
        \nonumber\\
        =\sup_{\gamma\in\mathcal{M}_{2,+}  (\mathbb{R}^d\times\mathbb{R}^d)} \Big\lbrace 
        \int_{\mathbb{R}^d}\int_{\mathbb{R}^d} \zeta(x, y) \gamma(x, y) dx dy -
        \big(\int_{\mathbb{R}^d}\int_{\mathbb{R}^d} \frac{\|x-y\|^2}{2} \gamma(x, y) dx dy - \nonumber\\ 
      \varepsilon H(\gamma)\! \big)
      \Big\rbrace  
     = \nonumber\\
     \sup_{\gamma\in\mathcal{M}_{2,+}(\mathbb{R}^d\times\mathbb{R}^d)}
     \Big\lbrace 
     \int_{\mathbb{R}^d}\int_{\mathbb{R}^d} \big(\zeta(x,y)- \frac{\|x-y\|^2}{2}\big) \gamma(x,y) dx dy +
     \nonumber\\
     \varepsilon \underbrace{\Big(\int_{\mathbb{R}^d}\int_{\mathbb{R}^d} -\gamma(x,y)\log \gamma(x, y) dx dy {\color{black}} +\|\gamma\|_1\!\Big)}_{=H(\gamma)}
     \Big\rbrace
     =\nonumber\\
     \sup_{\gamma\in\mathcal{M}_{2,+}(\mathbb{R}^d\times\mathbb{R}^d)} \varepsilon \cdot 
     \Big\lbrace 
     \int_{\mathbb{R}^d}\int_{\mathbb{R}^d} \gamma(x,y) \Big(\frac{\zeta(x,y)- \frac{\|x-y\|^2}{2}}{\varepsilon} - \log \gamma(x, y)\Big) dx dy {\color{black}} +\|\gamma\|_1\!\Big\rbrace = \nonumber\\
      \sup_{\gamma\in\mathcal{M}_{2,+} (\mathbb{R}^d\times\mathbb{R}^d)} (-
     \varepsilon) \cdot 
     \Big\lbrace \int_{\mathbb{R}^d}\int_{\mathbb{R}^d} \gamma(x,y) \big(\log \gamma(x, y)-\frac{\zeta(x,y)- \frac{\|x-y\|^2}{2}}{\varepsilon} \big) dx dy {\color{black}} - \|\gamma\|_1\!\Big\rbrace = \nonumber\\  \sup_{\gamma\in\mathcal{M}_{2,+} (\mathbb{R}^d\times\mathbb{R}^d)} (-
     \varepsilon) \cdot 
     \Big\lbrace \int_{\mathbb{R}^d}\int_{\mathbb{R}^d} \gamma(x,y) \big(\log \gamma(x, y)-\frac{\zeta(x,y)- \frac{\|x-y\|^2}{2}}{\varepsilon} \big) dx dy {\color{black}}- \|\gamma\|_1 + \nonumber\\
     \underbrace{\int_{\mathbb{R}^d}\int_{\mathbb{R}^d} \exp\lbrace\frac{\zeta(x,y)- \frac{\|x-y\|^2}{2}}{\varepsilon}\rbrace dx dy - \int_{\mathbb{R}^d}\int_{\mathbb{R}^d} \exp\lbrace\frac{\zeta(x,y)- \frac{\|x-y\|^2}{2}}{\varepsilon}\rbrace dx dy}_{=0}
     \Big\rbrace =
     \label{conj-to-kl-before}
     \\
     \sup_{\gamma\in\mathcal{M}_{2,+}(\mathbb{R}^d\times\mathbb{R}^d)} (-\varepsilon) \cdot \Bigg\lbrace\KL{\gamma}{\exp\lbrace\frac{\zeta(x,y)- \frac{\|x-y\|^2}{2}}{\varepsilon}\rbrace} {\color{black}} - \nonumber\\
     \int_{\mathbb{R}^d}\int_{\mathbb{R}^d} \exp\lbrace\frac{\zeta(x,y)- \frac{\|x-y\|^2}{2}}{\varepsilon}\rbrace dx dy\Bigg\rbrace.
     \label{conjugate-first-summand}
    \end{eqnarray}
    Here in the transition from \eqref{conj-to-kl-before} to \eqref{conjugate-first-summand}, we keep in mind our prior calculations of  $\text{D}_{\text{KL}}$  in \eqref{kl-derivation-bgt}. Recall that  $\text{D}_{\text{KL}}$  is non-negative and attains zero at the unique point 
    \begin{eqnarray}
        \gamma(x,y)=\exp\{\frac{\zeta(x,y)- \frac{\|x-y\|^2}{2}}{\varepsilon}\}.
        \label{optimal-gamma}
    \end{eqnarray}
    Thus, we get
    \begin{eqnarray}
        \overline{P}(\zeta)= 
        \varepsilon \int_{\mathbb{R}^d}\int_{\mathbb{R}^d} \exp\{\frac{\zeta(x,y)- \frac{\|x-y\|^2}{2}}{\varepsilon}\}dx dy {\color{black}}.
    \end{eqnarray}

    {\underline{\textit{Step 2.}}} Now we are ready to apply the Fenchel-Rockafellar theorem in our case. To begin with, we show that this theorem is applicable to problem \eqref{unbalanced-eot-primal}, i.e., that the functions under consideration satisfy the necessary conditions.
    Indeed, it is known that the convex conjugate of any functional (e.g., $\overline{F_1}(\cdot)$, $\overline{F_2}(\cdot)$, $\overline{P}(\cdot)$) is \textbf{lower semi-continuous} and \textbf{convex}. Besides, the listed functionals are \textbf{proper convex}\footnote{\textit{Proper convex} function is a real-valued convex function which has a non-empty domain, never attains the value $(-\infty)$ and is not identically equal to $(+\infty)$. This property ensures that the minimization problem for this function has non-trivial solutions.}. Indeed, the properness of $\overline{F_1}(\cdot)$ and $\overline{F_2}(\cdot)$ follows from the fact that $f$-divergences are known to be lower-semicontinuous and proper themselves, while properness of $\overline{P}(\cdot)$ is evident from \eqref{conjugate-first-summand}. 
    
    Now we consider the linear operator $A: \mathcal{C}_2(\mathbb{R}^d)\times \mathcal{C}_2(\mathbb{R}^d ) \mapsto \mathcal{C}_2(\mathbb{R}^d\times \mathbb{R}^d)$ which is defined as $A(\phi,\psi): (x,y) \mapsto \phi(x) + \psi(y)$. It is continuous, and its adjoint is defined on $\mathcal{M}_{2,+}(\mathbb{R}^d\times\mathbb{R}^d)$ as $\overline{A}(\gamma)=(\gamma_x, \gamma_y)$. Indeed, $\langle \overline{A}(\gamma), (u,v)\rangle=\langle \gamma, A(u,v)\rangle=\int_{\mathbb{R}^d} \int_{\mathbb{R}^d} \gamma(x, y) (u(x)+v(y)) dx dy = \int_{\mathbb{R}^d} \gamma_x(x) u(x) dx + \int_{\mathbb{R}^d} \gamma_y(y) v(y) dy$.  
    Thus, the strong duality and the existence of minimizer for \eqref{unbalanced-eot-primal} follows from the Fenchel-Rockafellar theorem which states that problems
    \begin{eqnarray}
        \sup_{(\phi,\psi)\in \mathcal{C}_2(\mathbb{R}^d)\times \mathcal{C}_2(\mathbb{R}^d)}
        \{ -\overline{P}(A(\phi,\psi))
        - \overline{F_1}(-\phi)
        - \overline{F_2}(-\psi)
        \}
        \label{duality-fenchel}
    \end{eqnarray}
    and 
    \begin{eqnarray}
        \min_{\gamma\in \mathcal{M}_{2,+}(\mathbb{R}^d\times\mathbb{R}^d)}
        \{P(\gamma) + F_1(\gamma_x)+ F_2(\gamma_y)
        \}
    \end{eqnarray}
    are equal. The uniqueness of the minimizer for \eqref{unbalanced-eot-primal} comes from the strict convexity of $P(\cdot)$ (which holds thanks to the entropy term). Note that the conjugate of the sum of $F_1$ and $F_2$ is equal to the sum of their conjugates since they are defined for separate non-intersecting groups of parameters.
    
    Next we prove that the supremum can be restricted to $\mathcal{C}_{2, b}(\mathbb{R}^d\times\mathbb{R}^d)$. Here we use ``$\wedge$'' to denote the operation of taking minimum between the function $f:\mathbb{R}^d\rightarrow\mathbb{R}$ and real value $k$: $(f\wedge k)(x)\defeq \min(f(x),k)$.

    Then analogously to \citep[Theorem 9.6]{gozlan2017kantorovich}, we get:
    \begin{eqnarray}
        \sup_{(\phi,\psi)\in \mathcal{C}_2(\mathbb{R}^{d})\times \mathcal{C}_2(\mathbb{R}^{d})} \lbrace
      -\overline{P}(A(\phi,\psi)) -\overline{F_1}(-\phi) - \overline{F_2}(-\psi) \rbrace =
      \nonumber\\
      \sup_{(\phi,\psi)\in \mathcal{C}_2(\mathbb{R}^{d})\times \mathcal{C}_2(\mathbb{R}^{d})} \lim_{k_1, k_2\rightarrow \infty} 
      \lbrace -\overline{P}(A(\phi \wedge k_1,\psi \wedge k_2)) -\overline{F_1}(-(\phi\wedge k_1)) - \overline{F_2}(-(\psi \wedge k_2)) \rbrace
      \leq 
      \nonumber\\
      \sup_{(\phi,\psi)\in \mathcal{C}_{2,b}(\mathbb{R}^{d})\times \mathcal{C}_{2,b}(\mathbb{R}^{d})} 
      \lbrace
      -\overline{P}(A(\phi,\psi)) -\overline{F_1}(-\phi) - \overline{F_2}(-\psi) \rbrace.
    \end{eqnarray}

    Since the another inequality is obvious, the two quantities are equal which completes the proof.
    
\end{proof}

\begin{proof}[Proof of Theorem \ref{thm-universal-approximation}] Our aim is to prove that for all $\delta >0$ there exist unnormalized Gaussian mixtures $u_{\omega}$ and $v_{\theta}$ s.t. $\mathcal{L}(\theta,\omega)-\mathcal{L}^*<\delta\varepsilon$. We define
\begin{eqnarray}\mathcal{J}(\phi,\psi)\defeq 
\nonumber\\
\varepsilon \int_{\mathbb{R}^d} \int_{\mathbb{R}^d} \!\exp\{\frac{1}{\varepsilon} \!( \phi(x) + \psi(y)\!-\! \frac{\|x-y\|^2}{2}) \} dx dy \!+ \!
    \int_{\mathbb{R}^d} \overline{f}_1 (-\phi(x))p(x) dx \! + \!\int_{\mathbb{R}^d} \overline{f}_2 (-\psi(y))q(y) dy.
    \nonumber
\end{eqnarray}
Then from \eqref{unbalanced-eot-dual} and Theorem \eqref{aux-theorem-dual}, it follows that $\mathcal{L}^*=\inf_{(\phi,\psi)\in \mathcal{C}_{2,b}(\mathbb{R}^d)\times \mathcal{C}_{2,b}(\mathbb{R}^d)} \mathcal{J}(\phi, \psi)$. Finally, using the definition of the infimum, we get that for all $\delta'>0$ there exist some functions $(\widehat{\phi}, \; \widehat{\psi}) \in \mathcal{C}_{2,b}(\mathbb{R}^d)\times \mathcal{C}_{2,b}(\mathbb{R}^d)$ such that $\mathcal{J}(\widehat{\phi},\widehat{\psi}) \!\!<\!\! \mathcal{L}^* \!\!+\!\! \delta'$. For further derivations, we set $\delta'\defeq\frac{\delta\varepsilon}{2}$ and pick the corresponding $(\widehat{\phi}, \; \widehat{\psi})$.

\underline{\textit{Step 1.}} {\color{black}We start with the derivation of some inequalities useful for future steps.} Since ${(\phi,\psi)\in \mathcal{C}_{2,b}(\mathbb{R}^d)\!\times\! \mathcal{C}_{2,b}(\mathbb{R}^d)}$, they have upper bounds $\widehat{a}$ and $\widehat{b}$ such that for all $x,y\in \mathbb{R}^d$: $\widehat{\phi}(x)\leq \widehat{a}$ and $\widehat{\psi}(y)\leq \widehat{b}$ respectively. We recall that by the assumption of the theorem, measures $p$ and $q$ are compactly supported. Thus, there exist balls centered at $x=0$ and $y=0$ and having some radius $R>0$ which contain the supports of $p$ and $q$ respectively. Then we define
\begin{eqnarray*}
    \widetilde{\phi}(x) \defeq \widehat{\phi}(x) - \max\{0, {\color{black}\max\{\|x\|^2-R^2, \|x\|^4-R^4\}\}}\leq \widehat{\phi}(x)\leq \widehat{a};\nonumber\\
    \widetilde{\psi}(y) \defeq \widehat{\psi}(y) - \max\{0, \|y\|^2-R^2{\color{black}} \}\leq \widehat{\psi}(y)\leq \widehat{b}.\nonumber
\end{eqnarray*}
We get that
\begin{eqnarray}
    \widetilde{\phi}(x) \leq \widehat{\phi}(x), \widetilde{\psi}(y) \leq \widehat{\psi}(y)
 \Longrightarrow \nonumber\\
 \varepsilon \int_{\mathbb{R}^d} \int_{\mathbb{R}^d} \!\exp\{\frac{1}{\varepsilon} \!( \widetilde{\phi}(x) + \widetilde{\psi}(y)\!-\! \frac{\|x-y\|^2}{2}) \} dx dy \leq
 \nonumber
 \\\varepsilon \int_{\mathbb{R}^d} \int_{\mathbb{R}^d} \!\exp\{\frac{1}{\varepsilon} \!( \widehat{\phi}(x) + \widehat{\psi}(y)\!-\! \frac{\|x-y\|^2}{2}) \} dx dy
 \label{derivation-1st}
\end{eqnarray}
Importantly, for all $x$ and $y$ within the supports of $p$ and $q$ it holds that $\widetilde{\phi}(x)=\widehat{\phi}(x)$ and $\widetilde{\psi}(y)=\widehat{\psi}(y)$, respectively. Then
\begin{eqnarray}
    \int_{\mathbb{R}^d} \overline{f}_1 (-\widehat{\phi}(x))p(x) dx = \int_{\mathbb{R}^d} \overline{f}_1 (-\widetilde{\phi}(x))p(x) dx, \nonumber \\
    \!\int_{\mathbb{R}^d} \overline{f}_2 (-\widehat{\psi}(y))q(y) dy = \int_{\mathbb{R}^d} \overline{f}_2 (-\widetilde{\psi}(y))q(y) dy.
    \label{derivation-2nd}
\end{eqnarray}
Combining \eqref{derivation-1st} and \eqref{derivation-2nd}, we get that $\mathcal{J}(\widetilde{\phi}, \widetilde{\psi}) \leq \mathcal{J}(\widehat{\phi}, \widehat{\psi})<\mathcal{L}^*+\delta$.

Before moving on, we note that functions $\exp\{\nicefrac{\widetilde{\phi}(x)}{\varepsilon}\}$ and $\exp\{\nicefrac{\widetilde{\psi}(y)}{\varepsilon}\}$ {\color{black}are continuous and non-negative}. Therefore, since measures $p$ and $q$ are compactly supported, there exist some constants $e_{\min}, \; h_{\min}>0$ such that $\exp\{\nicefrac{\widetilde{\phi}(x)}{\varepsilon}\}>e_{\min}$ and $\exp\{\nicefrac{\widetilde{\psi}(y)}{\varepsilon}\}>h_{\min}$ for all $x$ and $y$ from the supports of measures $p$ and $q$ respectively. We keep these constants for future steps.

\underline{\textit{Step 2.}} This step of our proof is similar to \citep[Theorem 3.4]{korotin2024light}. We get that
\begin{eqnarray}
    \exp\big(\nicefrac{\widetilde{\phi}(x)}{\varepsilon}\big)\leq \exp\bigg(\frac{\widehat{a}-\max \{0, \|x\|^{2}-R^2\}}{\varepsilon}\bigg)\leq \exp\big(\frac{\widehat{a}+R^{2}}{\varepsilon}\big)\cdot\exp(-\nicefrac{\|x\|^{2}}{\varepsilon}),
    \label{bound-phi}
    \\
    \exp\big(\nicefrac{\widetilde{\psi}(y)}{\varepsilon}\big)\leq \exp\bigg(\frac{\widehat{b}-\max \{0, \|y\|^{2}-R^2\}}{\varepsilon}\bigg)\leq \exp\big(\frac{\widehat{b}+R^{2}}{\varepsilon}\big)\cdot\exp(-\nicefrac{\|y\|^{2}}{\varepsilon}).
    \nonumber
\end{eqnarray}
From this we can deduce that $y\mapsto \exp(\nicefrac{\widetilde{\psi}(y)}{\varepsilon})$ is a normalizable density since it is bounded by the unnormalized Gaussian density. Moreover, we see that it vanishes at the infinity. Thus, using the result \citep[Theorem 5a]{nguyen2020approximation}, we get that for all $\delta''>0$ there exists an unnormalized Gaussian mixture $v_{\widetilde{\theta}}=v_{\widetilde{\theta}}({\color{black}y})$ such that

\begin{eqnarray}\|v_{\widetilde{\theta}}-\exp(\nicefrac{\widetilde{\psi}}{\varepsilon})\|_{\infty}=\sup_{y\in\mathbb{R}^{D}}|v_{\widetilde{\theta}}(y)-\exp(\nicefrac{\widetilde \psi(y)}{\varepsilon})|<\delta''.
\label{diff-to-exp}
\end{eqnarray}

Following the mentioned theorem, we can set all the covariances in $v_{\widetilde{\theta}}$ to be scalar, i.e., define $v_{\widetilde{\theta}}({\color{black}y})=\sum_{k=1}^{K}{\color{black}\widetilde{\alpha}_{k}\mathcal{N}({\color{black}y}|\widetilde{r}_{k},\varepsilon \widetilde{\lambda}_{k}I_{\color{black}d}){\color{black}}}$ for some $K$ and {\color{black}$\widetilde{\alpha}_k\in \mathbb{R}_{+}$, $\widetilde{r}_k\in\mathbb{R}^{{\color{black}d}}$, $\widetilde{\lambda}_{k}\in \mathbb{R}_{+}$ ($k\in\{1,\dots,K\}$)}. For our future needs, we set $$\delta''=\frac{\delta\varepsilon}{2}\cdot \Bigg[ L_1 \cdot \frac{\varepsilon}{e_{\min}} +  L_2 \cdot \frac{\varepsilon}{h_{\min}} + \varepsilon (2\pi\varepsilon)^{\frac{d}{2}}\Big( 1 + (\pi\varepsilon)^{\frac{d}{2}} \exp\big\{\frac{\widehat{a}+R^{2}}{\varepsilon}\big\}\Big)\Bigg]^{-1}.$$

For simplicity, we consider the other mixture $v_{\theta}(y)\defeq v_{\widetilde{\theta}}(y)\exp(-\frac{\|y\|^{2}}{2\varepsilon})$ which is again unnormalized and has scalar covariances, see the proof of \citep[Theorem 3.4]{korotin2024light} for explanation. {\color{black}We denote the weights, means, and covariances of this mixture by $\alpha_k\in \mathbb{R}_+$, $r_k\in\mathbb{R}^{\color{black}d}$ and $\lambda_k\in\mathbb{R}_+$, respectively.}

We derive that 

\begin{eqnarray}
    \varepsilon \int_{\mathbb{R}^d} \int_{\mathbb{R}^d} \!\exp\big\{\frac{1}{\varepsilon} \!( \widetilde{\phi}(x) + \widetilde{\psi}(y)\!-\! \frac{\|x-y\|^2}{2}) \big\} dx dy = 
    \nonumber
    \\
    \varepsilon \int_{\mathbb{R}^d} \int_{\mathbb{R}^d} \!\exp\{\frac{\widetilde{\phi}(x)}{\varepsilon}\} \exp\{-\frac{\|x-y\|^2}{2\varepsilon}\}  \exp\{\frac{\widetilde{\psi}(y)}{\varepsilon}\} dx dy > \nonumber\\
    \varepsilon \int_{\mathbb{R}^d} \int_{\mathbb{R}^d} \!\exp\{\frac{\widetilde{\phi}(x)}{\varepsilon}\} \exp\{-\frac{\|x-y\|^2}{2\varepsilon}\}  (v_{\widetilde{\theta}}(y) - \delta'') dx dy = 
    \nonumber\\
    \varepsilon \int_{\mathbb{R}^d} \int_{\mathbb{R}^d} \!\exp\{\frac{\widetilde{\phi}(x)}{\varepsilon}\} \exp\{-\frac{\|x-y\|^2}{2\varepsilon}\}  v_{\widetilde{\theta}}(y) dx dy - 
    \nonumber
    \\
    \delta'' \varepsilon \int_{\mathbb{R}^d} \int_{\mathbb{R}^d} \!\exp\{\frac{\widetilde{\phi}(x)}{\varepsilon}\} \exp\{-\frac{\|x-y\|^2}{2\varepsilon}\} dx dy = 
    \nonumber\\
    \!\!\varepsilon \int_{\mathbb{R}^d} \!\!\int_{\mathbb{R}^d} \!\!\!\!\exp\{\frac{\widetilde{\phi}(x)}{\varepsilon}\} \!\exp\{-\frac{\|x\|^2}{2\varepsilon}\} \!\exp\{\frac{\langle x,y\rangle}{\varepsilon}\} \!\underbrace{\exp\{-\frac{\|y\|^2}{2\varepsilon}\} \!v_{\widetilde{\theta}}(y)}_{=v_{\theta}(y)} dx dy \!-
    \nonumber
    \\
    \delta'' \!\varepsilon \!\int_{\mathbb{R}^d} \!\!\int_{\mathbb{R}^d} \!\!\exp\{\frac{\widetilde{\phi}(x)}{\varepsilon}\} \!\exp\{-\frac{\|x-y\|^2}{2\varepsilon}\} dx dy \!= 
    \nonumber\\
    \varepsilon \int_{\mathbb{R}^d} \!\exp\{\frac{\widetilde{\phi}(x)}{\varepsilon}\} \exp\{-\frac{\|x^2\|}{2\varepsilon}\}\Big(\underbrace{\int_{\mathbb{R}^d} \exp\{\frac{\langle x,y\rangle}{\varepsilon}\}  v_{\theta}(y) dy}_{=c_{\theta}(x)}\Big) dx
    -
    \nonumber
    \\
    \delta'' \varepsilon  \int_{\mathbb{R}^d} \!\exp\{\frac{\widetilde{\phi}(x)}{\varepsilon}\} \Big(\underbrace{\int_{\mathbb{R}^d} \exp\{-\frac{\|x-y\|^2}{2\varepsilon}\} dy}_{=(2\pi \varepsilon)^{\nicefrac{d}{2}}}\Big) dx =
    \nonumber\\
    \varepsilon \int_{\mathbb{R}^d} \!\exp\{\frac{\widetilde{\phi}(x)}{\varepsilon}\} \exp\{-\frac{\|x^2\|}{2\varepsilon}\} c_{\theta}(x) dx  - \delta'' \varepsilon  (2\pi \varepsilon)^{\nicefrac{d}{2}} \int_{\mathbb{R}^d} \!\exp\{\frac{\widetilde{\phi}(x)}{\varepsilon}\} dx \stackrel{\eqref{bound-phi}}{>}
    \nonumber\\
    \varepsilon \int_{\mathbb{R}^d} \!\exp\{\frac{\widetilde{\phi}(x)}{\varepsilon}\} \exp\{-\frac{\|x^2\|}{2\varepsilon}\} c_{\theta}(x) dx - 
    \nonumber
    \\
    \delta'' \varepsilon  (2\pi \varepsilon)^{\nicefrac{d}{2}} \exp\big\{\frac{\widehat{a}+R^{2}}{\varepsilon}\big\} \int_{\mathbb{R}^d} \underbrace{\exp\{-\nicefrac{\|x\|^{2}}{\varepsilon}\} dx}_{=(\pi\varepsilon)^{\nicefrac{d}{2}}}=
    \nonumber\\
    \varepsilon \int_{\mathbb{R}^d} \!\exp\{\frac{\widetilde{\phi}(x)}{\varepsilon}\} \exp\{-\frac{\|x^2\|}{2\varepsilon}\} c_{\theta}(x) dx - \delta''  2^{\nicefrac{d}{2}} \pi^d  \varepsilon^{(d+1)} \exp\big\{\frac{\widehat{a}+R^{2}}{\varepsilon}\big\}.
    \label{first-integral-buw}
\end{eqnarray}

\underline{\textit{Step 3.}} At this point, we will show that for every $\delta''>0$, there exists an unnormalized Gaussian mixture $u_{\widetilde{\omega}}$ which is $\delta''$-close to $\exp\{\nicefrac{\widetilde{\phi}(x)}{\varepsilon}\}c_{\theta}(x)$. Using the closed-form expression for $c_{\theta(x)}$ from \citep[Proposition 3.2]{korotin2024light}, we get that

\begin{eqnarray*}
    c_{\theta}(x)=
     \sum_{k=1}^K \alpha_k  \exp\{-\frac{\|r_k\|^2}{2\varepsilon\lambda_k}\} \exp\{\frac{\|r_k + x\lambda_k\|^2}{2\varepsilon\lambda_k}\}.
\end{eqnarray*}

Then 
\begin{eqnarray*}
    \exp\big(\nicefrac{\widetilde{\phi}(x)}{\varepsilon}\big)c_{\theta}(x)\leq 
    \nonumber
    \\
    \exp\bigg(\frac{\widehat{a}-\max \{0, \|x\|^{4}-R^4\}}{\varepsilon}\bigg)c_{\theta}(x) \leq \exp\big(\frac{\widehat{a}+R^{4}}{\varepsilon}\big)\cdot\exp(-\nicefrac{\|x\|^{4}}{\varepsilon}) c_{\theta}(x)=
    \\
     \sum_{k=1}^K \alpha_k  \exp\big(\frac{\widehat{a}+R^{4}}{\varepsilon}\big)  \exp\{-\frac{\|r_k\|^2}{2\varepsilon\lambda_k}\} \cdot\exp(-\frac{\|x\|^{4}}{\varepsilon})  \exp\{\frac{\|r_k + x\lambda_k\|^2}{2\varepsilon\lambda_k}\}=
    \\
    \sum_{k=1}^K \alpha_k  \exp\big(\frac{\widehat{a}+R^{4}}{\varepsilon}\big)  \exp\{-\frac{\|r_k\|^2}{2\varepsilon\lambda_k}\} \cdot \exp\{\frac{\|r_k + x\lambda_k\|^2 - 2\lambda_k \|x\|^4}{2\varepsilon\lambda_k}\}
\end{eqnarray*}

From this, we see that $\exp\big(\nicefrac{\widetilde{\phi}(x)}{\varepsilon}\big){\color{black}c_{\theta}(x)}$ tends to zero while $x$ approaches infinity. It means that $x\mapsto \exp\big(\nicefrac{\widetilde{\phi}(x)}{\varepsilon}\big)c_{\theta}(x)$ corresponds to the normalizable density. Using \citep[Theorem 5a]{nguyen2020approximation}, we get that for all $\delta''>0$ there exists an unnormalized Gaussian mixture $u_{\widetilde{\omega}}$ such that
\begin{equation}
\|u_{\widetilde{\omega}}-\exp(\nicefrac{\widetilde{\phi}}{\varepsilon})c_{\theta}\|_{\infty}=\sup_{x\in\mathbb{R}^{D}}|u_{\widetilde{\omega}}(x)-\exp(\nicefrac{\widetilde{\phi}(x)}{\varepsilon})c_{\theta}(x))|<\delta''.
\label{diff-to-exp-2}
\end{equation}

Analogously with $v_{\widetilde{\theta}}$, we can set all the covariances in $u_{\widetilde{\omega}}$ to be scalar, i.e., define $u_{\widetilde{\omega}}={\color{black}\sum_{l=1}^L \widetilde{\beta}_l \mathcal{N}(x|\widetilde{\mu}_l, \varepsilon \widetilde{\zeta}_l I_{\color{black}d})}$ for some $L$, {\color{black}$\widetilde{\mu}_l\in \mathbb{R}^d$, $\widetilde{\zeta}_l\in\mathbb{R}_+\;(l\in\{1,...,L\})$}. Moreover, we consider $u_{\omega}(x) =  u_{\widetilde{\omega}}(x) \exp\{-\frac{\|x\|^2}{2\varepsilon}\}$ which is again an unnormalized density with scalar covariances. {\color{black}We denote the weights, means, covariances of this mixture by $\beta_l\in \mathbb{R}_+$, $\mu_l\in\mathbb{R}^{\color{black}d}$, $\zeta_l\in\mathbb{R}_+$ respectively.}

Next we recall the equation \eqref{first-integral-buw} and get that
\begin{eqnarray}
    \eqref{first-integral-buw} > \varepsilon \int_{\mathbb{R}^d} \exp\{-\frac{\|x\|^2}{2\varepsilon}\} (u_{\widetilde{\omega}}(x) - \delta'') dx - \delta''  2^{\nicefrac{d}{2}} \pi^d  \varepsilon^{(d+1)} \exp\big\{\frac{\widehat{a}+R^{2}}{\varepsilon}\big\}=
    \nonumber\\
    \varepsilon \int_{\mathbb{R}^d} \underbrace{\exp\{-\frac{\|x\|^2}{2\varepsilon}\} u_{\widetilde{\omega}}(x)}_{=u_{\omega}(x)} dx - \varepsilon \delta'' \underbrace{\int_{\mathbb{R}^d} \exp\{-\frac{\|x\|^2}{2\varepsilon}\} dx}_{=(2\pi\varepsilon)^{\frac{d}{2}}} - \delta''  2^{\nicefrac{d}{2}} \pi^d  \varepsilon^{(d+1)} \exp\big\{\frac{\widehat{a}+R^{2}}{\varepsilon}\big\}=
    \nonumber\\
    \varepsilon \|u_{\omega}\|_1 -\varepsilon \delta'' (2\pi\varepsilon)^{\frac{d}{2}}\Big( 1 + (\pi\varepsilon)^{\frac{d}{2}} \exp\big\{\frac{\widehat{a}+R^{2}}{\varepsilon}\big\}\Big).
    \label{estimation-norm}
\end{eqnarray}

\underline{\textit{Step 4.}} Now we turn to other expressions. Using the property that a function $t\mapsto \log t$ is $\frac{1}{t_{\min}}$-Lipshitz on interval $[t_{\min}, +\infty)$ we get
\begin{eqnarray}
     \log (\exp\{\frac{\widetilde{\psi}(y)}{\varepsilon}\}) - \log (\exp\{\frac{\widetilde{\psi}(y)}{\varepsilon}\} - \delta'') \leq \frac{\delta''}{h_{\min}} \Longrightarrow \nonumber\\
     \log (\exp\{\frac{\widetilde{\psi}(y)}{\varepsilon}\} - \delta'') \geq \frac{\widetilde{\psi}(y)}{\varepsilon} - \frac{\delta''}{h_{\min}}.
     \label{log-lipshitz-v}
\end{eqnarray}

Similarly, we get that
\begin{eqnarray}
     \log (\exp\{\frac{\widetilde{\phi}(y)}{\varepsilon}\}c_{\theta}(x) - \delta'') \geq \frac{\widetilde{\phi}(y)}{\varepsilon} + \log c_{\theta}(x) - \frac{\delta''}{e_{\min}}.
     \label{log-lipshitz-u}
\end{eqnarray}

We use this inequality, monotonicity of logarithm function, and \eqref{diff-to-exp}, to derive 
\begin{eqnarray}
    v_{\widetilde{\theta}}(y) \stackrel{\eqref{diff-to-exp}}{>} \exp\{\frac{\widetilde{\psi}(y)}{\varepsilon}\} - \delta''\Longrightarrow \log v_{\widetilde{\theta}}(y) > \log (\exp\{\frac{\widetilde{\psi}(y)}{\varepsilon}\} - \delta'') \stackrel{\eqref{log-lipshitz-v}}{\geq} \frac{\widetilde{\psi}(y)}{\varepsilon} - \frac{\delta''}{h_{\min}} \Longrightarrow 
    \nonumber\\
    - \widetilde{\psi}(y) > -\varepsilon \log v_{\widetilde{\theta}}(y) - \frac{\varepsilon 
    \delta''}{h_{\min}};
    \\
    u_{\widetilde{\omega}}(x) \stackrel{\eqref{diff-to-exp-2}}{>} \exp\{\frac{\widetilde{\phi}(x)}{\varepsilon}\}c_{\theta}(x) - \delta'' \Longrightarrow \log u_{\widetilde{\omega}}(x) > \log (\exp\{\frac{\widetilde{\phi}(x)}{\varepsilon}\}c_{\theta}(x) - \delta'') \stackrel{\eqref{log-lipshitz-u}}{\geq}
    \nonumber\\
    \frac{\widetilde{\phi}(y)}{\varepsilon} + \log c_{\theta}(x) - \frac{\delta''}{e_{\min}} 
    \Longrightarrow 
    - \widetilde{\phi}(y) > - \varepsilon \log \frac{u_{\widetilde{\omega}}(x)}{c_{\theta}(x)} - \frac{\varepsilon\delta''}{e_{\min}}.
\end{eqnarray}

Recall that  $\overline{f_1}$ and $\overline{f_2}$ are non-decreasing functions. Moreover, they are Lipshitz with the constants $L_1$, $L_2$ respectively. Thus, we get

\begin{eqnarray}
    \overline{f_2}(- \widetilde{\psi}(y)) \geq \overline{f_2}(-\varepsilon \log v_{\widetilde{\theta}}(y) - \frac{\varepsilon \delta''}{h_{\min}}) = \overline{f_2}(-\varepsilon (\log v_{\theta}(y) + \frac{\|y\|^2}{2\varepsilon}) - \frac{\varepsilon \delta''}{h_{\min}}) =
    \nonumber\\
    \overline{f_2}(-\varepsilon \log v_{\theta}(y) - \frac{\|y\|^2}{2} - \frac{\varepsilon \delta''}{h_{\min}}) \geq \overline{f_2}(-\varepsilon \log v_{\theta}(y) - \frac{\|y\|^2}{2}) - L_2 \cdot \frac{\varepsilon \delta''}{h_{\min}};
    \\
    \overline{f_1}(- \widetilde{\phi}(y)) \geq \overline{f_1}(- \varepsilon \log \frac{u_{\widetilde{\omega}}(x)}{c_{\theta}(x)} - \frac{\varepsilon\delta''}{e_{\min}}) = \overline{f_1}(- \varepsilon \log u_{\widetilde{\omega}}(x) + \varepsilon\log c_{\theta}(x) - \frac{\varepsilon\delta''}{e_{\min}}) = 
    \nonumber\\
    \overline{f_1}(- \varepsilon \log u_{\omega}(x) - \frac{\|x\|^2}{2} + \varepsilon\log c_{\theta}(x) - \frac{\varepsilon\delta''}{e_{\min}}) = \overline{f_1}(- \varepsilon \log \frac{u_{\omega}(x)}{c_{\theta}(x)} - \frac{\|x\|^2}{2} - \frac{\varepsilon\delta''}{e_{\min}}) \geq
    \nonumber\\
    \overline{f_1}(- \varepsilon \log \frac{u_{\omega}(x)}{c_{\theta}(x)} - \frac{\|x\|^2}{2}) - L_1 \cdot \frac{\varepsilon\delta''}{e_{\min}}.
\end{eqnarray}
Integrating these inequalities over all $x$ and $y$ in supports of $p$ and $q$ respectively, we get

\begin{eqnarray}
    \int_{\mathbb{R}^d} \overline{f}_2 (-\widetilde{\psi}(y))q(y) dy \geq \int_{\mathbb{R}^d}  \overline{f_2}(-\varepsilon \log v_{\theta}(y) - \frac{\|y\|^2}{2})q(y)dy - L_2 \cdot \frac{\varepsilon \delta''}{h_{\min}};
    \\
    \int_{\mathbb{R}^d} \overline{f}_1 (-\widetilde{\phi}(x))p(x) dx \geq \int_{\mathbb{R}^d}  \overline{f_1}(- \varepsilon \log \frac{u_{\omega}(x)}{c_{\theta}(x)} - \frac{\|x\|^2}{2})p(x)dx - L_1 \cdot \frac{\varepsilon \delta''}{e_{\min}}.
    \label{estimation-divergences}
\end{eqnarray}

Finally, combining \eqref{estimation-norm} and \eqref{estimation-divergences}, we get
\begin{eqnarray*}
    \mathcal{L}(\theta,\omega)= \mathcal{J}\Big( \underbrace{\varepsilon \log \frac{u_\omega(x)}{c_\theta(x)} + \frac{\Vert x \Vert^2}{2}}_{\phi_{\theta, \omega} \defeq}, \underbrace{\varepsilon \log v_\theta(y) + \frac{\Vert y \Vert^2}{2}}_{\psi_\theta \defeq}\Big) = 
    \nonumber
    \\
    \varepsilon \|u_{\omega}\|_1 + \int_{\mathbb{R}^d} \overline{f_1}(- \varepsilon \log \frac{u_{\omega}(x)}{c_{\theta}(x)} - \frac{\|x\|^2}{2})p(x)dx + \int_{\mathbb{R}^d}\overline{f_2}(-\varepsilon \log v_{\theta}(y) - \frac{\|y\|^2}{2})q(y)dy < 
    \nonumber\\
    \varepsilon \int_{\mathbb{R}^d} \int_{\mathbb{R}^d} \!\exp\{\frac{1}{\varepsilon} \!( \widetilde{\phi}(x) + \widetilde{\psi}(y)\!-\! \frac{\|x-y\|^2}{2}) \} dx dy \!+ 
    \nonumber
    \\
    \!
    \int_{\mathbb{R}^d} \overline{f}_1 (-\widetilde{\phi}(x))p(x) dx \! + \!\int_{\mathbb{R}^d} \overline{f}_2 (-\widetilde{\psi}(y))q(y) dy + 
    \nonumber\\
    L_1 \cdot \frac{\varepsilon \delta''}{e_{\min}} + L_2 \cdot \frac{\varepsilon \delta''}{h_{\min}} + \varepsilon \delta'' (2\pi\varepsilon)^{\frac{d}{2}}\Big( 1 + (\pi\varepsilon)^{\frac{d}{2}} \exp\big\{\frac{\widehat{a}+R^{2}}{\varepsilon}\big\}\Big) 
    <
    \\
    \mathcal{L}^* + \delta' + \delta'' \Bigg[ L_1 \cdot \frac{\varepsilon}{e_{\min}} +  L_2 \cdot \frac{\varepsilon}{h_{\min}} + \varepsilon (2\pi\varepsilon)^{\frac{d}{2}}\Big( 1 + (\pi\varepsilon)^{\frac{d}{2}} \exp\big\{\frac{\widehat{a}+R^{2}}{\varepsilon}\big\}\Big)\Bigg]
    \leq
    \nonumber
    \\
    \mathcal{L}^* + \frac{\delta\varepsilon}{2} + \frac{\delta\varepsilon}{2}
    \Longrightarrow
    \nonumber\\
    \KL{\gamma_{\theta,\omega}}{\gamma^*} \leq \varepsilon^{-1}(\mathcal{L}(\theta,\omega) - \mathcal{L}^*) < \delta
\end{eqnarray*}
which completes the proof.
\end{proof}

\textit{Remark.} In fact, the assumption of the Lishitzness of $\overline{f_1}(x)$, $\overline{f_2}(x)$ can be omitted. Indeed, under the "\textit{everything is compact}" assumptions of Proposition \ref{prop-estimation} and Theorem \ref{thm-universal-approximation}, inputs to $\overline{f_1}(\cdot)$, $\overline{f_2}(\cdot)$ also always belong to certain compact sets. The convex functions are known to be Lipshitz on compact subsets of $\mathbb{R}$, see \citep[Chapter 3, \wasyparagraph 18]{hardy1952inequalities}, and we actually do not need the Lipschitzness on the entire $\mathbb{R}$.

\section{Experiments Details}
\label{sec:exp-details}
\subsection{General Details}
To minimize our objective \eqref{empirical-objective}, we parametrize $\alpha_k,r_k,S_k$ of $v_{\theta}$ and $\beta_l,\mu_l,\Sigma_l$ of $u_{\omega}$ in \eqref{gauss-parametrization}. Here we follow the standard practices in deep learning and parametrize logarithms $\log\alpha_k,\;\log\beta_l$ instead of directly parameterizing $\alpha_{k},\beta_l$. In turn, variables $r_k,\;\mu_l$ are parametrized directly as multi-dimensional vectors. We consider diagonal matrices $S_k,\;\Sigma_l$ and parametrize them via their diagonal values $\log(S_k)_{i,i}$ and $\log(\Sigma_l)_{i,i}$ respectively. We initialize the parameters following the scheme in \citep{korotin2024light}.
In all our experiments, we use the Adam optimizer.

\subsection{Details of the Experiment with Gaussian Mixtures}
We use $K=L=5$, $\varepsilon=0.05$, $lr=3e-4$ and batchsize $128$. We do $2\cdot 10^4$ gradient steps. For the LightSB algorithm, we use the parameters presented by the authors in the official repository.

\subsection{Details of the Image Translation Experiment}
We use the code and decoder model from
\begin{center}
    \url{https://github.com/podgorskiy/ALAE}
\end{center} 
We download the data and neural network extracted attributes for the FFHQ dataset from 
\begin{center}
\url{https://github.com/ngushchin/LightSB/}
\end{center}
In the \textit{Adult} class we include the images with the attribute \textit{Age} $\!\geq 44$; in the \textit{Young} class - with the \textit{Age}$\in[16, 44]$. We excluded the images with faces of children to increase the accuracy of classification per \textit{gender} attribute.
For the experiments with our solver, we use weighted $\text{D}_{\text{KL}}$ divergence with parameters $\tau$ specified in Appendix \ref{app-ablation}, and set $K=L=10$, $\varepsilon=0.05$, $lr=1$, and batch size to $128$. We do $5\cdot10^3$ gradient steps using Adam optimizer \citep{kingma2014adam} and \texttt{MultiStepLR} scheduler with parameter $\gamma=0.1$ and milestones$=[500,1000]$. For testing \citep[LightSB]{korotin2024light} solver, we use the official code (see the link above) and instructions provided by the authors.

\textbf{Baselines.} For the OT-FM and UOT-FM methods, we parameterize the vector field $(v_{t,\theta})_{t\in[0,1]}$ for mass transport using a 2-layer feed-forward network with 512 hidden neurons and ReLU activation.
An additional sinusoidal embedding\cite{tong2020trajectorynet} was applied for the parameter $t$. The learning rate for the Adam optimizer was set to 1e-4. To obtain an optimal transport plan $\pi^{*}(x,y)$ discrete OT solvers from the POT~\cite{flamary2021pot} package were used. 
 These methods are built on the solutions (plans $\pi^{*}(x,y)$) of discrete OT problems, to obtain them we use the POT~\cite{flamary2021pot} package. Especially for the UOT-FM, we use the \texttt{ot.unbalanced.sinkhorn} with the regularization equal to $0.05$. We set the number of training and inference time steps equal to $100$. To obtain results of UOT-FM for Fig. \ref{fig:alae-comp}, we run this method for 3K epochs with parameter $reg\_m\in[5e-4,5e-3,5e-2,0.5,1,10,10^2,10^3,10^4,10^5,10^6]$ and reported the mean values of final metrics for 3 independent launches with different seeds. In Tables \ref{table-alae-accuracy-keep}, \ref{table-alae-accuracy-target}, \ref{table-alae-fd}, for each translation we report the results for one chosen parameter specified in Appendix \ref{app-alae-comp}. We use the corresponding checkpoints of UOT-FM to visualize its performance in Fig. \ref{fig:alae}. For \citep[UOT-SD]{choi2024generative}, \citep[UOT-GAN]{yang2018scalable} we use the official code provided by the authors, see the links:
\begin{center}
    \url{https://github.com/Jae-Moo/Unbalanced-Optimal-Transport-Generative-Model}
\end{center} 

\begin{center}
\url{https://github.com/uhlerlab/unbalanced_ot}
\vspace{-2mm}
\end{center} 
While both UOT-SB, UOT-GAN methods were not previously applied to the FFHQ dataset, we set up a grid search for the parameters and followed the instructions provided by the authors for parameter settings. Both for UOT-SB, UOT-GAN, we used a 3-layer neural network with 512 hidden neurons, and ReLU activation was used for the generator networks and the potential and discriminator, respectively. Adam optimizer \citep{kingma2014adam} with $lr=10^{-5}$ and $lr=10^{-4}$ was used to train the networks in UOT-SB and UOT-GAN, respectively. We train the methods for 10K iterations and set a batch size to 128. For UOT-SD, we used $\text{D}_{\text{KL}}$ divergence with their unbalancedness parameter $\tau\!=\!0.002$. For other parameters, we used the default values provided by the authors for CIFAR-10 generation tasks. For all baseline models which use entropy regularization, we set $\varepsilon=0.05$.
\vspace{-2mm}

\section{Additional Discussion \& Experiments with $f$-divergences}
\label{app-ablation}

\textbf{Details about $f$-divergences between positive measures.} In the classic form, $f$-divergences are defined as measures of dissimilarity between two \textit{probability} measures. This definition should be revised when dealing with measures of arbitrary masses. In the paragraph below we show that if the function $f$ is convex, non-negative, and attains zero uniquely at point $\{1\}$ then $\Df{f}{\mu_1}{\mu_2}$ is a valid measure of dissimilarity between two positive measures.

Let $\mu_1,\mu_2\in \mathcal{M}_{2,+}(\mathbb{R}^{d'})$ be two positive measures. The $f$-divergence satisfies $\Df{f}{\mu_1}{\mu_2}\geq 0$ which is obvious from the non-negativity of $f$. From the definition of $\Df{f}{\mu_1}{\mu_2}$ and the fact that function $f$ attains zero uniquely  at a point $\{1\}$, we obtain that $\Df{f}{\mu_1}{\mu_2}=0$ if and only if $\mu_1(x)=\mu_2(x)$ holds $\mu_{2}$-everywhere. Actually, $\mu_1(x)=\mu_2(x)$ should hold for all $x$ as $\mu_{1}$ must be absolutely continuous w.r.t. $\mu_{2}$ (otherwise $\Df{f}{\mu_1}{\mu_2}$ is assumed to be equal $+\infty$).

\textbf{The usage of $D_{\chi^2}$ divergence.} We tested the performance of our solver with scaled $\text{D}_{\text{KL}}$ divergences in the main text, see \wasyparagraph\ref{sec-gaussian-exp}, \wasyparagraph\ref{sec-alae-exp}. For completeness, here we evaluate our solver with $\text{D}_{\chi^{2}}$ divergence in \textit{Gaussian Mixture} experiment. We use the same experimental setup as in \wasyparagraph\ref{sec-gaussian-exp} and present the qualitative results in Fig. \ref{fig:gauss-chi2}. Interestingly, the solver's results differ from those which we obtain for $\text{D}_{\text{KL}}$ divergence. For $\text{D}_{\chi^2}$ divergence, supports of learned plans' marginals constitute only parts of source and target measures' supports when $\tau=1$. The issue disappears with a slight increase of $\tau$, i.e., for $\tau=2$. At the same time, a further increase of $\tau$ is useless, since the learned plans fail to deal with class imbalance issue. Thus, parameter $\tau$ should be adjusted heuristically. In the case of $\text{D}_{\text{KL}}$ divergence, supports coincide for all $\tau$, see Fig. \ref{fig:gauss-kl}. This motivates us to use $\text{D}_{\text{KL}}$ divergences in our main experiments.

\begin{figure*}[t!]
\vspace{-3mm}
\centering
\begin{subfigure}[b]{0.235\linewidth}
    \centering
    \includegraphics[width=0.995\linewidth]{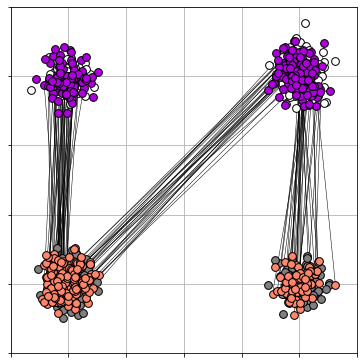}
    \caption{U-LightOT, $\tau=10$}
\end{subfigure}
\hfill
\begin{subfigure}[b]{0.235\linewidth}
    \centering
    \includegraphics[width=0.995\linewidth]{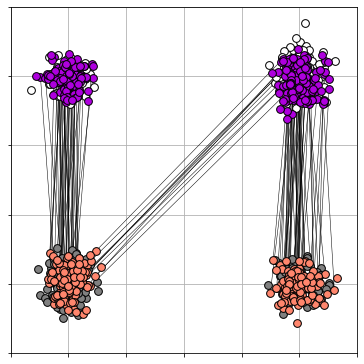}
    \caption{U-LightOT, $\tau=5$}
\end{subfigure}
\hfill
\begin{subfigure}[b]{0.235\linewidth}
    \centering
    \includegraphics[width=0.995\linewidth]{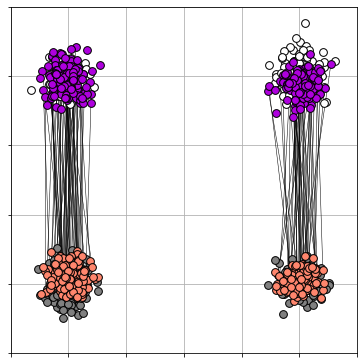}
    \caption{U-LightOT, $\tau=2$}
\end{subfigure}
\hfill
\begin{subfigure}[b]{0.235\linewidth}
    \centering
    \includegraphics[width=0.995\linewidth]{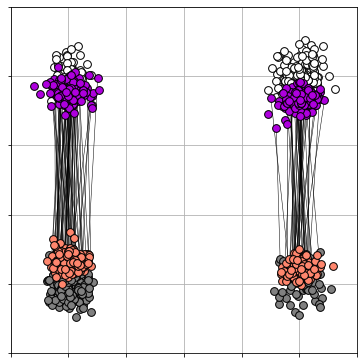}
    \caption{U-LightOT, $\tau=1$}
\end{subfigure}
\vspace{-1mm}\caption{\centering Conditional plans $\gamma_{\theta,\omega}(y|x)$ learned by our solver with scaled $\text{D}_{\chi^2}$ divergences in \textit{Gaussians Mixture} experiment ($\tau\in[1,2,5,10]$).}
\label{fig:gauss-chi2}
\end{figure*}

\textbf{Parameter $\tau$ in \textit{Gaussian Mixture} experiment.} An ablation study on unbalancedness parameter $\tau$ in \textit{Gaussian Mixture} experiment is conducted in \wasyparagraph\ref{sec-gaussian-exp} and above in this section. For completeness, we also perform the quantitative assessment of our solver with $\text{D}_{\text{KL}}$ divergence for different unbalancedness parameters $\tau$. We compute the normalized OT cost ($\mathbb{E}_{x\sim p} \mathbb{E}_{y\sim \gamma(y|x)} \frac{(x-y)^2}{2}$) between the source and generated distributions, and the Wasserstein distance between the generated and target distributions (computed by a discrete OT solver). For completeness, we additionally calculate the metrics for the \textit{balanced} \citep[LightSB]{korotin2024light} approach. 
The results are presented in Table \ref{table:ablation-tau-gauss}. 

\begin{table}[!ht]
    \centering\scriptsize
    \begin{tabular}{l|l l l l l}
    \hline
         \textbf{Method}& U-LightOT ($\tau=1$) & U-LightOT ($\tau=10$) & U-LightOT ($\tau=50$) & U-LightOT ($\tau=100$) & LightSB  \\ \hline
         OT cost ($\downarrow$) & $2.023$& $2.913$ & $3.874$ & $3.931$ & $3.952$\\\hline
         $\mathbb{W}_2$-distance ($\downarrow$) &$2.044$ & $1.107$ & $0.138$& $0.091$& $0.088$\\
         \hline
    \end{tabular}
    \vspace{1mm}
    \caption{Normalized OT cost between the input and learned distributions, and Wasserstein distance between the learned and target distributions in \textit{Gaussian mixture} experiment.}
    \vspace*{-3mm}
    \label{table:ablation-tau-gauss}
\end{table}

\textit{Results.} 
Recall that the unbalanced nature of our solver leads to two important properties. Firstly, our solver better preserves the properties of the input objects than the balanced approaches $-$ indeed, it allows for preserving object attributes (classes) even in the case of class imbalance. Secondly, due to the relaxed boundary condition for the target distribution, the distribution generated by our solver is naturally less similar to the target distribution than for balanced methods.

The above intuitive reasoning is confirmed by the metrics we obtained. Indeed, as the $\tau$ parameter increases, when our method becomes more and more similar to balanced approaches, the normalized OT cost between the source and generated distributions increases, and the Wasserstein distance between learned and target distributions decreases. LightSB [1] baseline, which is a purely balanced approach, shows the best quality in terms of Wasserstein distance and the worst in terms of OT cost.

\textbf{Parameters $\tau$, $\varepsilon$ in image experiments.} The effect of entropy regularization parameter $\varepsilon$ is well studied, see, e.g., \citep{gushchin2023entropic, korotin2024light}. Namely, increasing the parameter $\varepsilon$ stimulates the conditional distributions $\gamma_{\theta}(y|x)$ to become more dispersed. Still, below we provide an additional quantitative analysis of its influence on the learned translation. 
Besides, we address the question \textit{how does the parameter $\tau$ influence the performance of our solver in image translation experiments?}
To address this question, we learn the translations \textit{Young}$\rightarrow$ \textit{Adult}, \textit{Man}$\rightarrow$ \textit{Woman} on FFHQ dataset varying the parameters $\tau$, $\varepsilon$, see \wasyparagraph\ref{sec-alae-exp} for the experimental setup details. We test our solver with scaled $\text{D}_{\text{KL}}$ divergence training it for {\color{black}5K} iterations. Other hyperparameters are in Appendix \ref{sec:exp-details}.
In Tables \ref{table:ablation-acc-Y2A}, \ref{table:ablation-acc-A2Y}, \ref{table:ablation-acc-M2W}, \ref{table:ablation-acc-W2M}, we report the accuracy of keeping the attributes of the source images (e.g., gender in \textit{Young}$\rightarrow$\textit{Adult} translation). In Tables \ref{table:ablation-target-acc-Y2A}, \ref{table:ablation-target-acc-A2Y}, \ref{table:ablation-target-acc-M2W}, \ref{table:ablation-target-acc-W2M}, we report the accuracy of mapping to the correct target class (e.g., \textit{adult} people in \textit{Young}$\rightarrow$\textit{Adult} translation). In Tables \ref{table:ablation-fd-Y2A}, \ref{table:ablation-fd-A2Y}, \ref{table:ablation-fd-M2W}, \ref{table:ablation-fd-W2M}, we report FD metrics which is defined as \textit{Frechet distance} between means and covariances of the learned and the target measures. For convenience, we additionally illustrate the results of ablation studies on Fig. \ref{fig:alae-abl-eps}.

\begin{table}[!ht]
    \centering\scriptsize
    \begin{tabular}{l|l l l l l l l}
    \hline
        \diagbox{$\varepsilon$}{$\tau$} & $10$ & $20$ & $50$ & $10^2$ & \underline{$\mathbf{250}$} &$10^3$ & $10^6$ \\ \hline
        $0.01$ & $96.37 \!\pm\! 0.07$ & $96.12 \!\pm\! 0.25$ & $93.44 \!\pm\! 0.22$ & $90.26 \!\pm\! 0.18$ & $85.15 \!\pm\! 0.82$ & $81.79 \!\pm\! 0.98$ & $79.27 \!\pm\! 1.11$ \\ 
        $0.05$ & $96.03 \!\pm\! 0.08$ & $95.55 \!\pm\! 0.13$ & $93.19 \!\pm\! 0.55$ & $88.93 \!\pm\! 0.94$ & $84.49 \!\pm\! 1.57$& $80.59 \!\pm\! 0.55$ & $78.42 \!\pm\! 0.73$ \\ 
        $0.1$ & $95.20 \!\pm\! 0.21$ & $94.92 \!\pm\! 0.24$ & $92.30 \!\pm\! 0.25$ & $87.99 \!\pm\! 0.83$ & $82.91 \!\pm\! 1.01$& $78.76 \!\pm\! 0.50$ & $78.25 \!\pm\! 0.87$ \\ 
        $0.5$ & $88.53 \!\pm\! 0.26$ & $87.46 \!\pm\! 0.41$ & $83.41 \!\pm\! 0.80$ & $80.14 \!\pm\! 0.56$ & $76.00 \!\pm\! 0.84$& $73.36 \!\pm\! 0.23$ & $71.84 \!\pm\! 0.27$ \\ 
        $1.0$ & $-$ & $81.26 \!\pm\! 1.09$ & $77.84 \!\pm\! 0.33$ & $74.33 \!\pm\! 0.54$ & $71.01 \!\pm\! 1.13$& $67.43 \!\pm\! 0.15$ & $66.71 \!\pm\! 0.13$ \\ \hline
    \end{tabular}
    \caption{{\color{black}Test accuracy ($\uparrow$) of keeping the class in \textit{Young} $\rightarrow$ \textit{Adult} translation.}}
    \label{table:ablation-acc-Y2A}
\end{table}

\begin{table}[!ht]
    \centering\scriptsize
    \begin{tabular}{l|l l l l l l l}
    \hline
        \diagbox{$\varepsilon$}{$\tau$} & $10$ & $20$ & $50$ & $10^2$ & \underline{$\mathbf{250}$} &$10^3$ & $10^6$ \\ \hline
        $0.01$ & $38.94 \!\pm\! 0.91$ & $51.92 \!\pm\! 0.80$ & $67.68 \!\pm\! 1.06$ & $75.49 \!\pm\! 0.30$ & $81.34 \!\pm\! 1.06$ &$83.67 \!\pm\! 0.74$ & $85.27 \!\pm\! 1.35$ \\ 
        $0.05$ & $40.09 \!\pm\! 0.16$ & $53.19 \!\pm\! 0.49$ & $69.01 \!\pm\! 0.74$ & $77.41 \!\pm\! 0.67$ & $81.78 \!\pm\! 0.33$& $84.80 \!\pm\! 0.90$ & $85.63 \!\pm\! 0.76$ \\ 
        $0.1$ & $44.18 \!\pm\! 1.50$ & $56.74 \!\pm\! 0.58$ & $71.77 \!\pm\! 0.57$ & $78.34 \!\pm\! 0.91$ & $83.70 \!\pm\! 0.54$& $87.07 \!\pm\! 0.37$ & $88.21 \!\pm\! 0.23$ \\ 
        $0.5$ & $50.51 \!\pm\! 0.34$ & $65.61 \!\pm\! 2.04$ & $81.50 \!\pm\! 1.17$ & $87.48 \!\pm\! 0.06$ & $92.21 \!\pm\! 0.29$ & $92.93 \!\pm\! 0.14$ & $93.80 \!\pm\! 0.55$ \\  
        $1.0$ & $-$ & $70.81 \!\pm\! 2.69$ & $83.82 \!\pm\! 2.45$ & $89.56 \!\pm\! 0.50$ & $93.78 \!\pm\! 0.35$& $95.04 \!\pm\! 0.20$ & $95.50 \!\pm\! 0.41$ \\ \hline
    \end{tabular}
    \caption{{\color{black}Test accuracy ($\uparrow$) of mapping to the target in \textit{Young} $\rightarrow$ \textit{Adult} translation.}}
    \label{table:ablation-target-acc-Y2A}
\end{table}
\vspace*{-3mm}\begin{table}[!ht]
    \centering\scriptsize
    \begin{tabular}{l|l l l l l l l}
    \hline
        \diagbox{$\varepsilon$}{$\tau$} & $10$ & $20$ & $50$ & $10^2$ & \underline{$\mathbf{250}$} & $10^3$ & $10^6$ \\ \hline
        $0.01$ & $35.55 \!\pm\! 1.24$ & $26.77 \!\pm\! 1.59$ & $17.39 \!\pm\! 0.19$ & $16.42 \!\pm\! 1.96$ & $12.28 \!\pm\! 1.16$& $11.46 \!\pm\! 0.70$ & $10.85 \!\pm\! 0.12$ \\  
        $0.05$ & $42.05 \!\pm\! 3.04$ & $31.47 \!\pm\! 0.03$ & $23.90 \!\pm\! 1.49$ & $18.31 \!\pm\! 0.16$ & $17.15 \!\pm\! 0.48$& $16.27 \!\pm\! 0.62$ & $19.89 \!\pm\! 5.51$ \\  
        $0.1$ & $50.05 \!\pm\! 2.48$ & $40.67 \!\pm\! 0.73$ & $31.15 \!\pm\! 0.32$ & $27.49 \!\pm\! 0.56$ & $25.86 \!\pm\! 0.88$& $24.57 \!\pm\! 0.06$ & $26.30 \!\pm\! 0.15$ \\  
        $0.5$ & $114.49 \!\pm\! 1.06$ & $104.42 \!\pm\! 0.33$ & $98.21 \!\pm\! 4.21$ & $92.48 \!\pm\! 0.98$ & $89.42 \!\pm\! 0.10$& $88.74 \!\pm\! 0.12$ & $89.55 \!\pm\! 1.18$ \\ 
        $1.0$ & $-$ & $165.39 \!\pm\! 1.50$ & $149.84 \!\pm\! 0.86$ & $142.39 \!\pm\! 0.18$ & $137.40 \!\pm\! 0.21$& $135.85 \!\pm\! 0.17$ & $135.25 \!\pm\! 0.17$ \\ \hline
    \end{tabular}
    \caption{{\color{black}Test FD ($\downarrow$) of generated latent codes in \textit{Young} $\rightarrow$ \textit{Adult} translation.} }
    \label{table:ablation-fd-Y2A}
\end{table}

\begin{table}[!ht]
    \centering\scriptsize
    \begin{tabular}{l|l l l l l l}
    \hline
        \diagbox{$\varepsilon$}{$\tau$} & $10$ & $20$ & $50$ & \underline{$\mathbf{10^2}$} & $10^3$ & $10^6$ \\ \hline
        $0.01$ & $96.14 \!\pm\! 0.08$ & $95.96 \!\pm\! 0.03$ & $93.60 \!\pm\! 0.72$ & $89.88 \!\pm\! 0.74$ & $81.49 \!\pm\! 0.89$ & $80.19 \!\pm\! 2.17$ \\ 
        $0.05$ & $95.36 \!\pm\! 0.41$ & $95.26 \!\pm\! 0.24$ & $92.77 \!\pm\! 0.48$ & $89.48 \!\pm\! 0.60$ & $81.34 \!\pm\! 0.35$ & $79.99 \!\pm\! 0.42$ \\  
        $0.1$ & $94.73 \!\pm\! 0.22$ & $94.33 \!\pm\! 0.21$ & $92.65 \!\pm\! 0.52$ & $89.12 \!\pm\! 0.40$ & $81.06 \!\pm\! 0.82$ & $77.73 \!\pm\! 1.21$ \\  
        $0.5$ & $88.21 \!\pm\! 0.82$ & $88.43 \!\pm\! 0.15$ & $86.13 \!\pm\! 0.94$ & $83.89 \!\pm\! 0.67$ & $74.92 \!\pm\! 0.53$ & $70.32 \!\pm\! 1.46$ \\  
        $1.0$ & $-$ & $81.67 \!\pm\! 0.94$ & $79.71 \!\pm\! 1.28$ & $77.15 \!\pm\! 1.65$ & $67.80 \!\pm\! 0.30$ & $65.31 \!\pm\! 0.90$ \\ \hline
    \end{tabular}
    \caption{{\color{black}Test accuracy ($\uparrow$) of keeping the class in \textit{Adult} $\rightarrow$ \textit{Young} translation.}}
    \label{table:ablation-acc-A2Y}
\end{table}

\begin{table}[!ht]
    \centering\scriptsize
    \begin{tabular}{l| l l l l l l}
    \hline
        \diagbox{$\varepsilon$}{$\tau$} & $10$ & $20$ & $50$ & \underline{$\mathbf{10^2}$} & $10^3$ & $10^6$ \\ \hline
        0.01 & $67.45 \!\pm\! 0.51$ & $76.38 \!\pm\! 0.36$ & $84.86 \!\pm\! 0.23$ & $87.87 \!\pm\! 0.37$ & $91.56 \!\pm\! 0.40$ & $92.29 \!\pm\! 0.37$ \\  
        $0.05$ & $67.21 \!\pm\! 0.10$ & $76.66 \!\pm\! 1.22$ & $84.43 \!\pm\! 0.40$ & $87.79 \!\pm\! 0.38$ & $91.98 \!\pm\! 0.50$ & $92.29 \!\pm\! 0.71$ \\  
        $0.1$ & $65.29 \!\pm\! 1.54$ & $76.58 \!\pm\! 1.10$ & $84.29 \!\pm\! 0.73$ & $88.65 \!\pm\! 0.64$ & $92.33 \!\pm\! 0.49$ & $92.87 \!\pm\! 0.11$ \\  
        $0.5$ & $69.13 \!\pm\! 3.43$ & $76.74 \!\pm\! 1.39$ & $82.33 \!\pm\! 6.53$ & $86.86 \!\pm\! 2.89$ & $94.06 \!\pm\! 0.62$ & $94.91 \!\pm\! 0.45$ \\  
        $1.0$ & $-$ & $80.13 \!\pm\! 1.89$ & $85.87 \!\pm\! 0.53$ & $91.00 \!\pm\! 0.62$ & $95.66 \!\pm\! 0.41$ & $96.37 \!\pm\! 0.52$ \\ \hline
    \end{tabular}
    \caption{{\color{black}Test accuracy ($\uparrow$) of mapping to the target in \textit{Adult} $\rightarrow$ \textit{Young} translation.}}
    \label{table:ablation-target-acc-A2Y}
\end{table}

\begin{table}[!ht]
    \centering\scriptsize
    \begin{tabular}{l|l l l l l l}
    \hline
        \diagbox{$\varepsilon$}{$\tau$} & $10$ & $20$ & $50$ & \underline{$\mathbf{10^2}$} & $10^3$ & $10^6$ \\ \hline
        $0.01$ & $45.58 \!\pm\! 7.90$ & $31.24 \!\pm\! 0.62$ & $22.44 \!\pm\! 0.32$ & $18.23 \!\pm\! 0.24$ & $17.30 \!\pm\! 2.99$ & $24.29 \!\pm\! 12.53$ \\  
        $0.05$ & $60.09 \!\pm\! 13.23$ & $38.88 \!\pm\! 3.3$3 & $27.85 \!\pm\! 0.07$ & $30.79 \!\pm\! 8.58$ & $24.59 \!\pm\! 4.48$ & $26.44 \!\pm\! 7.23$ \\  
        $0.1$ & $54.45 \!\pm\! 1.07$ & $46.52 \!\pm\! 0.70$ & $41.98 \!\pm\! 5.22$ & $34.02 \!\pm\! 0.25$ & $44.25 \!\pm\! 9.33$ & $40.16 \!\pm\! 7.51$ \\ 
        $0.5$ & $128.91 \!\pm\! 1.09$ & $121.00 \!\pm\! 0.91$ & $116.78 \!\pm\! 10.57$ & $111.49 \!\pm\! 5.02$ & $102.51 \!\pm\! 0.04$ & $102.31 \!\pm\! 0.11$ \\  
        $1.0$ & $-$ & $187.33 \!\pm\! 1.64$ & $173.11 \!\pm\! 1.67$ & $171.32 \!\pm\! 13.12$ & $156.25 \!\pm\! 4.40$ & $152.73 \!\pm\! 0.38$ \\ \hline
    \end{tabular}
    \caption{{\color{black}Test FD ($\downarrow$) of generated latent codes in \textit{Adult} $\rightarrow$ \textit{Young} translation.} }
    \label{table:ablation-fd-A2Y}
\end{table}

\begin{table}[!ht]
    \centering\scriptsize
    \begin{tabular}{l|l l l l l l}
    \hline
        \diagbox{$\varepsilon$}{$\tau$} & $10$ & $20$ & $50$ & \underline{$\mathbf{10^2}$} & $10^3$ & $10^6$ \\ \hline
        $0.01$ & $93.75 \!\pm\! 0.10$ & $93.28 \!\pm\! 0.07$ & $92.01 \!\pm\! 0.16$ & $90.85 \!\pm\! 0.12$ & $88.70 \!\pm\! 0.36$ & $87.62 \!\pm\! 0.34$ \\  
        $0.05$ & $93.40 \!\pm\! 0.17$ & $93.20 \!\pm\! 0.14$ & $91.91 \!\pm\! 0.25$ & $90.30 \!\pm\! 0.39$ & $88.05 \!\pm\! 0.20$ & $87.77 \!\pm\! 0.23$ \\  
        $0.1$ & $92.99 \!\pm\! 0.18$ & $92.78 \!\pm\! 0.15$ & $91.24 \!\pm\! 0.46$ & $89.57 \!\pm\! 0.06$ & $87.45 \!\pm\! 0.62$ & $87.21 \!\pm\! 0.11$ \\  
        $0.5$ & $89.96 \!\pm\! 0.40$ & $88.34 \!\pm\! 0.37$ & $87.24 \!\pm\! 0.26$ & $86.75 \!\pm\! 1.06$ & $84.09 \!\pm\! 0.31$ & $83.54 \!\pm\! 0.41$ \\  
        $1.0$ & $-$ & $84.94 \!\pm\! 0.54$ & $83.54 \!\pm\! 0.41$ & $81.80 \!\pm\! 0.14$ & $80.72 \!\pm\! 0.20$ & $79.87 \!\pm\! 0.47$ \\ \hline
    \end{tabular}
    \caption{{\color{black}Test accuracy ($\uparrow$) of keeping the class in \textit{Man} $\rightarrow$ \textit{Woman} translation.}}
    \label{table:ablation-acc-M2W}
\end{table}

\begin{table}[!ht]
    \centering\scriptsize
    \begin{tabular}{l|l l l l l l}
    \hline
        \diagbox{$\varepsilon$}{$\tau$} & $10$ & $20$ & $50$ & \underline{$\mathbf{10^2}$} & $10^3$ & $10^6$ \\ \hline
        $0.01$ & $61.38 \!\pm\! 0.29$ & $74.57 \!\pm\! 0.78$ & $85.78 \!\pm\! 0.87$ & $90.32 \!\pm\! 0.70$ & $93.79 \!\pm\! 0.26$ & $94.19 \!\pm\! 0.40$ \\  
        $0.05$ & $60.97 \!\pm\! 1.14$ & $74.99 \!\pm\! 0.32$ & $85.01 \!\pm\! 0.64$ & $90.23 \!\pm\! 0.50$ & $93.87 \!\pm\! 0.25$ & $94.44 \!\pm\! 0.07$ \\ 
        $0.1$ & $61.36 \!\pm\! 0.60$ & $73.52 \!\pm\! 0.24$ & $86.38 \!\pm\! 0.40$ & $90.38 \!\pm\! 0.39$ & $94.19 \!\pm\! 0.28$ & $94.73 \!\pm\! 0.36$ \\  
        $0.5$ & $65.74 \!\pm\! 0.87$ & $73.61 \!\pm\! 0.58$ & $86.45 \!\pm\! 0.48$ & $89.80 \!\pm\! 0.66$ & $95.14 \!\pm\! 0.58$ & $95.59 \!\pm\! 0.61$ \\  
        $1.0$ & $-$ & $78.18 \!\pm\! 0.18$ & $86.85 \!\pm\! 0.74$ & $91.50 \!\pm\! 0.74$ & $95.63 \!\pm\! 0.14$ & $95.93 \!\pm\! 0.17$ \\ \hline
    \end{tabular}
    \caption{{\color{black}Test accuracy ($\uparrow$) of mapping to the target in \textit{Man} $\rightarrow$ \textit{Woman} translation.}}
    \label{table:ablation-target-acc-M2W}
\end{table}

\begin{table}[!ht]
    \centering\scriptsize
    \begin{tabular}{l|l l l l l l}
    \hline
        \diagbox{$\varepsilon$}{$\tau$} & $10$ & $20$ & $50$ & \underline{$\mathbf{10^2}$} & $10^3$ & $10^6$ \\ \hline
        $0.01$ & $54.84 \!\pm\! 4.12$ & $37.87 \!\pm\! 0.55$ & $27.27 \!\pm\! 3.17$ & $22.69 \!\pm\! 3.48$ & $27.73 \!\pm\! 4.28$ & $17.32 \!\pm\! 1.29$ \\  
        $0.05$ & $51.61 \!\pm\! 2.58$ & $51.25 \!\pm\! 11.78$ & $34.28 \!\pm\! 1.04$ & $27.29 \!\pm\! 2.60$ & $30.63 \!\pm\! 1.67$ & $25.77 \!\pm\! 2.22$ \\  
        $0.1$ & $59.45 \!\pm\! 3.60$ & $51.17 \!\pm\! 3.28$ & $43.05 \!\pm\! 3.68$ & $38.02 \!\pm\! 1.64$ & $34.42 \!\pm\! 0.91$ & $37.54 \!\pm\! 1.79$ \\  
        $0.5$ & $132.45 \!\pm\! 6.07$ & $119.82 \!\pm\! 1.07$ & $107.16 \!\pm\! 1.48$ & $107.23 \!\pm\! 6.76$ & $103.24 \!\pm\! 2.32$ & $100.71 \!\pm\! 1.66$ \\  
        $1.0$ & $-$ & $182.26 \!\pm\! 2.20$ & $164.51 \!\pm\! 1.50$ & $156.41 \!\pm\! 2.04$ & $146.73 \!\pm\! 0.28$ & $146.05 \!\pm\! 0.10$ \\ \hline
    \end{tabular}
    \caption{{\color{black}Test FD ($\downarrow$) of generated latent codes in \textit{Man} $\rightarrow$ \textit{Woman} translation.}}
    \label{table:ablation-fd-M2W}
\end{table}

\begin{table}[!ht]
    \centering\scriptsize
    \begin{tabular}{l|l l l l l l}
    \hline
        \diagbox{$\varepsilon$}{$\tau$} & $10$ & $20$ & $50$ & $10^2$ & \underline{$\mathbf{10^3}$} & $10^6$ \\ \hline
        $0.01$ & $95.70 \!\pm\! 0.05$ & $95.31 \!\pm\! 0.10$ & $93.68 \!\pm\! 0.09$ & $92.46 \!\pm\! 0.13$ & $89.08 \!\pm\! 0.32$ & $88.81 \!\pm\! 0.05$ \\  
        $0.05$ & $95.55 \!\pm\! 0.08$ & $95.05 \!\pm\! 0.09$ & $93.67 \!\pm\! 0.10$ & $92.32 \!\pm\! 0.27$ & $89.66 \!\pm\! 0.17$ & $88.39 \!\pm\! 0.42$ \\  
        $0.1$ & $95.21 \!\pm\! 0.26$ & $94.83 \!\pm\! 0.19$ & $93.14 \!\pm\! 0.30$ & $91.96 \!\pm\! 0.26$ & $89.09 \!\pm\! 0.53$ & $87.38 \!\pm\! 0.40$ \\  
        $0.5$ & $93.02 \!\pm\! 0.22$ & $92.53 \!\pm\! 0.29$ & $91.13 \!\pm\! 0.01$ & $89.73 \!\pm\! 0.44$ & $85.87 \!\pm\! 1.20$ & $85.24 \!\pm\! 0.41$ \\ 
        $1.0$ & $-$ & $90.00 \!\pm\! 0.78$ & $88.83 \!\pm\! 0.73$ & $87.46 \!\pm\! 0.30$ & $83.32 \!\pm\! 1.03$ & $83.16 \!\pm\! 0.55$ \\ \hline
    \end{tabular}
    \caption{{\color{black}Test accuracy ($\uparrow$) of keeping the class in \textit{Woman} $\rightarrow$ \textit{Man} translation.}}
    \label{table:ablation-acc-W2M}
\end{table}

\begin{table}[!ht]
    \centering\scriptsize
    \begin{tabular}{l|l l l l l l}
    \hline
        \diagbox{$\varepsilon$}{$\tau$} & $10$ & $20$ & $50$ & $10^2$ & \underline{$\mathbf{10^3}$} & $10^6$ \\ \hline
        $0.01$ & $51.54 \!\pm\! 0.48$ & $64.67 \!\pm\! 1.09$ & $77.59 \!\pm\! 0.39$ & $82.52 \!\pm\! 0.49$ & $88.61 \!\pm\! 0.48$ & $88.99 \!\pm\! 0.18$ \\  
        $0.05$ & $49.26 \!\pm\! 0.72$ & $64.09 \!\pm\! 1.08$ & $76.88 \!\pm\! 0.17$ & $83.47 \!\pm\! 0.52$ & $88.59 \!\pm\! 0.55$ & $89.14 \!\pm\! 0.55$ \\  
        $0.1$ & $49.74 \!\pm\! 0.78$ & $63.99 \!\pm\! 0.82$ & $76.96 \!\pm\! 0.54$ & $82.45 \!\pm\! 0.22$ & $89.29 \!\pm\! 0.55$ & $89.14 \!\pm\! 0.38$ \\ 
        $0.5$ & $48.64 \!\pm\! 2.31$ & $60.88 \!\pm\! 0.55$ & $77.58 \!\pm\! 0.28$ & $82.10 \!\pm\! 1.17$ & $89.82 \!\pm\! 0.99$ & $90.63 \!\pm\! 1.06$ \\ 
        $1.0$ & $-$ & $62.62 \!\pm\! 0.24$ & $74.95 \!\pm\! 1.17$ & $82.42 \!\pm\! 0.73$ & $90.20 \!\pm\! 0.34$ & $90.75 \!\pm\! 0.37$ \\ \hline
    \end{tabular}
    \caption{{\color{black}Test accuracy ($\uparrow$) of mapping to the target in \textit{Woman} $\rightarrow$ \textit{Man} translation.}}
    \vspace*{-5mm}
    \label{table:ablation-target-acc-W2M}
\end{table}

\begin{table}[!ht]
    \centering\scriptsize
    \begin{tabular}{l|l l l l l l}
    \hline
        \diagbox{$\varepsilon$}{$\tau$} & $10$ & $20$ & $50$ & $10^2$ & \underline{$\mathbf{10^3}$} & $10^6$ \\ \hline
        $0.01$ & $47.00 \!\pm\! 1.74$ & $33.65 \!\pm\! 0.67$ & $23.85 \!\pm\! 1.15$ & $20.83 \!\pm\! 1.27$ & $16.48 \!\pm\! 0.09$ & $18.78 \!\pm\! 3.44$ \\  
        $0.05$ & $52.30 \!\pm\! 1.29$ & $39.79 \!\pm\! 1.25$ & $29.66 \!\pm\! 1.81$ & $27.23 \!\pm\! 3.45$ & $24.68 \!\pm\! 2.80$ & $23.43 \!\pm\! 1.91$ \\  
        $0.1$ & $58.40 \!\pm\! 0.62$ & $48.66 \!\pm\! 0.73$ & $37.55 \!\pm\! 0.35$ & $37.74 \!\pm\! 2.39$ & $31.63 \!\pm\! 0.42$ & $32.83 \!\pm\! 2.02$ \\  
        $0.5$ & $131.17 \!\pm\! 0.78$ & $120.63 \!\pm\! 0.76$ & $108.44 \!\pm\! 0.60$ & $104.85 \!\pm\! 1.17$ & $101.29 \!\pm\! 0.11$ & $102.26 \!\pm\! 1.58$ \\  
        $1.0$ & $-$ & $186.46 \!\pm\! 0.92$ & $169.64 \!\pm\! 0.63$ & $160.52 \!\pm\! 0.42$ & $152.78 \!\pm\! 0.13$ & $152.37 \!\pm\! 0.09$ \\ \hline
    \end{tabular}
    \caption{{\color{black}Test FD ($\downarrow$) of generated latent codes in \textit{Woman} $\rightarrow$ \textit{Man} translation.}}
    \vspace*{-4mm}
    \label{table:ablation-fd-W2M}
\end{table}

\begin{figure}[t!]
    \centering
    \includegraphics[width=\linewidth]{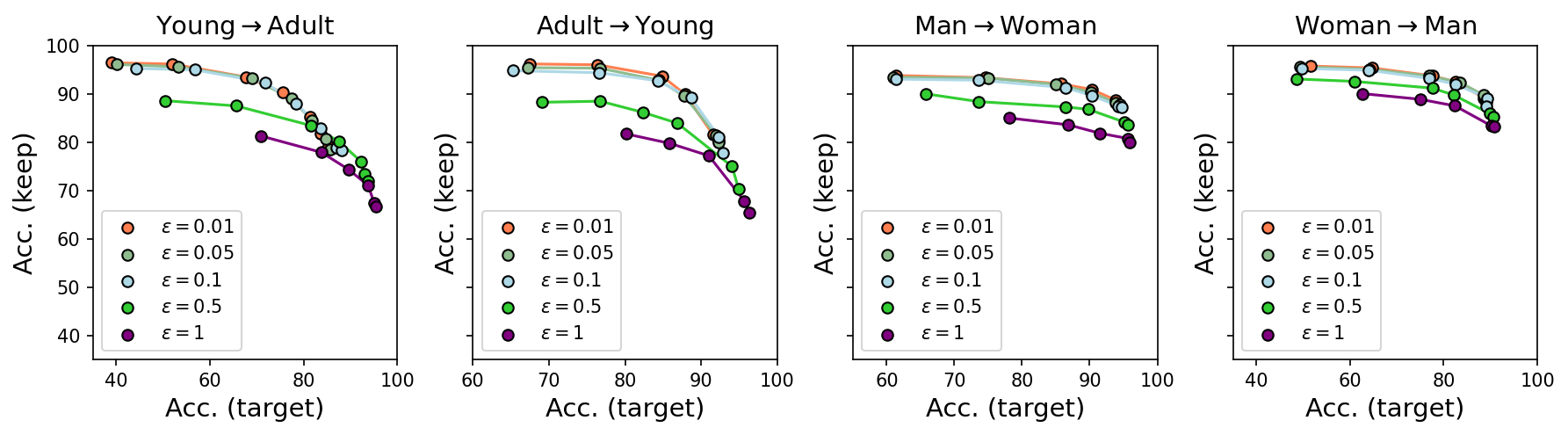}
    \caption{Visualization of ablation studies on parameters $\tau$, $\varepsilon$ in image translation experiment.}
    \label{fig:alae-abl-eps}
\end{figure}

\textit{Results} show that increase of $\varepsilon$ negatively influences both accuracy of keeping the attributes of the source images and FD of generated latent codes which is caused by an increased dispersity of $\gamma_{\theta}(y|x)$. Interestingly, accuracy of mapping to the correct target class does not have an evident dynamics w.r.t. $\varepsilon$. At the same time, when $\tau$ \textit{increases}, the learned plans provide \textit{worse accuracy} for keeping the input latents' class but \textit{better FD} of generated latent codes {\color{black}and accuracy of mapping to the target class}. It is an expected behavior since for bigger $\tau$, the constraints on the marginals of the learned plans become more strict. That is, we enforce the marginals of the learned plans to be closer to source and target measures which allows learning more accurate mappings to target measure but does not allow keeping the source classes in the case of imbalance issues. Interestingly, in \textit{Adult}$\rightarrow$\textit{Young} translation, FD of learned latents and accuracy of mapping to the target do not change much for $\tau\geq10^2$ while accuracy of keeping the attributes exhibits a significant drop between $\tau=10^2$ and $\tau=10^3$. Thus, we can treat $\tau=10^2$ as optimal since it provides the best trade-off between the quality of learned translations and their ability to keep the features of input latents. In the case of \textit{Young}$\rightarrow$\textit{Adult} translation, the values of accuracy and FD exhibit significant differences for considered $\tau$. Thus, we may consider a more detailed scale and choose $\tau=2.5\cdot 10^2$ as the optimal one.

It is important to note that our method offers a flexible way to select a domain translation configuration that allows for better preserving the properties of the original objects or generating a distribution closer to the target one. The final optimal configuration selection remains at the discretion of the user. The highlighted values in the Tables are used for comparison with other approaches in \wasyparagraph\ref{app-alae-comp}.

\textit{Remark.} FD should be treated as a \textit{relative} measure of similarity between learned and target measures. The results obtained by balanced solver \citep[LightSB]{korotin2024light} (equivalent to ours for big $\tau$) are considered as a gold standard.

\textbf{Number of Gaussian components in potentials.} For completeness, we perform an ablation study of our U-LightOT solver with different number on Gaussian components ($K$, $L$) in potentials $v_{\theta}$ and $u_{\omega}$, respectively. We run the solver in \textit{Young}$\rightarrow$\textit{Adult} translation with 5K steps, $\varepsilon=0.05$ and set $\tau=250$. The quantitative results (accuracy of keeping the class, accuracy of mapping to the target, FD of generated latents vs target latents) are presented in the Tables below.

\begin{table}[!ht]
    \centering\scriptsize
    \hspace*{-1cm}\begin{tabular}{l|l l l l l l l l}
    \hline
         \diagbox{$K$}{$L$} & $1$ & $2$ & $4$ & $8$ & $16$ & $32$ & $64$ & $128$ \\\hline  
        $1$ & $84.17 \!\pm\! 0.45$ & $84.73 \!\pm\! 1.27$ & $84.25 \!\pm\! 1.08$ & $85.08 \!\pm\! 0.78$ & $84.84 \!\pm\! 0.92$ & $85.82 \!\pm\! 0.70$ & $85.52 \!\pm\! 0.74$ & $83.70 \!\pm\! 0.48$ \\  
        $2$ & $84.99 \!\pm\! 0.59$ & $84.54 \!\pm\! 0.43$ & $84.56 \!\pm\! 0.39$ & $84.18 \!\pm\! 0.63$ & $83.87 \!\pm\! 2.03$ & $83.56 \!\pm\! 1.37$ & $84.66 \!\pm\! 0.44$ & $86.74 \!\pm\! 0.54$ \\ 
        $4$ & $84.50 \!\pm\! 0.86$ & $84.81 \!\pm\! 0.46$ & $84.60 \!\pm\! 0.37$ & $84.65 \!\pm\! 0.77$ & $83.75 \!\pm\! 1.01$ & $84.00 \!\pm\! 1.52$ & $83.51 \!\pm\! 0.90$ & $83.99 \!\pm\! 0.63$ \\ 
        $8$ & $83.88 \!\pm\! 1.01$ & $84.08 \!\pm\! 0.81$ & $83.71 \!\pm\! 0.60$ & $82.76 \!\pm\! 1.98$ & $84.69 \!\pm\! 0.38$ & $84.30 \!\pm\! 0.39$ & $85.05 \!\pm\! 1.59$ & $83.03 \!\pm\! 1.02$ \\ 
       $16$ & $84.65 \!\pm\! 0.33$ & $85.00 \!\pm\! 1.62$ & $84.28 \!\pm\! 0.78$ & $84.76 \!\pm\! 1.23$ & $83.66 \!\pm\! 0.33$ & $85.14 \!\pm\! 0.49$ & $83.77 \!\pm\! 1.16$ & $84.34 \!\pm\! 0.15$ \\  
        $32$ & $83.48 \!\pm\! 0.40$ & $86.02 \!\pm\! 1.22$ & $84.79 \!\pm\! 0.60$ & $84.44 \!\pm\! 0.42$ & $85.24 \!\pm\! 0.84$ & $84.06 \!\pm\! 1.00$ & $84.73 \!\pm\! 0.51$ & $84.54 \!\pm\! 0.27$ \\ 
        $64$ & $85.24 \!\pm\! 0.27$ & $85.23 \!\pm\! 0.59$ & $84.12 \!\pm\! 0.12$ & $84.64 \!\pm\! 0.64$ & $84.01 \!\pm\! 1.95$ & $84.10 \!\pm\! 1.26$ & $84.77 \!\pm\! 1.25$ & $83.76 \!\pm\! 1.44$ \\ 
        $128$ & $85.21 \!\pm\! 0.29$ & $85.16 \!\pm\! 0.07$ & $84.63 \!\pm\! 1.15$ & $84.64 \!\pm\! 0.49$ & $84.12 \!\pm\! 0.57$ & $84.11 \!\pm\! 0.76$ & $84.22 \!\pm\! 0.68$ & $84.64 \!\pm\! 0.93$ \\ \hline
    \end{tabular}
    \vspace{1mm}
    \caption{\centering Test accuracy ($\uparrow$) of keeping the attributes in \textit{Young}$\rightarrow$\textit{Adult} translation for our U-LightOT solver with different number of Gaussian components in potentials.}
    \vspace*{-5mm}
\end{table}

\begin{table}[!ht]
    \centering\scriptsize
    \hspace*{-1cm}\begin{tabular}{l|l l l l l l l l}
    \hline
        \diagbox{$K$}{$L$} & $1$ & $2$ & $4$ & $8$ & $16$ & $32$ & $64$ & $128$ \\ \hline
        $1$ & $83.35 \!\pm\! 0.85$ & $82.03 \!\pm\! 0.11$ & $82.69 \!\pm\! 0.82$ & $83.34 \!\pm\! 0.08$ & $82.95 \!\pm\! 0.46$ & $82.80 \!\pm\! 0.58$ & $82.49 \!\pm\! 0.64$ & $82.54 \!\pm\! 0.36$ \\ 
        $2$ & $83.26 \!\pm\! 0.98$ & $81.80 \!\pm\! 0.81$ & $81.82 \!\pm\! 0.66$ & $82.35 \!\pm\! 0.92$ & $82.77 \!\pm\! 0.45$ & $82.26 \!\pm\! 0.98$ & $82.36 \!\pm\! 0.37$ & $82.84 \!\pm\! 0.38$ \\ 
        $4$ & $82.76 \!\pm\! 0.17$ & $81.47 \!\pm\! 0.29$ & $82.32 \!\pm\! 0.13$ & $82.60 \!\pm\! 0.66$ & $82.70 \!\pm\! 0.64$ & $82.48 \!\pm\! 0.37$ & $83.23 \!\pm\! 1.24$ & $82.44 \!\pm\! 0.68$ \\
        $8$ & $82.35 \!\pm\! 0.39$ & $81.63 \!\pm\! 0.38$ & $82.23 \!\pm\! 0.52$ & $82.19 \!\pm\! 0.72$ & $82.64 \!\pm\! 0.58$ & $82.55 \!\pm\! 0.18$ & $82.85 \!\pm\! 0.28$ & $82.87 \!\pm\! 0.09$ \\
        $16$ & $82.43 \!\pm\! 0.61$ & $82.12 \!\pm\! 1.10$ & $81.30 \!\pm\! 0.79$ & $82.11 \!\pm\! 0.93$ & $82.56 \!\pm\! 0.26$ & $81.88 \!\pm\! 0.85$ & $83.34 \!\pm\! 0.57$ & $82.90 \!\pm\! 0.39$ \\ 
        $32$ & $81.57 \!\pm\! 0.47$ & $81.39 \!\pm\! 0.62$ & $81.76 \!\pm\! 0.27$ & $81.28 \!\pm\! 0.57$ & $82.47 \!\pm\! 1.30$ & $82.06 \!\pm\! 0.41$ & $82.35 \!\pm\! 0.55$ & $82.56 \!\pm\! 0.89$ \\ 
        $64$ & $82.01 \!\pm\! 0.79$ & $82.33 \!\pm\! 0.83$ & $81.97 \!\pm\! 0.61$ & $82.01 \!\pm\! 0.26$ & $82.44 \!\pm\! 0.74$ & $81.61 \!\pm\! 0.75$ & $82.06 \!\pm\! 0.51$ & $83.44 \!\pm\! 0.38$ \\ 
        $128$ & $82.39 \!\pm\! 0.92$ & $82.18 \!\pm\! 0.56$ & $81.74 \!\pm\! 0.19$ & $82.22 \!\pm\! 0.74$ & $81.58 \!\pm\! 0.23$ & $82.66 \!\pm\! 0.33$ & $83.07 \!\pm\! 0.30$ & $82.85 \!\pm\! 0.45$ \\ \hline
    \end{tabular}
    \vspace{1mm}
    \caption{\centering Test accuracy ($\uparrow$) of mapping to the target in \textit{Young}$\rightarrow$\textit{Adult} translation for our U-LightOT solver with different number of Gaussian components in potentials.}
    \vspace*{-5mm}
\end{table}

\begin{table}[!ht]
    \centering\scriptsize
    \hspace*{-1cm}\begin{tabular}{l|l l l l l l l l}
    \hline
        \diagbox{$K$}{$L$} & $1$ & $2$ & $4$ & $8$ & $16$ & $32$ & $64$ & $128$ \\ \hline
        $1$ & $17.41 \!\pm\! 0.28$ & $17.42 \!\pm\! 0.36$ & $18.10 \!\pm\! 1.41$ & $17.83 \!\pm\! 0.41$ & $17.97 \!\pm\! 0.69$ & $18.42 \!\pm\! 0.86$ & $17.69 \!\pm\! 0.37$ & $18.07 \!\pm\! 0.61$ \\ 
        $2$ & $19.07 \!\pm\! 0.81$ & $17.47 \!\pm\! 0.83$ & $16.87 \!\pm\! 0.25$ & $16.93 \!\pm\! 0.56$ & $21.05 \!\pm\! 1.86$ & $17.38 \!\pm\! 0.69$ & $17.22 \!\pm\! 0.76$ & $17.83 \!\pm\! 0.70$ \\ 
        $4$ & $17.07 \!\pm\! 0.53$ & $16.66 \!\pm\! 0.14$ & $17.02 \!\pm\! 0.84$ & $18.01 \!\pm\! 1.79$ & $16.71 \!\pm\! 0.06$ & $17.07 \!\pm\! 0.95$ & $16.59 \!\pm\! 0.41$ & $16.42 \!\pm\! 0.10$ \\ 
        $8$ & $16.55 \!\pm\! 0.22$ & $23.56 \!\pm\! 7.59$ & $16.37 \!\pm\! 0.23$ & $16.81 \!\pm\! 0.95$ & $17.21 \!\pm\! 1.01$ & $18.37 \!\pm\! 1.47$ & $17.13 \!\pm\! 0.77$ & $16.96 \!\pm\! 0.86$ \\
        $16$ & $17.58 \!\pm\! 1.58$ & $18.06 \!\pm\! 1.61$ & $18.25 \!\pm\! 2.46$ & $17.69 \!\pm\! 1.87$ & $18.00 \!\pm\! 0.34$ & $17.19 \!\pm\! 0.71$ & $17.62 \!\pm\! 0.93$ & $18.97 \!\pm\! 2.01$ \\ 
        $32$ & $17.85 \!\pm\! 0.10$ & $17.15 \!\pm\! 0.71$ & $16.50 \!\pm\! 0.10$ & $21.91 \!\pm\! 4.05$ & $20.04 \!\pm\! 2.78$ & $19.31 \!\pm\! 3.22$ & $18.79 \!\pm\! 2.03$ & $22.56 \!\pm\! 1.66$ \\
        $64$ & $21.39 \!\pm\! 1.32$ & $21.91 \!\pm\! 4.23$ & $21.00 \!\pm\! 4.47$ & $18.02 \!\pm\! 1.24$ & $21.63 \!\pm\! 3.14$ & $19.56 \!\pm\! 0.99$ & $17.53 \!\pm\! 0.47$ & $20.26 \!\pm\! 4.46$ \\ 
        $128$ & $22.09 \!\pm\! 4.91$ & $35.68 \!\pm\! 21.07$ & $33.80 \!\pm\! 21.67$ & $20.46 \!\pm\! 2.39$ & $24.80 \!\pm\! 5.33$ & $22.73 \!\pm\! 3.03$ & $22.09 \!\pm\! 1.25$ & $22.72 \!\pm\! 7.41$ \\ \hline
    \end{tabular}
    \vspace{1mm}
    \caption{\centering Test FD ($\downarrow$) of generated latent codes in \textit{Young}$\rightarrow$\textit{Adult} translation for our U-LightOT solver with different number of Gaussian components in potentials.}
\end{table}\vspace{-2mm}

The results show that in the considered task, our solver provides good performance even for small number of Gaussian components. This can be explained by the smoothness of the latent representations of data in ALAE autoencoder.

\section{Additional Experimental Results}
\subsection{Quantitative comparison with other methods in Image-to-Image translation experiment}\label{app-alae-comp}
In this section, we provide additional results of quantitative comparison of our U-LightOT solver and other unbalanced and balanced OT/EOT solvers in image translation experiment. Tables \ref{table-alae-accuracy-keep}, \ref{table-alae-accuracy-target}, \ref{table-alae-fd}, provide values used for plotting Fig. \ref{fig:alae-comp} in the main text. The unbalancedness parameters used for our U-LightOT solver and \citep[UOT-FM]{eyring2023unbalancedness} are specified in th Tables below. For obtaining the result of \citep[UOT-SD]{choi2024generative}, we use their unbalancedness parameter $\tau=0.002$. For other details on methods' parameters used to obtain the results below, see Appendix \ref{sec:exp-details}. Note that we do not include FID metric for assessing the quality of generated images since we found that it is not a representative metric for assessing the performance of models performing the translation of ALAE latent codes.

\begin{table}[h!]
    \centering
    \scriptsize
    \begin{tabular}{ c|c c c c c c c c} 
    \hline
    \textbf{Experiment} &  \makecell{OT-FM\\\cite{eyring2023unbalancedness}} & \makecell{\citep[LightSB]{korotin2024light} \textit{or}\\\citep[LightSBM]{gushchin2024light}} & \citep[UOT-FM]{eyring2023unbalancedness} &  \makecell{UOT-SD \\\citep{choi2024generative}} & \makecell{UOT-GAN\\\cite{yang2018scalable}} &\makecell{U-LightOT \\(\textbf{ours})}  \\
    \hline
     \textit{Young}$\rightarrow$\textit{Adult}  &  $67.71$  &  $78.16 $  & $84.46$ ($reg\_m=0.005$) & $45.71$ & $73.85$ &  $84.49$ ($\tau=250$)  \\  
     \textit{Adult}$\rightarrow$\textit{Young}  &   $53.79$   &  $80.25$  & $76.05$ ($reg\_m=0.005$) & $49.30$  &$74.74$ &$89.48$ ($\tau=10^2$)    \\ \hline
     \textit{Man}$\rightarrow$\textit{Woman}  &   $76.05$   &  $87.82$  & $84.42$ ($reg\_m=0.005$) & $75.50$  & $84.04$ &$90.30$ ($\tau=10^2$)  \\ 
     \textit{Woman}$\rightarrow$\textit{Man}  &  $72.40$   &  $88.10$  & $86.10$ ($reg\_m=0.05$) & $72.03$ & $84.56$ &$89.66$ ($\tau=10^3$)   \\  \hline
    \end{tabular}
     \vspace{1mm}
     \caption{
     \centering {\color{black}Comparison of accuracies of keeping the attributes of the source images. 
     }}
     \label{table-alae-accuracy-keep}
     \vspace{-3mm}
\end{table}

\begin{table}[h!]
    \centering
    \scriptsize
    \begin{tabular}{ c|c c c c c c c c} 
    \hline
    \textbf{Experiment} &  \makecell{OT-FM\\\cite{eyring2023unbalancedness}} & \makecell{\citep[LightSB]{korotin2024light} \textit{or}\\\citep[LightSBM]{gushchin2024light}} & \citep[UOT-FM]{eyring2023unbalancedness} &  \makecell{UOT-SD \\\citep{choi2024generative}} & \makecell{UOT-GAN\\\cite{yang2018scalable}} &\makecell{U-LightOT \\(\textbf{ours})}  \\
    \hline
     \textit{Young}$\rightarrow$\textit{Adult}  &   $93.28$  & $85.97$  & $74.10$ ($reg\_m=0.005$) & $87.33$ & $84.25$ & $81.78$ ($\tau=250$)  \\  
     \textit{Adult}$\rightarrow$\textit{Young}  &  $96.12$    &  $93.10$  & $89.30$ ($reg\_m=0.005$)&  $97.39$ & $95.88$ &$87.79$ ($\tau=10^2$)   \\ \hline
     \textit{Man}$\rightarrow$\textit{Woman}  &   $93.33$   &  $94.37$  &$92.35$ ($reg\_m=0.005$)  & $98.16$  & $97.38$ &$90.23$ ($\tau=10^2$)  \\ 
     \textit{Woman}$\rightarrow$\textit{Man}  & $94.27$    &  $89.66$  &  $88.53$ ($reg\_m=0.05$) & $94.96$ & $92.91$ &$88.59$ ($\tau=10^3$)   \\  \hline
    \end{tabular}
     \vspace{1mm}
     \caption{\centering{\color{black}
      Comparison of accuracies of mapping to the target.
      }}
     \label{table-alae-accuracy-target}
     \vspace{-3mm}
\end{table}

\begin{table}[h!]
    \centering
    \scriptsize
    \begin{tabular}{ c|c c c c c c c c} 
    \hline
    \textbf{Experiment} &  \makecell{OT-FM\\\cite{eyring2023unbalancedness}} & \makecell{\citep[LightSB]{korotin2024light} \textit{or}\\\citep[LightSBM]{gushchin2024light}} & \citep[UOT-FM]{eyring2023unbalancedness} &  \makecell{UOT-SD \\\citep{choi2024generative}} & \makecell{UOT-GAN\\\cite{yang2018scalable}} &\makecell{U-LightOT \\(\textbf{ours})}  \\
    \hline
     \textit{Young}$\rightarrow$\textit{Adult}  &  $11.93$   & $15.50$  & $11.57$ ($reg\_m=0.005$) & $13.28$ & $11.23$ & $17.15$ ($\tau=250$) \\  
     \textit{Adult}$\rightarrow$\textit{Young}  &   $14.10$   & $21.41$  & $17.00$ ($reg\_m=0.005$) &  $18.44$ &$14.94$ & $30.79$  ($\tau=10^2$)  \\ \hline
     \textit{Man}$\rightarrow$\textit{Woman}  &  $16.20$    &  $20.91$  &$10.31$  ($reg\_m=0.005$)&  $16.13$ & $22.41$ & $27.29$ ($\tau=10^2$)  \\ 
     \textit{Woman}$\rightarrow$\textit{Man}  & $11.42$ &  $30.87$  & $6.99$ ($reg\_m=0.05$) & $13.23$ & $10.55$ & $24.68$ ($\tau=10^3$)   \\  \hline
    \end{tabular}
     \vspace{1mm}
     \caption{\centering{\color{black}
      Comparison of FD between the generated and learned latents. 
      }}
     \label{table-alae-fd}
     \vspace{-3mm}
\end{table}

\subsection{Outlier robustness property of U-LightOT solver}

To show the outlier robustness property of our solver, we conduct the experiment on Gaussian Mixtures with added outliers and visualize the results in Fig. \ref{fig-outlier-robustness}. The setup of the experiment, in general, follows the \textit{Gaussian mixtures} experiment setup described in section 5.2 of our paper. The difference consists in outliers (small gaussians) added to the input and output measures. 

\begin{figure*}[h!]
\vspace{-3mm}
\centering
\begin{subfigure}[b]{0.19\linewidth}
    \centering
    \includegraphics[width=0.995\linewidth]{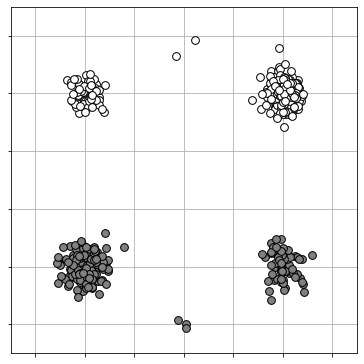}
    \caption{  Measures $p,q$}
\end{subfigure}
\begin{subfigure}[b]{0.19\linewidth}
    \centering
    \includegraphics[width=0.995\linewidth]{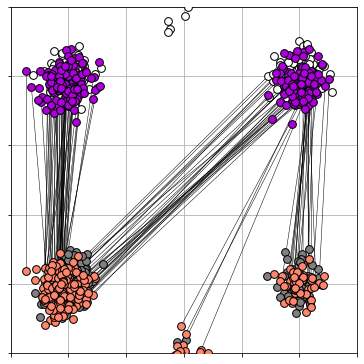}
    \caption{\centering LightSB}  
\end{subfigure}
\begin{subfigure}[b]{0.19\linewidth}
    \centering
    \includegraphics[width=0.995\linewidth]{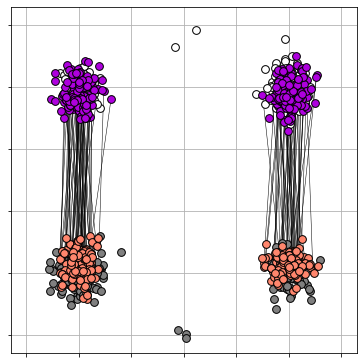}
    \caption{\centering U-LightOT (ours)} 
\end{subfigure}
\caption{\centering Conditional plans $\gamma_{\theta,\omega}(y|x)$ learned by our solver ($\tau=1$) and \linebreak LightSB in \textit{Gaussian Mixtures with otliers} experiment. 
}
\label{fig-outlier-robustness}
\end{figure*}

\textit{The results} show that our U-LightOT solver successfully eliminates the outliers and manages to simultaneously handle the class imbalance issue. At the same time, the balanced LightSB \citep{korotin2024light} solver fails to deal with either of these problems.

\section{Limitations and Broader Impact}
\label{sec-limitations}
\textbf{Limitations.} One limitation of our solver is the usage of the Gaussian Mixture parametrization which might restrict the scalability of our solver. This points to the necessity for developing ways to optimize objective \eqref{unbalanced-eot-dual} with more general parametrization, e.g., neural networks. 

\textbf{Broader impact.} Our work aims to advance the field of Machine Learning. There are many potential societal consequences of our work, none of which we feel must be specifically highlighted here.

\end{document}